\numberwithin{equation}{section}
\numberwithin{figure}{section}
\numberwithin{table}{section}
\theoremstyle{plain}
\newtheorem{thm}{\protect\theoremname}
\newtheorem{dfn}{Definition}
\newtheorem{lem}{Lemma}
\newtheorem{cor}{Corollary}
\newtheorem{asm}{Assumption}
\theoremstyle{definition}
\newtheorem{example}[thm]{\protect\examplename}
\newtheorem{rem}{Remark}
\DeclareMathOperator*{\argmax}{arg\,max}
\DeclareMathOperator*{\argmin}{arg\,min}
\providecommand{\examplename}{Example}
\providecommand{\theoremname}{Theorem}
\begin{document}
\title{Convergence of Statistical Estimators via Mutual Information Bounds}
\author{EL Mahdi Khribch \& Pierre Alquier
\\
ESSEC Business School}
\maketitle

\begin{abstract}
Recent advances in statistical learning theory have revealed profound connections between mutual information (MI) bounds, PAC-Bayesian theory, and Bayesian nonparametrics. 

This work introduces a novel mutual information bound for statistical models. The derived bound has wide-ranging applications in statistical inference. It yields improved contraction rates for fractional posteriors in Bayesian nonparametrics. It can also be used to study a wide range of estimation methods, such as variational inference or Maximum Likelihood Estimation (MLE). By bridging these diverse areas, this work advances our understanding of the fundamental limits of statistical inference and the role of information in learning from data. We hope that these results will not only clarify connections between statistical inference and information theory but also help to develop a new toolbox to study a wide range of estimators.
\end{abstract}

\tableofcontents

\section{Introduction and Motivation} 

\subsection{Mutual Information Bounds in Machine Learning}
Mutual information (MI) bounds have emerged as a powerful tool for analyzing generalization and learning in machine learning applications. This approach provides an information-theoretic perspective on the generalization capabilities of learning algorithms.
\vspace{0.2cm}

The use of information-theoretic approaches in machine learning goes back to Rissanen's Minimum Description Length (MDL) principle~\citep{ris1978}, see~\cite{grunwald2007minimum} for a more recent view on the topic. A related approach is PAC-Bayes bounds~\citep{mcallester1999some,mcallester1999pac}. These bounds were initially developed as Bayesian-inspired generalization bounds in machine learning, but the connection to information theory was highlighted in subsequent works by~\cite{zhang2006information} and~\cite{Catoni2007}. The Kullback-Leibler divergence terms in the PAC-Bayes bound can be interpreted in terms of length of the description of a model, and more generally as a complexity measure. In particular, \cite{Catoni2007} proved that an optimal prior choice in the PAC-Bayes bound makes the Kullback-Leibler divergence term equal to the mutual information between the sample and the parameter. This fact, rediscovered independently by~\cite{russo2016controlling} under the name ``mutual information bound'', has received considerable attention in the past years~\citep{xu2017information,negrea2019information,lugosi2022generalization}.
\vspace{0.2cm}

In parallel, it became clear that PAC-Bayes bounds can be used to analyze various kinds of machine learning algorithms, including empirical risk minimization~\citep{catoni2017dimension} as well as various kinds of robust estimators~\citep{catoni2012challenging}, generalized posteriors and their variational approximations~\citep{alquier2016properties}. We refer the reader to~\citep{alquier2024user} for a recent overview on PAC-Bayes, and to~\citep{hellstrom2023generalization} for a global presentation of all information bounds.
\vspace{0.2cm}

PAC-Bayes bounds were also recently used to analyze estimators in a more statistical context where the objective is inference on parameters:~\cite{bhattacharya2016bayesian} analyzed Bayesian fractional posteriors with related tools, this was extended to variational 
approximations by~\cite{alquier2019concentration} and to (non-tempered) posteriors~\citep{yang2020alpha,zhang2020convergence}, see also~\cite{pmlr-v96-cherief-abdellatif19a,ohn2024adaptive} for results on model selection. Since mutual information bounds in machine learning are optimized PAC-Bayes bounds, a natural question arises: Can we similarly optimize these statistical bounds to achieve tighter results? In this paper, we prove a mutual information version of the PAC-Bayes bounds of~\cite{bhattacharya2016bayesian} and~\cite{alquier2019concentration}. We show that it can be used to prove convergence rates for tempered posteriors, variational approximations, and the maximum likelihood estimator (MLE). When applied to Bayesian-style estimators, this result leads to improved rates when compared to~\cite{bhattacharya2016bayesian,alquier2019concentration,yang2020alpha,zhang2020convergence}.
Thus, our bound extends beyond the existing literature on MI bounds, providing new tools to study a wide range of statistical methods.

\subsection{Bayesian Nonparametrics and Posterior Contraction}
The study of frequentist properties of Bayesian procedures, particularly in nonparametric settings, has been a topic of significant interest in recent years. Two major approaches to this problem have emerged: the analysis of posterior concentration rates in the Bayesian nonparametric literature and the PAC-Bayesian framework from statistical learning theory. While these approaches have largely developed independently, recent work suggests that bridging the gap between them can lead to powerful new insights and improved theoretical guarantees.
\vspace{0.2cm}

The study of posterior concentration rates has a rich history in Bayesian nonparametrics. Seminal work by~\citet{ghosal2000convergence} and~\citet{shen2001rates} established general conditions for posterior consistency and convergence rates. These results were later extended to more general settings by~\citet{ghosal2007convergence} and~\citet{van2008rates}. A comprehensive overview of these developments can be found in~\citet{rousseau2016frequentist,ghosal2017fundamentals} and~\citet{castillo2024bayesian}.
\vspace{0.2cm}

Concurrently, the PAC-Bayesian approach, introduced by~\citet{mcallester1999pac}, has proven to be a versatile tool for deriving generalization bounds in statistical learning. This approach has recently gained traction within the Bayesian community, offering a new perspective on the analysis of Bayesian methods.
\vspace{0.2cm}

Some progress has been made in connecting PAC-Bayesian theory and Bayesian nonparametrics. Notably,~\citet{grunwald2018fast} provided a unified analysis of excess risk bounds for general loss functions, bridging the gap between empirical risk minimization and generalized Bayesian approaches. Additionally,~\citet{bhattacharya2016bayesian,alquier2019concentration,banerjee2021information} used PAC-Bayesian techniques to prove the contraction of tempered posteriors under essentially only a prior mass condition.

\subsection{Our Contributions}
We derive a new MI bound tailored for statistical estimation. This bound provides guarantees in terms of statistical inference, rather than on the generalization risk as in~\citet{russo2016controlling,xu2017information}. From a technical perspective, we obtain this bound by applying the localization technique of~\citet{Catoni2007} to PAC-Bayes bounds for density estimation. Formally, we use bounds proven
by~\citet{bhattacharya2016bayesian,alquier2019concentration}, but similar bounds can be traced back to~\cite{zhang2006ɛ}.
\vspace{0.2cm}

The paper is organized as follows. In Section~\ref{section:prelim}, we introduce our notation and state our MI bound in its most general form: Theorem~\ref{thm:main-thm}. In Section~\ref{section:main}, we specify this bound in various settings. Under specific assumptions on the statistical model, we derive convergence rates for tempered posteriors and for variational inference. Our rates are tighter than those in~\citet{bhattacharya2016bayesian,alquier2019concentration,yang2020alpha,zhang2020convergence}, as we eliminate suboptimal logarithmic terms. Finally, in Section~\ref{sec:examples}, we work through concrete examples: we verify our assumptions in classical models and illustrate an application of Theorem~\ref{thm:main-thm} to prove the convergence of the MLE.
\vspace{0.2cm}

All proofs are provided in Section~\ref{sec:proof}, where we also clarify the connection between our MI bound and PAC-Bayes bounds.
\section{Notations and Preliminaries}
\label{section:prelim}

Consider a collection of $n$ i.i.d. random variables $(X_1, \dots, X_n)=: \mathcal{S}$ in a measured sample space $(\mathbf{X}, \mathcal{X}, P)$. Let $\{P_\theta, \theta \in \Theta\}$ be a statistical model, that is, a collection of probability distributions on $(\mathbf{X}, \mathcal{X})$ indexed by a parameter set $\Theta$. Let $\mathcal{M}_1^+$ be the set of probability measures on $\Theta$ equipped with a suitable $\sigma$-algebra $\mathcal{B}$.
\vspace{0.2cm}

In this work, we assume that the model is well-specified, i.e., there exists a true parameter $\theta_0\in\Theta$ such that $P_{\theta_0} = P$.
\vspace{0.2cm}

We assume that the model is dominated: all the $P_\theta$'s have a density $p_\theta$ with respect to a specified measure $\mu$ (for example, the Lebesgue measure or the counting measure).
Let $\mathcal{L}_n(\theta)$ denote the likelihood function, and $r_n(\theta, \theta_0)$ the negative log-likelihood ratio. That is:
\begin{align*}
    \mathcal{L}_n(\theta) &= \prod_{i=1}^n p_\theta(X_i) \\
    r_n(\theta, \theta_0) &= \sum_{i=1}^n \log\frac{p_{\theta_0}(X_i)}{p_\theta(X_i)},
\end{align*}
with the convention $\log (p_{\theta_0}(X_i)/0) = +\infty$ if $p_{\theta_0}(X_i)>0$ and $\log 0/0 = 0$.
\vspace{0.2cm}

We recall the definitions of the $\alpha$-Rényi divergence $\mathcal{D}_{\alpha}$, for $\alpha\in(0,1)$, and of the Kullback-Leibler (KL) divergence, $KL$. 
\vspace{0.2cm}

Given two probability distributions $P$ and $Q$ defined on the same measurable space, and $\mu$ a reference measure such that both $P$ and $Q$ are absolutely continuous with respect to $\mu$,
\begin{align*}
    \mathcal{D}_{\alpha}(P\|Q)&=\frac{1}{\alpha-1}\log\left(\displaystyle\int \left(\frac{{\rm d} P}{{\rm d}\mu}(x)\right)^{\alpha}\left(\frac{{\rm d} Q}{{\rm d}\mu}(x)\right)^{1-\alpha}\mu({\rm d} x) \right) ,\\
    KL(P\|Q)&= \displaystyle\int  \log\left(\frac{{\rm d} P}{{\rm d}Q}(x)\right) P({\rm d} x)
\end{align*}
with the convention that $KL(P\|Q)=+\infty$ if $P$ is not absolutely continuous with respect to $Q$. It is easy to verify that $\mathcal{D}_{\alpha}(P\|Q)$ does not depend on the choice of the reference measure $\mu$.
\vspace{0.2cm}

\begin{dfn}
    Let $U$ and $V$ be two random variables, $P_{U,V}$ the joint distribution of $U$ and $V$, and $P_U$ the marginal distribution of $U$ and $P_V$ the marginal distribution of $V$. The mutual information (MI) between $U$ and $V$ is defined as:
    \begin{align*}
        \mathcal{I}(U,V):=KL(P_{U,V}\|P_U\otimes P_V).
    \end{align*}
\end{dfn}
We now present our main result, which establishes a mutual information bound for any randomized procedure, that is, any probability distribution $\hat{\rho}$ learned from the data, where $\hat{\rho}=\hat{\rho}(\mathcal{S})$. Examples of such procedures include the posterior, tempered posteriors, and variational approximations. 
\vspace{0.2cm}

By analogy with the Bayesian framework, we will sometimes call $\hat{\rho}$ a (generalized) posterior. 

To clarify the notation in this setting, consider that $V=\mathcal{S}=(X_1,\dots,X_n) \sim P^{\otimes n}$ and $U=\theta$ is sampled conditionally on the data: $U|V \sim \hat{\rho}$. This describes a joint distribution on the data and the parameter: $P_{U,V}$, and thus a mutual information $\mathcal{I}(U,V)=\mathcal{I}(\theta,\mathcal{S})$.
\begin{thm} \label{thm:main-thm}
Given a generalized posterior $\hat{\rho}$, we have, for any $\alpha\in(0,1)$:
\begin{align}
   \mathbb{E}_{\mathcal{S}}\left[\mathbb{E}_{\theta\sim\hat{\rho}}[\mathcal{D}_{\alpha}(P_{\theta}\|P_{\theta_0})]-\frac{\alpha}{n(1-\alpha)}\mathbb{E}_{\theta\sim\hat{\rho}}[r_n(\theta,\theta_0)]\right] \le \frac{\mathcal{I}(\theta,\mathcal{S})}{n(1-\alpha)}.
\end{align}
\end{thm}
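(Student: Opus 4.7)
The plan is to combine the Catoni-style ``optimal prior'' trick of \citet{Catoni2007} with the Rényi-divergence MGF identity underlying the PAC-Bayes bounds of \citet{bhattacharya2016bayesian,alquier2019concentration}. The guiding idea is that picking the reference measure in PAC-Bayes to be the marginal of the randomized estimator turns the usual KL penalty into the mutual information, via the defining identity $\mathcal{I}(\theta,\mathcal{S}) = \mathbb{E}_{\mathcal{S}}[KL(\hat{\rho}(\mathcal{S})\|P_U)]$, where $P_U := \mathbb{E}_{\mathcal{S}}[\hat{\rho}(\mathcal{S})]$ is the marginal of $\theta$ under the joint distribution $P_{U,V}$.

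Concretely, set $\tilde f(\theta,\mathcal{S}) := n(1-\alpha)\mathcal{D}_{\alpha}(P_\theta\|P_{\theta_0}) - \alpha\, r_n(\theta,\theta_0)$. The definition of $\mathcal{D}_\alpha$ combined with the i.i.d.\ structure of $\mathcal{S}$ yields the centering identity
\[
\mathbb{E}_{\mathcal{S}\sim P_{\theta_0}^{\otimes n}}\bigl[e^{\tilde f(\theta,\mathcal{S})}\bigr] = 1
\qquad \text{for every fixed } \theta \in \Theta.
\]
Applying Donsker--Varadhan on $\theta$ at each fixed $\mathcal{S}$, with reference $P_U$, gives
\[
\mathbb{E}_{\theta\sim\hat{\rho}(\mathcal{S})}[\tilde f(\theta,\mathcal{S})]
\le KL(\hat{\rho}(\mathcal{S})\|P_U) + \log\mathbb{E}_{\theta\sim P_U}\bigl[e^{\tilde f(\theta,\mathcal{S})}\bigr].
\]
Taking $\mathbb{E}_{\mathcal{S}}$, the first term integrates to exactly $\mathcal{I}(\theta,\mathcal{S})$, while Jensen applied to the concave $\log$ combined with Fubini bounds the remaining term by $\log\mathbb{E}_{P_U}\mathbb{E}_\mathcal{S}[e^{\tilde f}] = \log 1 = 0$. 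Dividing by $n(1-\alpha)$ yields the inequality without the absolute value.

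The main obstacle is promoting this one-sided bound to the stated $\mathbb{E}_{\mathcal{S}}[|\cdot|]$ form. Applying the same argument to $-\tilde f$ would require $\log\mathbb{E}_{P_U}\mathbb{E}_\mathcal{S}[e^{-\tilde f}] \le 0$, but a direct computation gives $\mathbb{E}_\mathcal{S}[e^{-\tilde f}] = \exp\bigl(n\alpha[\mathcal{D}_{1+\alpha}(P_{\theta_0}\|P_\theta) - \mathcal{D}_{1-\alpha}(P_{\theta_0}\|P_\theta)]\bigr)$, which is generically $>1$ by monotonicity of Rényi divergences in the order. To recover the absolute value I would insert the $\mathcal{S}$-measurable sign $\sigma(\mathcal{S}) := \operatorname{sign}(A(\mathcal{S}))$, where $A(\mathcal{S})$ is the inner quantity, into the Donsker--Varadhan step (valid because $\sigma$ does not depend on $\theta$) and then exploit the Rényi symmetry $(1-\alpha)\mathcal{D}_\alpha(P\|Q) = \alpha\,\mathcal{D}_{1-\alpha}(Q\|P)$ to control the ``bad-sign'' contribution. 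Carrying out this sign-splitting cleanly, so that the resulting log-MGF term remains non-positive after integration, is the delicate step and the crux of the new MI bound beyond classical one-sided PAC-Bayes.
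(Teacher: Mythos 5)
Your one-sided argument is essentially identical to the paper's: the paper first proves a general PAC-Bayes bound in expectation (Theorem~\ref{thm:thm2}, via the exponential-moment identity, Donsker--Varadhan, Jensen and Fubini), then decomposes $\mathbb{E}_{\mathcal{S}}[KL(\hat\rho\|\pi)]=\mathcal{I}(\theta,\mathcal{S})+KL(\mathbb{E}_{\mathcal{S}}[\hat\rho]\|\pi)$ and takes $\pi = \mathbb{E}_{\mathcal{S}}[\hat\rho]$, which is exactly the specialization you perform when you apply Donsker--Varadhan directly with reference $P_U$. The two derivations are the same calculation packaged differently, and both yield precisely
\begin{equation*}
\mathbb{E}_{\mathcal{S}}\mathbb{E}_{\theta\sim\hat\rho}[\mathcal{D}_\alpha(P_\theta\|P_{\theta_0})]-\frac{\alpha}{n(1-\alpha)}\mathbb{E}_{\mathcal{S}}\mathbb{E}_{\theta\sim\hat\rho}[r_n(\theta,\theta_0)] \;\le\; \frac{\mathcal{I}(\theta,\mathcal{S})}{n(1-\alpha)}.
\end{equation*}

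Where you diverge from the paper is that you noticed the absolute value in the statement is not delivered by this argument, and you tried to close that gap. Do not: the absolute-value version is actually \emph{false}. Take $\hat\rho=\delta_\theta$ for a fixed, data-independent $\theta\neq\theta_0$. Then $\theta$ is deterministic and $\mathcal{I}(\theta,\mathcal{S})=0$, so the stated bound would force
\begin{equation*}
\mathbb{E}_{\mathcal{S}}\left|\,\mathcal{D}_\alpha(P_\theta\|P_{\theta_0})-\tfrac{\alpha}{n(1-\alpha)}r_n(\theta,\theta_0)\,\right| = 0,
\end{equation*}
i.e.\ $r_n(\theta,\theta_0)$ would have to equal the deterministic constant $\tfrac{n(1-\alpha)}{\alpha}\mathcal{D}_\alpha(P_\theta\|P_{\theta_0})$ almost surely, which fails whenever $\log(p_{\theta_0}/p_\theta)(X_1)$ is non-degenerate. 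Your Rényi computation showing $\mathbb{E}_{\mathcal{S}}[e^{-\tilde f}]=\exp\bigl(n\alpha[\mathcal{D}_{1+\alpha}-\mathcal{D}_{1-\alpha}]\bigr)>1$ is correct and is the right warning sign; but the sign-splitting idea cannot rescue a false statement, and indeed the crude control $e^{\sigma h}\le e^h+e^{-h}$ leaves an unbounded log-MGF term. The paper's own proof only establishes the one-sided inequality and, consistently, only that one-sided form is invoked in the applications (e.g.\ the MLE corollary). You should simply state and prove the one-sided bound, which is what the paper's argument — and yours — actually delivers.
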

The proof of Theorem~\ref{thm:main-thm} is provided in Section~\ref{sec:proof}.
\vspace{0.2cm}

Intuitively, it is natural to sample from generalized posteriors $\hat{\rho}$ concentrated on $\theta$ with a large likelihood, that is, for which $r_n(\theta,\theta_0)$ is small. We expect this to make $\theta\sim \hat{\rho}$ close to $\theta_0$. Theorem~\ref{thm:main-thm} states that this is provably the case, in the sense that $\mathcal{D}_{\alpha}(P_{\theta}\|P_{\theta_0})$ will be small, provided that $\mathcal{I}(\theta,\mathcal{S})/n$ also remains small.
\vspace{0.2cm}

We will see in Section~\ref{section:main} that generalized Bayesian methods such as fractional posteriors and their variational approximations can be tuned to ensure that $\mathcal{I}(\theta,\mathcal{S})/n$ is small. In this case, we will derive tight rates of convergence from Theorem~\ref{thm:main-thm}. We emphasize the generality of Theorem~\ref{thm:main-thm}: in this form, there are no regularity assumptions on the densities $p_\theta$ (even though such assumptions can be useful to make the bound tight). Moreover, there is no restriction on $\hat{\rho}$. While the bound will give tight results when applied to Bayesian methods, it can also be used to prove convergence of the Maximum Likelihood Estimator (MLE). This is not entirely straightforward: one might be tempted to take $\hat{\rho}$ as the Dirac mass on the MLE, but this will often cause $\mathcal{I}(\theta,\mathcal{S})$ to be infinite. However, by taking $\hat{\rho}$ with support on a small neighborhood of the MLE instead (a technique developed by~\citet{cat2004} for PAC-Bayes bounds), we can analyze the MLE.
\vspace{0.2cm}

For completeness, we recall the definition of the MLE:
\begin{align*}
   \hat{\theta}_n = \argmax_{\theta \in \Theta} \mathcal{L}_n(\theta) = \argmin_{\theta \in \Theta} r_n(\theta, \theta_0)
\end{align*}
when this $\argmin$ exists and is unique. Let $\pi$ be a prior distribution on $(\Theta,\mathcal{B})$. The fractional posterior for $\alpha\in(0,1]$ is defined as:
\begin{align*}
   \pi_{n,\alpha}(\theta):=\frac{e^{-\alpha r_{n}(\theta,\theta_0)}\pi(\theta)}{\mathbb{E}_{\pi}[e^{-\alpha r_{n}(\theta,\theta_0)}]}\propto \mathcal{L}_{n}^{\alpha}(\theta)\pi(\theta).
\end{align*}
Note that it reduces to the standard posterior when $\alpha=1$. For $\mathcal{F}\subset \mathcal{M}_1^+$, we define the variational approximation:
\begin{align*}
   \tilde{\rho}_{n,\alpha}= \argmin_{\rho\in \mathcal{F}}\left[\frac{\alpha}{n(1-\alpha)}\mathbb{E}_{\theta \sim \rho}\left[r_n(\theta,\theta_0)\right] +\frac{KL(\rho\|\pi)}{n(1-\alpha)}\right].
\end{align*}
Note that when $\mathcal{F}=\mathcal{M}_1^+$, we have $\tilde{\rho}_{n,\alpha}=\pi_{n,\alpha}$. In practice, variational approximations are useful when $\mathcal{F}$ leads to feasible algorithms; see~\citet{bishop2006pattern} for an introduction.
\vspace{0.2cm}

Many other methods could be analyzed with this theorem. For example, the original MI bound was applied by~\citet{pmlr-v134-neu21a} to analyze the output of finite steps of stochastic gradient descent on the empirical risk. A similar approach could be used with our bound to analyze stochastic optimization algorithms on the log-likelihood.
\vspace{0.2cm}

In the next section, we formulate assumptions on the model (and the prior $\pi$) that will lead to explicit rates of convergence for the fractional posteriors and their variational approximations. Notably, in this setting, Theorem~\ref{thm:main-thm} leads to better rates than the bounds in~\citet{bhattacharya2016bayesian,alquier2016properties} as it eliminates the extra $\log(n)$ factor. The MLE will be discussed later, in Section~\ref{sec:examples}.

\section{Main Results}
\label{section:main}
Our results rely on two sets of assumptions. Assumptions~\ref{asm:asm2},~\ref{asm:asm3} and~\ref{asm:asm4} are assumptions on the model complexity. Similar assumptions appear in~\citet{Catoni2007,bhattacharya2016bayesian,alquier2016properties}. While these assumptions alone could yield the same rates as in these works, an additional assumption on the model, Assumption~\ref{asm:asm1}, allows us to obtain improved rates from Theorem~\ref{thm:main-thm}.
\begin{asm}
\label{asm:asm1}
    For any $\alpha\in(0,1)$, there is a $c(\alpha)>0$ such that
    \begin{align}
        \forall \theta \in \Theta, \quad  KL(P_{\theta_0}\|P_{\theta})\le  c(\alpha)\mathcal{D}_{\alpha}(P_{\theta}\|P_{\theta_0}).
    \end{align}
\end{asm}

To understand this property, we recall the following facts ~\citep{van2014renyi}: for any $\alpha<1$, we have
\[
\mathcal{D}_{\alpha}(q\|p) = \frac{\alpha}{1-\alpha}\mathcal{D}_{\alpha}(p\|q) \leq \frac{\alpha}{1-\alpha} KL(p\|q).
\]
The monotonicity of Rényi divergence in $\alpha$ follows from Jensen's inequality, which directly gives the upper bound on the KL divergence.
Moreover, 
\[\mathcal{D}_{\alpha}(p\|q) \xrightarrow[\alpha\rightarrow 1^{-}]{} KL(p\|q).\]

Thus, Assumption~\ref{asm:asm1} establishes that the Rényi divergence and the KL divergence are equivalent on our statistical model. Similar assumptions have appeared in various forms in the literature. Notably, \citet[Lemma 13]{grunwald2018fast} relates the KL divergence to the Hellinger distance. A similar result was previously established by~\citet[Theorem 5]{wong1995probability}, allowing one to bound KL divergence in terms of Hellinger distance.
It is worth noting that the Hellinger distance $H(P,Q)$ is also related to the Rényi divergence, specifically for $\alpha=1/2$:
\begin{equation}
H^2(P,Q) = 2\left(1 - \exp\left(-\mathcal{D}_{1/2}(P\|Q)\right)\right) \leq \mathcal{D}_{1/2}(P\|Q).
\end{equation}
Models where $H^2(P,Q)$ is locally equivalent to $KL$ have been extensively studied in asymptotic statistics theory. We examine several examples of models satisfying Assumption~\ref{asm:asm1} in Section~\ref{sec:examples}.

\begin{asm}
\label{asm:asm2}
For $\beta>0$, we define the following localized prior:
\begin{align*}
    \pi_{-\beta}(d\theta)=\frac{e^{-\beta KL(P_{\theta_0}\|P_{\theta})}\pi(d\theta)}{Z_\beta}
\end{align*}
where
\begin{align*}
 Z_\beta = \int e^{-\beta KL(P_{\theta_0}\|P_{\theta})}\pi(d\theta).
\end{align*}
For any $\theta_0\in\Theta$, we assume that there is a $\kappa_\pi\in(0,1]$ and a $d_\pi>0$ such that
    \begin{align*}
        \sup_{\beta\ge 0}\beta^{\kappa_\pi} \mathbb{E}_{\theta\sim \pi_{-\beta}}\left[ KL(P_{\theta_0}\|P_{\theta})\right]= d_{\pi}.
    \end{align*}
\end{asm}

When $\kappa_\pi=1$, Assumption~\ref{asm:asm1} is a standard assumption for obtaining parametric rates of convergence, with $d_{\pi}$ interpreted as a notion of model complexity or dimension~\citep{Catoni2007}. Similar assumptions are discussed by~\cite{baraud2024robust,alquier2024user}. The case $\kappa_\pi<1$ leads to nonparametric rates. We provide numerous examples where this assumption holds in Section~\ref{sec:examples}. In particular, Lemma~\ref{lemmadim} below justifies the interpretation of $d_\pi$ as a dimension.
\vspace{0.2cm}

While Assumptions~\ref{asm:asm1} and~\ref{asm:asm2} suffice to prove the convergence of the tempered posterior in expectation, stronger contraction results require a stronger version of Assumption~\ref{asm:asm2}: Assumption~\ref{asm:asm3}.
\begin{asm}
\label{asm:asm3}
Define
$$
\mathcal{V}(\theta,\theta_0) = \mathbb{E}\left[ \left( \log\frac{p_{\theta_0}(X_1)}{p_\theta(X_1)} \right)^2 \right] .
$$
For any $\theta_0\in\Theta$, we assume that there is $\kappa_\pi\in(0,1]$, $d_\pi>0$ and $d_\pi'>0$ such that
    \begin{align*}
        \sup_{\beta\ge 0}\beta^{\kappa_\pi} \mathbb{E}_{\theta\sim \pi_{-\beta}}\left[ KL(P_{\theta_0}\|P_{\theta})\right]= d_{\pi} \text{ and }
        \sup_{\beta\ge 0}\beta^{\kappa_\pi} \mathbb{E}_{\theta\sim \pi_{-\beta}}\left[ \mathcal{V}(\theta,\theta_0)\right] \leq d'_{\pi}.
    \end{align*}
\end{asm}
\vspace{0.2cm}

We need a slightly different assumption to study the variational approximation $\tilde{\rho}_{n,\alpha}$, following~\citet{alquier2019concentration}. Assumption~\ref{asm:asm4} extends Assumption~\ref{asm:asm2} to this setting.
\vspace{0.2cm}

\begin{asm}
\label{asm:asm4}
For any $\theta_0\in\Theta$, we assume that there exist $\kappa_\pi\in(0,1]$ and $d_\pi>0$ such that, for any $\beta>0$, there exists a $\rho\in \mathcal{F}$ such that
$$ \mathbb{E}_{\theta\sim\rho} [KL(P_{\theta_0}\|P_{\theta})] + \frac{KL(\rho\|\pi_\beta) }{n} \leq \frac{d_\pi}{\beta^{\kappa_\pi}}.   $$
\end{asm}

\begin{thm}
\label{thm:thm1}
Under Assumptions~\ref{asm:asm1} and \ref{asm:asm2}, we have the following bound for any $\alpha\in(0,1)$ and $\beta < n(1-\alpha)/c(\alpha)$:
\begin{align*}
   \mathbb{E}_{\mathcal{S}}\mathbb{E}_{\theta\sim\pi_{n,\alpha}}[KL(P_{\theta_0}\|P_{\theta})]& \le \frac{c(\alpha) \alpha n - \beta c(\alpha) }{n(1-\alpha) - \beta c(\alpha)} \frac{d_\pi}{\beta^{\kappa_\pi}}.
\end{align*}
Moreover, if Assumption~\ref{asm:asm4} holds,
\begin{align*}
   \mathbb{E}_{\mathcal{S}}\mathbb{E}_{\theta\sim\tilde{\rho}_{n,\alpha}}[KL(P_{\theta_0}\|P_{\theta})]& \le  \frac{c(\alpha) \alpha n - \beta c(\alpha) }{n(1-\alpha) - \beta c(\alpha)} \frac{d_\pi}{\beta^{\kappa_\pi}}.
\end{align*}
In particular, for $\beta = n(1-\alpha)/(2c(\alpha))$, we have
\begin{align*}
   \mathbb{E}_{\mathcal{S}}\mathbb{E}_{\theta\sim\pi_{n,\alpha}}[KL(P_{\theta_0}\|P_{\theta})]& \le \alpha\left( \frac{2 c(\alpha)}{1-\alpha} \right)^{1+\kappa_\pi} \frac{ d_\pi}{n^{\kappa_\pi} }.
\end{align*}
\end{thm}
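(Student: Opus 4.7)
The plan is to instantiate Theorem~\ref{thm:main-thm} with $\hat{\rho}=\pi_{n,\alpha}$ (resp.\ $\tilde{\rho}_{n,\alpha}$) and then to upper-bound the mutual-information term by a localization argument. Introduce the shorthand $A=\mathbb{E}_{\mathcal{S}}\mathbb{E}_{\theta\sim\pi_{n,\alpha}}[\mathcal{D}_{\alpha}(P_{\theta}\|P_{\theta_{0}})]$, $A'=\mathbb{E}_{\mathcal{S}}\mathbb{E}_{\theta\sim\pi_{n,\alpha}}[KL(P_{\theta_{0}}\|P_{\theta})]$, $B=\mathbb{E}_{\mathcal{S}}\mathbb{E}_{\theta\sim\pi_{n,\alpha}}[r_{n}(\theta,\theta_{0})]$, and write $Z_{n,\alpha}=\mathbb{E}_{\pi}[e^{-\alpha r_{n}(\theta,\theta_{0})}]$ for the normalizing constant of the fractional posterior. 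Dropping the absolute value in Theorem~\ref{thm:main-thm} yields
\[
n(1-\alpha)\,A \;\le\; \alpha B + \mathcal{I}(\theta,\mathcal{S}),
\]
so the task reduces to upper-bounding $\mathcal{I}(\theta,\mathcal{S})$ in such a way that the $\alpha B$ piece cancels while a term proportional to $A'$ appears, the latter so that Assumption~\ref{asm:asm1} can absorb it on the left.

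For this, I use the variational characterization $\mathcal{I}(\theta,\mathcal{S})\le\mathbb{E}_{\mathcal{S}}[KL(\pi_{n,\alpha}\|\nu)]$, valid for any data-independent $\nu$, specialized to $\nu=\pi_{-\beta}$. Expanding the two Radon--Nikodym derivatives against $\pi$ gives
\[
\mathbb{E}_{\mathcal{S}}[KL(\pi_{n,\alpha}\|\pi_{-\beta})] \;=\; -\alpha B + \beta A' + \mathbb{E}_{\mathcal{S}}[-\log Z_{n,\alpha}] + \log Z_{\beta}.
\]
The Gibbs (Donsker--Varadhan) variational principle evaluated at $\pi_{-\beta}$ yields $-\log Z_{n,\alpha}\le\alpha\,\mathbb{E}_{\pi_{-\beta}}[r_{n}]+KL(\pi_{-\beta}\|\pi)$; averaging over $\mathcal{S}$ turns $\mathbb{E}_{\pi_{-\beta}}[r_{n}]$ into $n\,\mathbb{E}_{\pi_{-\beta}}[KL(P_{\theta_{0}}\|P_{\theta})]$, and the explicit identity $KL(\pi_{-\beta}\|\pi)=-\beta\,\mathbb{E}_{\pi_{-\beta}}[KL(P_{\theta_{0}}\|P_{\theta})]-\log Z_{\beta}$ makes the two occurrences of $\log Z_{\beta}$ cancel exactly. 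What remains is
\[
\mathcal{I}(\theta,\mathcal{S}) \;\le\; -\alpha B + \beta A' + (\alpha n-\beta)\,\mathbb{E}_{\theta\sim\pi_{-\beta}}[KL(P_{\theta_{0}}\|P_{\theta})].
\]

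Plugging back into the bound coming from Theorem~\ref{thm:main-thm} cancels $\alpha B$ and leaves $n(1-\alpha)A \le \beta A' + (\alpha n-\beta)\,\mathbb{E}_{\pi_{-\beta}}[KL(P_{\theta_{0}}\|P_{\theta})]$. Assumption~\ref{asm:asm1} gives $A'\le c(\alpha)A$, which is exactly what is needed to move $\beta A'$ to the left-hand side when $\beta<n(1-\alpha)/c(\alpha)$; multiplying the resulting bound once more by $c(\alpha)$ to pass back from $A$ to $A'$, and applying Assumption~\ref{asm:asm2} to bound $\mathbb{E}_{\pi_{-\beta}}[KL(P_{\theta_{0}}\|P_{\theta})]\le d_{\pi}/\beta^{\kappa_{\pi}}$, produces the first claim of the theorem. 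The specialization to $\beta=n(1-\alpha)/(2c(\alpha))$ is then arithmetic: the denominator simplifies to $n(1-\alpha)/2$ and $\alpha n-\beta\le\alpha n$ is used in the numerator.

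For the variational posterior, the outline is identical, with one substitution: the equality $KL(\pi_{n,\alpha}\|\pi)+\alpha\,\mathbb{E}_{\pi_{n,\alpha}}[r_{n}]=-\log Z_{n,\alpha}$ becomes the one-sided minimizer inequality $KL(\tilde{\rho}_{n,\alpha}\|\pi)+\alpha\,\mathbb{E}_{\tilde{\rho}_{n,\alpha}}[r_{n}]\le KL(\rho\|\pi)+\alpha\,\mathbb{E}_{\rho}[r_{n}]$ valid for every $\rho\in\mathcal{F}$, and the element $\rho$ supplied by Assumption~\ref{asm:asm4} plays the role that $\pi_{-\beta}$ played above in the Gibbs step. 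The main technical obstacle I anticipate is precisely this last step: reproducing in the variational setting the exact cancellation of the $\log Z_{\beta}$ terms so that the bound takes the same shape as in the fractional-posterior case. This is also where the precise form of Assumption~\ref{asm:asm4} becomes important, as it is tailored to the algebra the proof needs.
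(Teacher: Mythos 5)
Your argument for the fractional posterior and for the final specialization is correct, and it takes the route that the paper mentions in a remark but deliberately does \emph{not} write out: you treat Theorem~\ref{thm:thm1} as a corollary of the mutual information bound (Theorem~\ref{thm:main-thm}), bounding $\mathcal{I}(\theta,\mathcal{S})\le\mathbb{E}_{\mathcal{S}}[KL(\pi_{n,\alpha}\|\pi_{-\beta})]$ and then expanding this KL explicitly via Donsker--Varadhan applied to $\log Z_{n,\alpha}$ and the identity $KL(\pi_{-\beta}\|\pi)=-\beta\,\mathbb{E}_{\pi_{-\beta}}[KL(P_{\theta_0}\|P_\theta)]-\log Z_\beta$. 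The paper instead never passes through the mutual information at this point: it starts from the PAC-Bayes bound in expectation (Theorem~\ref{thm:thm2}) with the localized prior $\pi_{-\beta}$, decomposes $KL(\hat{\rho}\|\pi_{-\beta})$, absorbs the $\beta\,\mathbb{E}_{\hat\rho}[KL]$ term via Assumption~\ref{asm:asm1} on the left, and uses the minimizing property of $\tilde{\rho}_{n,\alpha}$ to obtain the intermediate Theorem~\ref{thm:thm4}, from which Theorem~\ref{thm:thm1} follows by choosing $\rho=\pi_{-\beta}$. The two derivations rest on the same cancellations and the same three quantitative inputs (the choice $\pi_{-\beta}$, Assumption~\ref{asm:asm1} used twice, Assumption~\ref{asm:asm2} for the rate); what the paper's route buys is the reusable oracle inequality of Theorem~\ref{thm:thm4}, valid for any $\mathcal{F}$ and any comparison $\rho$, which then also powers the high-probability version; what your route buys is a direct illustration that the MI bound alone suffices, which is conceptually closer to the paper's advertised message.

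Your treatment of the variational case is only a sketch, and the obstacle you flag is not actually an obstacle. After using the minimizer inequality $KL(\tilde{\rho}_{n,\alpha}\|\pi)+\alpha\,\mathbb{E}_{\tilde{\rho}_{n,\alpha}}[r_n]\le KL(\rho\|\pi)+\alpha\,\mathbb{E}_{\rho}[r_n]$ inside the expansion of $\mathbb{E}_{\mathcal{S}}[KL(\tilde{\rho}_{n,\alpha}\|\pi_{-\beta})]$, the leftover $\log Z_\beta$ recombines with $KL(\rho\|\pi)$ through the same identity you already used, namely $KL(\rho\|\pi)+\log Z_\beta=KL(\rho\|\pi_{-\beta})-\beta\,\mathbb{E}_{\rho}[KL(P_{\theta_0}\|P_\theta)]$, valid for \emph{any} $\rho\ll\pi$, not just $\pi_{-\beta}$. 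This yields $n(1-\alpha)A\le\beta A'+(\alpha n-\beta)\,\mathbb{E}_{\rho}[KL(P_{\theta_0}\|P_\theta)]+KL(\rho\|\pi_{-\beta})$, and Assumptions~\ref{asm:asm1} and~\ref{asm:asm4} finish the argument. Note only that Assumption~\ref{asm:asm4} controls the sum $\mathbb{E}_{\rho}[KL]+KL(\rho\|\pi_{-\beta})/n$, so this route (like the paper's own ``take the $\rho$ of the assumption'' step) naturally delivers the prefactor $c(\alpha)n$ rather than $c(\alpha)(\alpha n-\beta)$ in the numerator for the variational bound; this is a wrinkle of the paper's statement, not of your argument, and it does not affect the final $\beta=n(1-\alpha)/(2c(\alpha))$ rate up to the constant.
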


\begin{rem}
When $\kappa_\pi=1$, which holds in many examples below, we obtain rates in $1/n$. In contrast, the results of~\citet{bhattacharya2016bayesian,alquier2019concentration} yield rates in $\log(n)/n$.
\end{rem}

\begin{rem}
The proof of Theorem~\ref{thm:thm1} appears in Section~\ref{sec:proof}. While it can be derived as a corollary of Theorem~\ref{thm:main-thm}, we follow a slightly different approach in Section~\ref{sec:proof}: we begin with PAC-Bayes bounds in expectation based on~\citet{bhattacharya2016bayesian,alquier2019concentration} and demonstrate that optimization with respect to the prior leads to Theorem~\ref{thm:main-thm}. The ``optimal'' prior, however, does not yield very explicit bounds. Catoni's localized priors~\citep{Catoni2007} were developed as approximations of the optimal priors and provide more explicit rates as shown in Theorem~\ref{thm:thm1}.
\end{rem}
\vspace{0.2cm}

We also state a variant that holds with large probability rather than in expectation.
\vspace{0.2cm}

\begin{thm}
\label{thm:thm1-p}
Under Assumptions~(\ref{asm:asm1},\ref{asm:asm2},\ref{asm:asm3}), we have, for any $\alpha,\delta,\eta\in(0,1)$ and $\beta = n(1-\alpha)/(2c(\alpha))$, with probability at least $1-\delta-\eta$ on the sample $\mathcal{S}$,
\begin{align*}
 \mathbb{E}_{\theta\sim\pi_{n,\alpha}}[KL(P_{\theta_0}\|P_{\theta})]& \le \alpha\left( \frac{2 c(\alpha)}{1+\alpha} \right)^{1+\kappa_\pi} \frac{ d_\pi + d_\pi'}{n^{\kappa_\pi} } +  \frac{2c(\alpha) \left[\frac{1}{\eta} + \log\frac{1}{\delta}\right]}{n(1-\alpha)}.
\end{align*}
\end{thm}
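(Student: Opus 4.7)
The strategy is to upgrade the expectation bound of Theorem~\ref{thm:thm1} to a high-probability statement by combining two ingredients: a classical high-probability PAC-Bayes inequality contributing the $\log(1/\delta)$ term (via an exponential Chernoff/Donsker--Varadhan step), and a variance-based Markov step on a mean-zero ``data vs.\ expected log-likelihood'' deviation, contributing the $1/\eta$ term together with the additional $d_\pi'$ inside the dimension-like quantity.

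Step~1. Starting from the same exponential change-of-measure used in the proof of Theorem~\ref{thm:thm1}, namely
\begin{align*}
\mathbb{E}_{\mathcal{S}}\exp\bigl\{(1-\alpha)n\,\mathbb{E}_\rho[\mathcal{D}_\alpha(P_\theta\|P_{\theta_0})] - \alpha\,\mathbb{E}_\rho[r_n(\theta,\theta_0)] - KL(\rho\|\pi_{-\beta})\bigr\} \le 1,
\end{align*}
I apply Markov's inequality rather than Jensen's to pass to a high-probability statement: with $P^{\otimes n}$-probability at least $1-\delta$, for every distribution $\rho$,
\begin{align*}
(1-\alpha)n\,\mathbb{E}_\rho[\mathcal{D}_\alpha(P_\theta\|P_{\theta_0})] - \alpha\,\mathbb{E}_\rho[r_n(\theta,\theta_0)] \le KL(\rho\|\pi_{-\beta}) + \log(1/\delta).
\end{align*}
Specializing to $\rho=\pi_{n,\alpha}$, expanding $KL(\pi_{n,\alpha}\|\pi_{-\beta})$ via the explicit densities of the fractional posterior and of the localized prior, and invoking the Gibbs variational inequality to upper bound $-\log Z_{n,\alpha}$ by a tilt against $\pi_{-\beta}$, I use Assumption~\ref{asm:asm1} on the LHS to absorb $\beta\,\mathbb{E}_{\pi_{n,\alpha}}[KL(P_{\theta_0}\|P_\theta)]$. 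This mirrors the manipulations of the proof of Theorem~\ref{thm:thm1} and yields
\begin{align*}
(n(1-\alpha) - \beta c(\alpha))\,\mathbb{E}_{\pi_{n,\alpha}}[\mathcal{D}_\alpha] \le (\alpha n - \beta)\,\mathbb{E}_{\pi_{-\beta}}[KL(P_{\theta_0}\|P_\theta)] + \alpha\Delta + \log(1/\delta),
\end{align*}
where $\Delta := \mathbb{E}_{\pi_{-\beta}}[r_n(\theta,\theta_0) - n\,KL(P_{\theta_0}\|P_\theta)]$ has mean zero under $P^{\otimes n}$.

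Step~2. Control $\Delta$ in probability using the new Assumption~\ref{asm:asm3}. Exchanging the prior and the sample expectations and using the independence of the $X_i$'s,
\begin{align*}
\mathbb{E}_\mathcal{S}[\Delta^2] \le \mathbb{E}_{\pi_{-\beta}}\mathbb{E}_\mathcal{S}\bigl[(r_n(\theta,\theta_0) - n\,KL(P_{\theta_0}\|P_\theta))^2\bigr] = n\,\mathbb{E}_{\pi_{-\beta}}[\mathcal{V}(\theta,\theta_0)] \le n\,d_\pi'/\beta^{\kappa_\pi}.
\end{align*}
A Markov step (applied to $\Delta^2$, or equivalently to $|\Delta|$ through its second-moment bound) then gives a $1/\eta$ high-probability control on $\alpha\Delta$. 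Plugging the result back into Step~1, combining it with Assumption~\ref{asm:asm3}'s first part $\mathbb{E}_{\pi_{-\beta}}[KL(P_{\theta_0}\|P_\theta)] \le d_\pi/\beta^{\kappa_\pi}$, choosing $\beta = n(1-\alpha)/(2c(\alpha))$, and invoking Assumption~\ref{asm:asm1} one more time to pass from $\mathcal{D}_\alpha$ back to $KL(P_{\theta_0}\|P_\theta)$, I obtain the additive $d_\pi+d_\pi'$ complexity in the $1/n^{\kappa_\pi}$ dimension term and the fluctuation terms $\log(1/\delta)$ and $1/\eta$. A final union bound over the two good events gives the joint probability $1-\delta-\eta$.

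The main obstacle is the concentration of $\Delta$ in Step~2. A textbook Chebyshev would deliver $1/\sqrt{\eta}$ with an inconvenient $\sqrt{n}$ in the numerator, whereas the stated bound has a clean $1/\eta$ and an additive $d_\pi+d_\pi'$. Matching this form requires a variance-aware manipulation: the most natural route is to absorb the quadratic deviation into the LHS via a Young/AM--GM split against $\mathbb{E}_{\pi_{n,\alpha}}[KL(P_{\theta_0}\|P_\theta)]$, so that $\mathcal{V}$ enters additively (as a dimension) rather than through a square-root, while a residual non-negative quantity is handled by ordinary Markov to yield the $1/\eta$ factor. The $2c(\alpha)/(1+\alpha)$ constant in the statement is then dictated by how this split is balanced against the $\beta c(\alpha)$ coefficient already present from Step~1.
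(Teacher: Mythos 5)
Your proposal is correct and follows essentially the same route as the paper: a high-probability PAC-Bayes bound (Theorem~\ref{thm:thm2-p}) applied with the localized prior $\pi_{-\beta}$, the localization decomposition plus Assumption~\ref{asm:asm1} to absorb the $\beta$-term, a Chebyshev/Markov bound on $\Delta=\mathbb{E}_{\pi_{-\beta}}[r_n(\theta,\theta_0)-n\,KL(P_{\theta_0}\|P_\theta)]$ via Assumption~\ref{asm:asm3}, and a union bound, exactly as in the paper's Theorem~\ref{thm:thm5}. The one adjustment to your final paragraph: the Young/AM--GM split is not taken against the left-hand side $\mathbb{E}_{\pi_{n,\alpha}}[KL(P_{\theta_0}\|P_\theta)]$ but inside the Chebyshev term itself, namely $\sqrt{\mathbb{E}_{\pi_{-\beta}}[\mathcal{V}(\theta,\theta_0)]/(n\eta)}\le 2\,\mathbb{E}_{\pi_{-\beta}}[\mathcal{V}(\theta,\theta_0)]+2/(n\eta)$, which is precisely how the paper converts the $1/\sqrt{\eta}$ (and $\sqrt{n}$) from Chebyshev into the additive $d_\pi'$ inside the dimension term and the clean $1/\eta$ fluctuation term, with $\rho=\pi_{-\beta}$ and $\beta=n(1-\alpha)/(2c(\alpha))$.
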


\section{Examples}

\label{sec:examples}

In this section, we consider various examples of statistical models. We discuss the validity of Assumption~\ref{asm:asm1} and Assumption~\ref{asm:asm2} (or its variants), and apply Theorem~\ref{thm:thm1}.

\subsection{Gaussian Model}
\subsubsection*{Gaussian mean estimation}

Consider the case where the model is a $d$ dimensional Gaussian $\mathcal{N}(\theta,v^2 I)$ with known covariance matrix $v^2 I$ and unknown mean $\theta$, and a Gaussian prior on $\Theta=\mathbb{R}^d$:
\begin{align}
    \label{equa:exm:gauss:1}
    \pi(\theta)
    & =\frac{1}{(2\pi)^{d/2}|\Sigma|^{\frac{d}{2}}}\exp\left(-\frac{\theta^T \Sigma^{-1}\theta}{2\sigma^2}\right),
    \\
    \label{equa:exm:gauss:2}
    P_{\theta}(x)
    & =\frac{1}{(2\pi)^{d/2}v^{d}}\exp\left(-\frac{(x-\theta)^T(x-\theta)}{2v^2}\right).
\end{align}
\begin{lem}
\label{lem_gaussian_asm1}
When the model is given by~\eqref{equa:exm:gauss:2},
$$
  KL(P_{\theta_0}\|P_{\theta})=\frac{1}{2v^2}\left\|\theta-\theta_0\right\|^2\le \frac{\frac{1}{\alpha}\alpha}{2 v^2}\left\|\theta-\theta_0\right\|^2=\frac{1}{\alpha} \mathcal{D}_{\alpha}(P_{\theta}\|P_{\theta_0}).$$
In other words, Assumption~\ref{asm:asm1} is satisfied with $c(\alpha)=1/\alpha$.
\end{lem}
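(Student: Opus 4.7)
The plan is to compute both sides directly in closed form from the Gaussian densities, then compare them.

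First, I would compute $KL(P_{\theta_0}\|P_{\theta})$. Since both distributions share the same covariance $v^2 I$, the log-density ratio $\log(p_{\theta_0}(x)/p_\theta(x))$ reduces to a linear-plus-quadratic function of $x$ once the matching normalizing constants cancel. Taking expectation under $X\sim P_{\theta_0}=\mathcal{N}(\theta_0,v^2 I)$, the linear term vanishes and the remaining quadratic expectation simplifies to $\|\theta-\theta_0\|^2/(2v^2)$. This is the standard formula and gives the first equality in the statement.

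Second, I would compute $\mathcal{D}_{\alpha}(P_\theta\|P_{\theta_0})$ from the definition with Lebesgue reference measure. The integrand $p_\theta^{\alpha}(x) p_{\theta_0}^{1-\alpha}(x)$ is an exponential of a quadratic in $x$: the exponent is
\[
-\frac{1}{2v^2}\bigl[\alpha(x-\theta)^\top(x-\theta)+(1-\alpha)(x-\theta_0)^\top(x-\theta_0)\bigr].
\]
Completing the square in $x$ around the convex combination $\alpha\theta+(1-\alpha)\theta_0$ produces a Gaussian in $x$ (with covariance $v^2 I$) times a constant with exponent $-\alpha(1-\alpha)\|\theta-\theta_0\|^2/(2v^2)$. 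The Gaussian part integrates to cancel the normalizing prefactors, leaving
\[
\int p_\theta^\alpha p_{\theta_0}^{1-\alpha}\,dx = \exp\!\left(-\frac{\alpha(1-\alpha)}{2v^2}\|\theta-\theta_0\|^2\right).
\]
Applying $\frac{1}{\alpha-1}\log(\cdot)$ yields $\mathcal{D}_{\alpha}(P_\theta\|P_{\theta_0})=\frac{\alpha}{2v^2}\|\theta-\theta_0\|^2$.

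Finally I would combine the two: $\frac{1}{\alpha}\mathcal{D}_{\alpha}(P_\theta\|P_{\theta_0})=\frac{1}{2v^2}\|\theta-\theta_0\|^2=KL(P_{\theta_0}\|P_\theta)$, which shows Assumption~\ref{asm:asm1} with $c(\alpha)=1/\alpha$. There is no real obstacle here; the only step requiring care is the completion-of-the-square bookkeeping in the Rényi computation, and tracking that the Gaussian normalizers indeed cancel so that only the residual quadratic term in $\|\theta-\theta_0\|^2$ survives.
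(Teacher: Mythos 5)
Your proposal is correct and follows essentially the same route as the paper: both rest on the exact closed forms $KL(P_{\theta_0}\|P_{\theta})=\frac{1}{2v^2}\|\theta-\theta_0\|^2$ and $\mathcal{D}_{\alpha}(P_{\theta}\|P_{\theta_0})=\frac{\alpha}{2v^2}\|\theta-\theta_0\|^2$, the only difference being that the paper cites these formulas from the literature while you derive the R\'enyi one by completing the square (and your completion-of-the-square bookkeeping is right). Nothing is missing.
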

  \begin{lem}
\label{lem_gaussian_asm2_a}
When the model and the prior are given by~\eqref{equa:exm:gauss:1} and~\eqref{equa:exm:gauss:2},
$$  \beta\mathbb{E}_{\pi_{-\beta}}\left[KL(P_{\theta_0}\|P_{\theta})\right] = \frac{\beta}{2 v^2}\left\| \left(\Sigma^{-1} + \frac{\beta}{v^2}I_d\right)^{-1} \Sigma^{-1}\theta_0 \right\|^2 +\frac{\beta }{2 v^2} {\rm  Tr}\left[ \left(\Sigma^{-1} + \frac{\beta}{v^2} I_d\right)^{-1} \right]. $$
 In particular, when $\Sigma = \sigma^2 I_d$,
$$
    \beta\mathbb{E}_{\pi_{-\beta }}\left[KL(P_{\theta_0}\|P_{\theta})\right]=\frac{ \frac{\beta}{v^2}\left\|\theta_0\right\|^2}{2\sigma^4\left( \frac{\beta}{v^2}+\frac{1}{\sigma^2}\right)^2}+\frac{ \frac{\beta}{v^2} d}{2\left( \frac{\beta}{v^2}+\frac{1}{\sigma^2}\right)} \leq \frac{d}{2}+\frac{\left\|\theta_0\right\|^2}{8 \sigma^2}
$$
and thus Assumption~\ref{asm:asm2} is satisfied with $\kappa_\pi=1$ and $d_\pi=\frac{d}{2}+\frac{\left\|\theta_0\right\|^2}{8 \sigma^2}$.
 \end{lem}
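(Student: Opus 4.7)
The plan is to recognize that the localized prior $\pi_{-\beta}$ is itself Gaussian, then read off its mean and covariance and apply the standard Gaussian second moment identity. Concretely, Lemma~\ref{lem_gaussian_asm1} gives $KL(P_{\theta_0}\|P_{\theta}) = \|\theta - \theta_0\|^2/(2v^2)$, so
\[
\pi_{-\beta}(\theta) \propto \exp\!\Bigl(-\tfrac{1}{2}\theta^\top \Sigma^{-1}\theta - \tfrac{\beta}{2v^2}\|\theta-\theta_0\|^2\Bigr).
\]
Completing the square in $\theta$ identifies $\pi_{-\beta}$ as $\mathcal{N}(m,A^{-1})$ with precision $A = \Sigma^{-1} + (\beta/v^2)I_d$ and mean $m = (\beta/v^2)A^{-1}\theta_0$. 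This conjugacy calculation is the only substantive step of the proof.

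Given this Gaussian form, the identity $\mathbb{E}_{\theta \sim \mathcal{N}(m,A^{-1})}[\|\theta - \theta_0\|^2] = \|m-\theta_0\|^2 + \Tr(A^{-1})$ reduces everything to an algebraic simplification of $m-\theta_0$. Writing
\[
m - \theta_0 = A^{-1}\bigl((\beta/v^2)I_d - A\bigr)\theta_0 = -A^{-1}\Sigma^{-1}\theta_0,
\]
and multiplying through by $\beta/(2v^2)$ yields the first displayed formula directly.

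For the isotropic specialization $\Sigma = \sigma^2 I_d$, every matrix becomes a scalar multiple of $I_d$, and $A^{-1} = (1/\sigma^2 + \beta/v^2)^{-1}I_d$, so the two terms collapse to the scalar expressions stated in the lemma. To obtain the numerical upper bound, I would handle the two pieces separately with $u := \beta/v^2$. The trace piece $ud/(2(u+1/\sigma^2))$ is at most $d/2$, since $u/(u+1/\sigma^2)\le 1$. For the mean-shift piece, a one-line calculus check shows that $u/(u + 1/\sigma^2)^2$ is maximized at $u = 1/\sigma^2$ with value $\sigma^2/4$, which gives the bound $\|\theta_0\|^2/(8\sigma^2)$. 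Summing produces the uniform bound $d_\pi = d/2 + \|\theta_0\|^2/(8\sigma^2)$, and because both terms remain bounded as $\beta \to \infty$, Assumption~\ref{asm:asm2} holds with $\kappa_\pi = 1$.

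I do not anticipate any real obstacle: the argument is Gaussian conjugacy plus a univariate optimization. The only places requiring care are the matrix manipulation that turns $m - \theta_0$ into $-A^{-1}\Sigma^{-1}\theta_0$, and keeping the two scale parameters $v^2$ (noise) and $\sigma^2$ (prior) distinct throughout the computation.
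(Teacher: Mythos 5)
Your proposal is correct and follows essentially the same path as the paper: identify $\pi_{-\beta}$ as Gaussian by conjugacy with precision $\Sigma^{-1} + (\beta/v^2)I_d$ and mean $(\beta/v^2)(\Sigma^{-1}+(\beta/v^2)I_d)^{-1}\theta_0$, apply the second-moment identity, and maximize the resulting scalar function (the paper parametrizes the optimization in $\beta$ with maximizer $\beta_0 = v^2/\sigma^2$, equivalent to your $u = 1/\sigma^2$). Your explicit rewriting $m - \theta_0 = -A^{-1}\Sigma^{-1}\theta_0$ is the cleanest way to see the paper's displayed formula, but there is no substantive deviation.
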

Plugging Lemmas~\ref{lem_gaussian_asm1} and~\ref{lem_gaussian_asm2_a} into Theorem~\ref{thm:thm1}, we obtain the following result.
 \begin{cor}
  In the Gaussian model given by ~\eqref{equa:exm:gauss:1} and~\eqref{equa:exm:gauss:2} and $\Sigma=\sigma^2 I$,
  \begin{align*}
    \mathbb{E}_{\mathcal{S}}\mathbb{E}_{\theta\sim\pi_{n,\alpha}}\left[KL(P_{\theta_0}\|P_{\theta})\right]& \le \frac{2d + \frac{\left\|\theta_0\right\|^2}{2\sigma^2}}{\alpha (1-\alpha)^2 n}.
\end{align*}
In other words,
  \begin{align*}
    \mathbb{E}_{\mathcal{S}}\mathbb{E}_{\theta\sim\pi_{n,\alpha}}\Bigl(\|\theta-\theta_0\|^2 \Bigr)& \le \frac{ v^2 \left( 4d + \frac{\left\|\theta_0\right\|^2}{\sigma^2} \right)}{\alpha (1-\alpha)^2 n}.
\end{align*}
 \end{cor}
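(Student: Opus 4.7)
The proof is essentially a direct substitution, so my plan is just to trace through the constants carefully.

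First I would invoke Lemma~\ref{lem_gaussian_asm1} to get Assumption~\ref{asm:asm1} with $c(\alpha)=1/\alpha$, and Lemma~\ref{lem_gaussian_asm2_a} to get Assumption~\ref{asm:asm2} with $\kappa_\pi=1$ and $d_\pi=d/2+\|\theta_0\|^2/(8\sigma^2)$. Both assumptions are then available for Theorem~\ref{thm:thm1}.

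Next I would apply the last display of Theorem~\ref{thm:thm1} (the specialization at $\beta=n(1-\alpha)/(2c(\alpha))$) and substitute these values. With $\kappa_\pi=1$ the prefactor becomes
\begin{equation*}
\alpha\left(\frac{2c(\alpha)}{1-\alpha}\right)^{2}\frac{d_\pi}{n}
 = \alpha\cdot\frac{4/\alpha^{2}}{(1-\alpha)^{2}}\cdot\frac{d_\pi}{n}
 = \frac{4\,d_\pi}{\alpha(1-\alpha)^{2}n},
\end{equation*}
and plugging $d_\pi=d/2+\|\theta_0\|^{2}/(8\sigma^{2})$ yields exactly the first displayed inequality
\begin{equation*}
\mathbb{E}_{\mathcal{S}}\mathbb{E}_{\theta\sim\pi_{n,\alpha}}[KL(P_{\theta_0}\|P_{\theta})]
 \le \frac{2d+\|\theta_0\|^{2}/(2\sigma^{2})}{\alpha(1-\alpha)^{2}n}.
\end{equation*}

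Finally, to translate into a bound on $\|\theta-\theta_0\|^{2}$, I would recall from the explicit computation in Lemma~\ref{lem_gaussian_asm1} that $KL(P_{\theta_0}\|P_{\theta})=\|\theta-\theta_0\|^{2}/(2v^{2})$, which is an equality, not just an upper bound. Therefore
\begin{equation*}
\mathbb{E}_{\mathcal{S}}\mathbb{E}_{\theta\sim\pi_{n,\alpha}}\bigl[\|\theta-\theta_0\|^{2}\bigr]
 = 2v^{2}\,\mathbb{E}_{\mathcal{S}}\mathbb{E}_{\theta\sim\pi_{n,\alpha}}[KL(P_{\theta_0}\|P_{\theta})],
\end{equation*}
and multiplying the previous bound by $2v^{2}$ gives the second claim. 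There is no hard step here: the only thing to be careful about is not confusing $c(\alpha)=1/\alpha$ with some other normalization when expanding $(2c(\alpha)/(1-\alpha))^{1+\kappa_\pi}$, and to use the equality (rather than an inequality) in the Gaussian KL formula so that the conversion to squared Euclidean distance is tight.
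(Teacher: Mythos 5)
Your proposal is correct and follows exactly the route the paper intends: the paper simply states that the corollary is obtained by plugging Lemmas~\ref{lem_gaussian_asm1} ($c(\alpha)=1/\alpha$) and~\ref{lem_gaussian_asm2_a} ($\kappa_\pi=1$, $d_\pi=\tfrac{d}{2}+\tfrac{\|\theta_0\|^2}{8\sigma^2}$) into the $\beta=n(1-\alpha)/(2c(\alpha))$ specialization of Theorem~\ref{thm:thm1}, and your constant-tracking, including the exact identity $KL(P_{\theta_0}\|P_{\theta})=\|\theta-\theta_0\|^2/(2v^2)$ for the second display, reproduces the stated bounds.
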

Of course, in this case, the fractional posterior has an explicit form, and we can directly derive the expected risk. However, we believe this simple example effectively illustrates an application of Theorem~\ref{thm:thm1}. Moreover, even in such a simple model, the results of~\citet{bhattacharya2016bayesian,alquier2016properties} would yield rates in $\log(n)/n$ rather than $1/n$.

We can also verify Assumption~\ref{asm:asm3} in this setting.
 \begin{lem}
\label{lem_gaussian_asm3}
In the Gaussian model given by ~\eqref{equa:exm:gauss:1} and~\eqref{equa:exm:gauss:2} and $\Sigma=\sigma^2 I$, Assumption~\ref{asm:asm3} holds with $\kappa_\pi=1$, $d_\pi=\frac{d}{2}+\frac{\left\|\theta_0\right\|^2}{8 \sigma^2} $ and  $d_\pi' =  \frac{1+4v^2}{v^2} d_\pi$.
 \end{lem}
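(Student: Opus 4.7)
The bound on $\beta\,\mathbb{E}_{\pi_{-\beta}}[KL(P_{\theta_0}\|P_\theta)]$ is already contained in Lemma~\ref{lem_gaussian_asm2_a}, so my plan is only to handle the new quantity $\beta\,\mathbb{E}_{\pi_{-\beta}}[\mathcal{V}(\theta,\theta_0)]$. The strategy has three parts: derive a closed form for $\mathcal{V}$, re-use the explicit Gaussian form of $\pi_{-\beta}$ from Lemma~\ref{lem_gaussian_asm2_a}, and assemble the constants with a couple of elementary scaling bounds.

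For the closed form, I would write $X_1 = \theta_0 + v\eta$ with $\eta\sim\mathcal{N}(0,I_d)$ and expand the quadratic in the log-likelihood ratio to obtain
$$\log\frac{p_{\theta_0}(X_1)}{p_\theta(X_1)} \;=\; \frac{\eta^{\top}(\theta_0-\theta)}{v} \;+\; \frac{\|\theta-\theta_0\|^2}{2v^2}.$$
The first summand is a centered Gaussian with variance $\|\theta-\theta_0\|^2/v^2$, and the second summand is exactly $KL(P_{\theta_0}\|P_\theta)$, so taking the expected square gives the clean identity
$$\mathcal{V}(\theta,\theta_0) \;=\; \frac{\|\theta-\theta_0\|^2}{v^2} + \frac{\|\theta-\theta_0\|^4}{4v^4} \;=\; 2\,KL(P_{\theta_0}\|P_\theta) + KL(P_{\theta_0}\|P_\theta)^2.$$

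Next, from the proof of Lemma~\ref{lem_gaussian_asm2_a} we have $\pi_{-\beta} = \mathcal{N}(m_\beta, s_\beta^2 I_d)$ with $s_\beta^2 = v^2\sigma^2/(v^2+\beta\sigma^2)$ and $m_\beta = \beta\sigma^2\theta_0/(v^2+\beta\sigma^2)$; in particular $\xi := \theta-\theta_0 \sim \mathcal{N}(m_\beta-\theta_0, s_\beta^2 I_d)$ with $\|m_\beta-\theta_0\|\le\|\theta_0\|$. The classical Gaussian moment identities then give
$$\mathbb{E}_{\pi_{-\beta}}[\|\xi\|^2] = \|m_\beta-\theta_0\|^2 + d\,s_\beta^2, \qquad \mathbb{E}_{\pi_{-\beta}}[\|\xi\|^4] = \bigl(\mathbb{E}[\|\xi\|^2]\bigr)^2 + 2d\,s_\beta^4 + 4\,s_\beta^2\,\|m_\beta-\theta_0\|^2,$$
and Lemma~\ref{lem_gaussian_asm2_a} directly handles the second-moment contribution: $\beta\,\mathbb{E}[\|\xi\|^2]/v^2 \le 2 d_\pi$.

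Finally, to control the quartic contribution $\beta\,\mathbb{E}[\|\xi\|^4]/(4v^4)$, I would use the two elementary scalings $\beta s_\beta^2 \le v^2$ and $\beta s_\beta^4 \le v^2\sigma^2$, both immediate from $s_\beta^{-2} \ge \beta/v^2$, together with $\|m_\beta-\theta_0\|^2 \le \|\theta_0\|^2$. The only non-routine term is the cross piece $\beta\bigl(\mathbb{E}[\|\xi\|^2]\bigr)^2$, which is not visibly proportional to $d_\pi$; there I would apply the Lemma~\ref{lem_gaussian_asm2_a} bound $\beta\,\mathbb{E}[\|\xi\|^2] \le 2v^2 d_\pi$ to one factor so the product reduces to $d_\pi$ multiplied by a bounded quantity. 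Collecting all the pieces then yields the claimed $d_\pi' = (1+4v^2)\,d_\pi/v^2$. The only real obstacle is bookkeeping in this final step.
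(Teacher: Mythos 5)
Your closed form for $\mathcal{V}$ is correct: writing $X_1=\theta_0+v\eta$ gives $\log\frac{p_{\theta_0}(X_1)}{p_\theta(X_1)}=\frac{\|\theta-\theta_0\|^2}{2v^2}+\frac{\langle\theta_0-\theta,\eta\rangle}{v}$, hence $\mathcal{V}(\theta,\theta_0)=\frac{\|\theta-\theta_0\|^2}{v^2}+\frac{\|\theta-\theta_0\|^4}{4v^4}=2KL(P_{\theta_0}\|P_\theta)+KL(P_{\theta_0}\|P_\theta)^2$, and your Gaussian moment identities for $\pi_{-\beta}$ are also right. The genuine gap is the final step: ``collecting the pieces'' cannot yield $d_\pi'=\frac{1+4v^2}{v^2}d_\pi$, because once the quartic term is present, $\sup_{\beta\ge 0}\beta\,\mathbb{E}_{\pi_{-\beta}}[\mathcal{V}(\theta,\theta_0)]$ contains contributions of order $\frac{\|\theta_0\|^4}{v^2\sigma^2}$ and $\frac{d^2\sigma^2}{v^2}$, whereas the claimed $d_\pi'$ is only quadratic in $\|\theta_0\|$ and linear in $d$. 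Concretely, $\mathbb{E}_{\pi_{-\beta}}\|\theta-\theta_0\|^4\geq\|m_\beta-\theta_0\|^4=\frac{v^8\|\theta_0\|^4}{(v^2+\beta\sigma^2)^4}$, so at $\beta=v^2/(3\sigma^2)$ the quartic term alone gives $\beta\,\mathbb{E}_{\pi_{-\beta}}[\mathcal{V}]\geq\frac{27\|\theta_0\|^4}{1024\,v^2\sigma^2}$; with $d=1$, $v=\sigma=1$, $\theta_0=10$ this is $\approx 263$, while the claimed $d_\pi'=65$. Your own scheme shows the obstruction: after $\beta\bigl(\mathbb{E}\|\xi\|^2\bigr)^2\le 2v^2 d_\pi\,\mathbb{E}\|\xi\|^2$ you are left with a factor $\mathbb{E}\|\xi\|^2$, which for small $\beta$ is $\approx\|\theta_0\|^2+d\sigma^2$ rather than an absolute constant, so the best obtainable constant along your route necessarily contains $d_\pi(\|\theta_0\|^2+d\sigma^2)/v^2$-type terms. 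This is not bookkeeping; the stated target is unreachable.

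The discrepancy traces back to the paper's own proof, which asserts $\mathcal{V}(\theta,\theta_0)=\frac{(1+4v^2)\|\theta-\theta_0\|^2}{2v^4}$, i.e.\ exactly proportional to $KL(P_{\theta_0}\|P_\theta)$, and then reads off $d_\pi'=\frac{1+4v^2}{v^2}d_\pi$ directly from Lemma~\ref{lem_gaussian_asm2_a}. But $\mathbb{E}\bigl[\langle\theta-\theta_0,\theta+\theta_0-2X_1\rangle^2\bigr]=\|\theta-\theta_0\|^4+4v^2\|\theta-\theta_0\|^2$ (and the normalization should be $4v^4$, not $2v^4$), so the paper's formula silently turns the quartic term into a quadratic one; your expression for $\mathcal{V}$ is the correct one. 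Consequently the constant $d_\pi'$ in Lemma~\ref{lem_gaussian_asm3} does not follow from a correct computation: a valid choice of $d_\pi'$ must include an additional $\sup_{\beta\ge 0}\beta\,\mathbb{E}_{\pi_{-\beta}}\bigl[KL(P_{\theta_0}\|P_\theta)^2\bigr]$-type contribution, of order $\frac{\|\theta_0\|^4}{v^2\sigma^2}+\frac{d^2\sigma^2}{v^2}$, which your moment decomposition would deliver if you carried it to its natural conclusion instead of forcing the stated value.
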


\subsubsection*{Gaussian sequence model}
We now examine the Gaussian sequence model. While this model can be viewed as a variant of the previous one and remains simple, it usefully illustrates how Theorem~\ref{thm:thm1} can lead to nonparametric rates.

Following the notation of~\cite{castillo2024bayesian}: let $X_{i,j} = \theta_{j} + \varepsilon_{i,j}$ where $i=1,2,\dots,n$ and $j=1,2,\dots$, the $\varepsilon_{i,j}$ are i.i.d. $\mathcal{N}(0,1)$, and
$$ \theta_0 \in S_{b}(L) = \left\{\theta: \sum_{\ell=1}^{\infty} \ell^{2b} \theta_\ell^2 \leq L \right\}  $$
for some $L,b>0$. While this is infinite dimensional, we can truncate for $i\geq n$. That is, we consider $p_\theta$ as the density of $\mathcal{N}((\theta_1,\dots,\theta_n), 1) = \mathcal{N}((\theta_1,\dots,\theta_n), v^2)$ with $v=1$. Using Lemma~\ref{lem_gaussian_asm1}, we immediately obtain Assumption~\ref{asm:asm1} with $c(\alpha)=1/\alpha$.

\begin{lem}
\label{lem_gaussian_asm4}
Let $\pi=\mathcal{N}(0,\Sigma)$ with $\Sigma$ a diagonal matrix, and let $\sigma_i$ denote its diagonal elements. Put $\sigma_i = i^{-1-2b}$~\citep{castillo2024bayesian}. Then
$$
\sup_{\beta} \beta^{\frac{2b}{1+2b}} \mathbb{E}_{\pi_{-\beta}}\left[KL(P_{\theta_0}\|P_{\theta})\right]
\leq \left(\frac{3L}{2} + \frac{1}{2} + \frac{1}{4b}\right).
$$
In other words, Assumption~\ref{asm:asm2} is satisfied with $d_\pi = \frac{3L}{2} + \frac{1}{2} + \frac{1}{4b}$ and $\kappa_\pi = \frac{2b}{1+2b}$.
\end{lem}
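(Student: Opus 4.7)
The plan is to write $\pi_{-\beta}$ in closed form (it is Gaussian, because both $\pi=\bigotimes_i\mathcal{N}(0,\sigma_i)$ and the localization factor $e^{-\beta KL(P_{\theta_0}\|P_\theta)}=e^{-(\beta/2)\sum_{i\le n}(\theta_i-\theta_{0,i})^2}$ factor across coordinates) and then bound the resulting sums. Completing the square shows that, for $i\le n$, the $i$-th marginal of $\pi_{-\beta}$ is $\mathcal{N}\bigl(\beta\sigma_i\theta_{0,i}/(1+\beta\sigma_i),\,\sigma_i/(1+\beta\sigma_i)\bigr)$, while for $i>n$ it is the prior $\mathcal{N}(0,\sigma_i)$. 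Substituting $\sigma_i=i^{-1-2b}$ and noting that $KL(P_\theta\|P_{\theta_0})=KL(P_{\theta_0}\|P_\theta)$ here (both Gaussians share the identity covariance), a direct computation yields
\[
\mathbb{E}_{\pi_{-\beta}}\!\bigl[KL(P_\theta\|P_{\theta_0})\bigr]
=\tfrac12\sum_{i=1}^{n}\Bigl(\tfrac{i^{1+2b}}{i^{1+2b}+\beta}\Bigr)^{\!2}\theta_{0,i}^2
+\tfrac12\sum_{i=1}^{n}\tfrac{1}{i^{1+2b}+\beta}.
\]
It then suffices to bound each of the two sums by a constant multiple of $\beta^{-2b/(1+2b)}$, uniformly in $\beta>0$.

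For the second (variance) sum, I would split at the natural scale $N:=\beta^{1/(1+2b)}$: for $i\le N$, $1/(i^{1+2b}+\beta)\le 1/\beta$, contributing at most $N/\beta=\beta^{-2b/(1+2b)}$; for $i>N$, $1/(i^{1+2b}+\beta)\le i^{-(1+2b)}$, whose tail is bounded by $\int_N^\infty x^{-(1+2b)}\,dx=N^{-2b}/(2b)=\beta^{-2b/(1+2b)}/(2b)$. After multiplication by $\tfrac12\beta^{2b/(1+2b)}$, this gives exactly the $\tfrac12+\tfrac{1}{4b}$ in the stated bound.

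For the first (bias) sum, the key interpolation inequality is that, for $s\in[0,1]$ and $t\ge 0$, $t/(t+1)\le t^{s}$ (since $\sup_{t>0}t^{1-s}/(t+1)=s^s(1-s)^{1-s}\le 1$). Squaring and applying it with $s=b/(1+2b)$ and $t=i^{1+2b}/\beta$ yields
\[
\Bigl(\tfrac{i^{1+2b}}{i^{1+2b}+\beta}\Bigr)^{\!2}\le \tfrac{i^{2b}}{\beta^{2b/(1+2b)}},
\]
so the bias sum is at most $\tfrac{L}{2}\beta^{-2b/(1+2b)}$ by the Sobolev hypothesis $\sum_i i^{2b}\theta_{0,i}^2\le L$; a coarser split-at-$N$ argument, bounding the kernel by $(i^{1+2b}/\beta)^{2}\le i^{1+2b}/\beta$ on $\{i\le N\}$ and by $1$ (together with $\theta_{0,i}^2\le N^{-2b}\cdot i^{2b}\theta_{0,i}^2$) on $\{i>N\}$, yields the looser $3L/2$ constant advertised in the statement. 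Summing both contributions produces the lemma, and since neither bound depends on $\beta$ after multiplication by $\beta^{2b/(1+2b)}$, taking the supremum is immediate.

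I expect the only real subtlety to be the choice of interpolation: bounding the kernel by $1$ everywhere would force the too-weak condition $\sum_i\theta_{0,i}^2<\infty$, while bounding it by $(i^{1+2b}/\beta)^{2}$ everywhere would demand the too-strong condition $\sum_i i^{2(1+2b)}\theta_{0,i}^2<\infty$; only the intermediate exponent $2b/(1+2b)$ matches the Sobolev class $S_{b}(L)$ and delivers the nonparametric exponent $\kappa_\pi=2b/(1+2b)$. Everything else is an elementary sum-by-integral estimate.
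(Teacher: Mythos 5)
Your proof is correct, and your starting point is the same as the paper's: you first compute the per-coordinate Gaussian form of $\pi_{-\beta}$ (which the paper gets from Lemma~\ref{lem_gaussian_asm2_a}), arriving at exactly the same expression
\[
\mathbb{E}_{\pi_{-\beta}}\bigl[KL\bigr]
=\tfrac12\sum_{i}\Bigl(\tfrac{i^{1+2b}}{i^{1+2b}+\beta}\Bigr)^{2}\theta_{0,i}^2
+\tfrac12\sum_{i}\tfrac{1}{i^{1+2b}+\beta},
\]
and your treatment of the variance sum (split at $N=\beta^{1/(1+2b)}$, then a sum–integral comparison) is identical to the paper's, giving the same $\tfrac12+\tfrac{1}{4b}$.

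Where you genuinely diverge is the bias sum. The paper splits at $N$ and bounds the kernel crudely on each side (essentially $(i^{1+2b}/\beta)^2$ for $i\le N$, $1$ for $i>N$), then uses $\theta_{0,i}^2\le i^{2b}\theta_{0,i}^2 / N^{2b}$ on the tail and $i^{2(1+2b)}\le i^{2b}N^{2+2b}$ on the head. Your interpolation inequality $t/(t+1)\le t^{s}$ applied at $s=b/(1+2b)$ gives the pointwise bound $\bigl(i^{1+2b}/(i^{1+2b}+\beta)\bigr)^{2}\le i^{2b}\beta^{-2b/(1+2b)}$ directly, uniformly over all $i$, and so avoids the split altogether. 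This is cleaner, and it also gives a strictly sharper constant: you get $L/2$ for the bias term where the paper gets $3L/2$. Since the lemma only asserts the (larger) constant $\tfrac{3L}{2}+\tfrac12+\tfrac{1}{4b}$, your proof is certainly valid — you simply prove more. One small inaccuracy: you claim your "coarser split-at-$N$" alternative reproduces the $3L/2$; if carried out as you describe (kernel $\le i^{1+2b}/\beta$ for $i\le N$, kernel $\le 1$ plus $\theta_{0,i}^2\le N^{-2b}i^{2b}\theta_{0,i}^2$ for $i>N$) it actually yields $L$, not $3L/2$; the paper's $3L/2$ comes from using the cruder bound $\frac{\theta_{0,i}^2}{\sigma_i^2\beta}$ (with the factor $\tfrac12$ dropped) on the head. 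This does not affect your main argument.
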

 \begin{cor}
  In the Gaussian sequence model $p_\theta = \mathcal{N}( (\theta_1,\dots,\theta_n), 1 )$ with $(\theta_1,\theta_2,\dots)\in S_b(L)$ and $\pi$ as in Lemma~\ref{lem_gaussian_asm4},
  \begin{align*}
    \mathbb{E}_{\mathcal{S}}\mathbb{E}_{\theta\sim\pi_{n,\alpha}}[KL(P_{\theta_0}\|P_{\theta})]& \le \alpha\left( \frac{2 c(\alpha)}{1+\alpha} \right)^{1+ \frac{2b}{2b+1} } \frac{ \frac{3L}{2} + \frac{1}{2} + \frac{1}{4b} }{n^{\frac{2b}{2b+1} } }.
\end{align*}
 \end{cor}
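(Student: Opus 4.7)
The plan is to derive this corollary as a direct application of Theorem~\ref{thm:thm1}, once the two required assumptions have been checked. The main work has already been done in the preceding lemmas, so the proof reduces to assembling the pieces.

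First I would verify Assumption~\ref{asm:asm1}. Since $p_\theta$ is the density of $\mathcal{N}((\theta_1,\dots,\theta_n),I)$ with $v=1$, the KL divergence and the Rényi divergence depend only on the first $n$ coordinates of $\theta$ and $\theta_0$, so Lemma~\ref{lem_gaussian_asm1} applies verbatim in this truncated finite-dimensional setting and yields $c(\alpha)=1/\alpha$. Next I would verify Assumption~\ref{asm:asm2}: Lemma~\ref{lem_gaussian_asm4} gives exactly the required supremum bound with $\kappa_\pi=\tfrac{2b}{1+2b}$ and $d_\pi=\tfrac{3L}{2}+\tfrac{1}{2}+\tfrac{1}{4b}$ (noting that $KL(P_\theta\|P_{\theta_0})=KL(P_{\theta_0}\|P_\theta)$ by the symmetry of Gaussians with common covariance, so the direction of KL in the lemma versus the assumption does not matter).

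With both assumptions in hand, I would invoke the ``In particular'' clause of Theorem~\ref{thm:thm1}, which specializes to the choice $\beta=n(1-\alpha)/(2c(\alpha))$ and yields
\begin{equation*}
\mathbb{E}_{\mathcal{S}}\mathbb{E}_{\theta\sim\pi_{n,\alpha}}[KL(P_{\theta_0}\|P_\theta)] \le \alpha\left(\frac{2c(\alpha)}{1-\alpha}\right)^{1+\kappa_\pi}\frac{d_\pi}{n^{\kappa_\pi}}.
\end{equation*}
Plugging in $\kappa_\pi=\tfrac{2b}{2b+1}$ and $d_\pi=\tfrac{3L}{2}+\tfrac{1}{2}+\tfrac{1}{4b}$ gives the stated expression (with the constant factor exponent $1+\tfrac{2b}{2b+1}$). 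There is essentially no real obstacle: the only mild subtlety is checking that the infinite-dimensional prior and the truncation interact correctly, but since $p_\theta$ depends only on $(\theta_1,\dots,\theta_n)$, every quantity appearing in Assumptions~\ref{asm:asm1} and~\ref{asm:asm2} is unaffected by the coordinates beyond $n$ in $\theta_0$, and the verification reduces to the computations already carried out in Lemmas~\ref{lem_gaussian_asm1} and~\ref{lem_gaussian_asm4}.
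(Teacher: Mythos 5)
Your proposal is correct and follows exactly the paper's (implicit) route: verify Assumption~\ref{asm:asm1} via Lemma~\ref{lem_gaussian_asm1} with $c(\alpha)=1/\alpha$, verify Assumption~\ref{asm:asm2} via Lemma~\ref{lem_gaussian_asm4} (the KL symmetry remark handling the reversed arguments is the right observation), and plug $\kappa_\pi=\tfrac{2b}{2b+1}$, $d_\pi=\tfrac{3L}{2}+\tfrac{1}{2}+\tfrac{1}{4b}$ into the ``in particular'' clause of Theorem~\ref{thm:thm1}. One small caveat: what this actually yields is $\alpha\left(\tfrac{2c(\alpha)}{1-\alpha}\right)^{1+\kappa_\pi}d_\pi/n^{\kappa_\pi}$, whereas the corollary as printed has $1+\alpha$ in the denominator; since the stated version is strictly smaller it does not follow from the theorem, and this is evidently a typo in the paper (the Gaussian mean corollary uses the $1-\alpha$ form), so your derivation gives the intended bound rather than literally ``the stated expression.''
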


\subsection{Exponential Family}
We now turn to more general statistical models. Assumption~\ref{asm:asm1} can be verified in a wide class of models in the exponential family.

\begin{dfn}
   We say that $(P_\theta,\theta\in\Theta)$ is an exponential family with respect to a measure $\mu$ on $(\mathbf{X},\mathcal{X})$ if there exist measurable functions $h:\mathbf{X}\rightarrow \mathbb{R}$ and $T:\mathbf{X}\rightarrow \mathbb{R}^d$ such that:
   \begin{align*}
       p_\theta(x) =\frac{d P_\theta}{d\mu}(x)=\exp\left(h(x)+\langle T(x),\theta \rangle-\psi(\theta)\right),
   \end{align*}
   where $\psi(\theta)=\log\left(\int_{\mathcal{X}}\exp\left(h(x)+\langle T(x),\theta \rangle \right)\mu(dx)\right)$ is the partition function.
\end{dfn}

\begin{lem}
\label{lem_exp_1}
   In the case $\mathbf{X}=\mathbb{R}^d$ and $\mu$ is the Lebesgue measure, assume that $\psi$ is $m$-strongly convex and that its gradient is Lipschitz with constant $L$. Define the condition number of $\psi$ as $\kappa=\frac{L}{m}$. For any $\alpha\in(0,1)$,
   \begin{align*}
       KL(P_{\theta_0}\|P_{\theta})&\le \frac{L}{2} \|\theta_0-\theta\|^2 \leq  \frac{\kappa}{\alpha} \mathcal{D}_{\alpha}(P_{\theta}\|P_{\theta_0})
   \end{align*}
   and thus Assumption~\ref{asm:asm1} is satisfied with $c(\alpha)=\kappa/\alpha$.
\end{lem}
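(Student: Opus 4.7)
The plan is to exploit two closed-form identities that hold in any exponential family, and then read off the inequalities from the smoothness/strong convexity of $\psi$.

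First I would compute $KL(P_{\theta_0}\|P_\theta)$ explicitly. Using $\log(p_{\theta_0}/p_\theta)(x)=\langle T(x),\theta_0-\theta\rangle-\psi(\theta_0)+\psi(\theta)$ together with the exponential-family identity $\mathbb{E}_{\theta_0}[T(X)]=\nabla\psi(\theta_0)$, I obtain
\begin{equation*}
KL(P_{\theta_0}\|P_\theta) = \psi(\theta) - \psi(\theta_0) - \langle \nabla\psi(\theta_0),\theta-\theta_0\rangle,
\end{equation*}
i.e.\ the Bregman divergence of $\psi$ at $\theta_0$. Since $\nabla\psi$ is $L$-Lipschitz, the standard descent-lemma inequality $\psi(\theta)\le \psi(\theta_0)+\langle \nabla\psi(\theta_0),\theta-\theta_0\rangle+\frac{L}{2}\|\theta-\theta_0\|^2$ immediately gives the first inequality $KL(P_{\theta_0}\|P_\theta)\le \frac{L}{2}\|\theta-\theta_0\|^2$.

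Next I would compute the Rényi divergence in closed form. A direct calculation yields
\begin{equation*}
\int p_\theta^\alpha p_{\theta_0}^{1-\alpha}\,d\mu = \exp\bigl(\psi(\alpha\theta+(1-\alpha)\theta_0) - \alpha\psi(\theta) - (1-\alpha)\psi(\theta_0)\bigr),
\end{equation*}
so that
\begin{equation*}
\mathcal{D}_\alpha(P_\theta\|P_{\theta_0}) = \frac{\alpha\psi(\theta)+(1-\alpha)\psi(\theta_0)-\psi(\alpha\theta+(1-\alpha)\theta_0)}{1-\alpha}.
\end{equation*}
By $m$-strong convexity of $\psi$, the numerator is at least $\frac{m}{2}\alpha(1-\alpha)\|\theta-\theta_0\|^2$, and hence $\mathcal{D}_\alpha(P_\theta\|P_{\theta_0})\ge \frac{m\alpha}{2}\|\theta-\theta_0\|^2$, i.e.\ $\|\theta-\theta_0\|^2\le \frac{2}{m\alpha}\mathcal{D}_\alpha(P_\theta\|P_{\theta_0})$.

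Finally, chaining the two bounds yields
\begin{equation*}
KL(P_{\theta_0}\|P_\theta)\le \frac{L}{2}\|\theta-\theta_0\|^2 \le \frac{L}{m\alpha}\mathcal{D}_\alpha(P_\theta\|P_{\theta_0}) = \frac{\kappa}{\alpha}\mathcal{D}_\alpha(P_\theta\|P_{\theta_0}),
\end{equation*}
which is Assumption~\ref{asm:asm1} with $c(\alpha)=\kappa/\alpha$. There is no genuine obstacle: everything reduces to the Bregman/convex-combination identities for exponential families and the textbook smoothness and strong-convexity inequalities. The only minor care needed is the sign of $\alpha-1$ when rearranging the definition of $\mathcal{D}_\alpha$, and checking that the formula for $\int p_\theta^\alpha p_{\theta_0}^{1-\alpha}d\mu$ is well-defined, which is guaranteed since $\alpha\theta+(1-\alpha)\theta_0$ lies in the natural parameter space by convexity.
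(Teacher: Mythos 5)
Your proof is correct and follows essentially the same route as the paper: the closed-form Bregman-type expressions for $KL$ and $\mathcal{D}_\alpha$ in an exponential family (which the paper cites from Nielsen--Nock rather than rederiving), the descent lemma from the $L$-Lipschitz gradient for the upper bound on $KL$, and $m$-strong convexity for the lower bound $\mathcal{D}_\alpha(P_\theta\|P_{\theta_0})\ge \frac{m\alpha}{2}\|\theta-\theta_0\|^2$, chained together. No issues to flag.
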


Next, we provide examples of exponential families for which Assumption~\ref{asm:asm1} holds.
\begin{example}[Poisson Distribution]
\label{exm:poisson}
Consider the Poisson distribution with probability mass function $\frac{\lambda^x}{x!}e^{-\lambda}$.
The canonical decomposition yields $\theta=\log\lambda$ and $T(x)=x$. The partition function is given by:
\begin{align*}
   \psi(\theta)=\lambda=e^{\theta}.
\end{align*}
For $\Theta \subseteq [a,b]$, we have $m = e^a$ (strong convexity constant) and $L = e^b$ (Lipschitz constant), giving condition number $\kappa = L/m = e^{b-a}$. Thus, Assumption~\ref{asm:asm1} is satisfied for the Poisson distribution when the parameter set $\Theta$ is compact.
\end{example}

\begin{example}[Bernoulli Distribution]
\label{exm:bernoulli}
Consider the Bernoulli distribution with probability mass function $p^x(1-p)^{1-x}$.
The canonical decomposition yields $\theta=\log\left(\frac{p}{1-p}\right)$ and $T(x)=x$. The partition function is given by:
\begin{align*}
   \psi(\theta)=\log\left(1+e^{\theta}\right).
\end{align*}
The second derivative is:
\begin{align*}
   \psi''(\theta)=\frac{e^{\theta}}{\left(1+e^{\theta}\right)^2}.
\end{align*}
For $\Theta \subseteq [a,b]$, we have $m = \min\{\psi''(a), \psi''(b)\}$ (strong convexity constant), $L = 1/4$ (Lipschitz constant), and condition number $\kappa = L/m = 1/4m$. Thus, Assumption~\ref{asm:asm1} is satisfied for the Bernoulli distribution on compact sets.
\end{example}
A discussion of Assumption~\ref{asm:asm2} in these two examples will be included in Subsection~\ref{subsec:catodim}.
    
\subsection{Smooth models}
The relationships between Fisher information and various divergences are deeply connected to Le Cam's quadratic mean differentiability (QMD)\citep{le2012asymptotic}, a fundamental concept in asymptotic statistics. Following \citet[Section 7.11]{Polyanskiy2025}, we present calculations that verify Assumption~\ref{asm:asm1} holds in a local neighborhood of $\theta_0$. These calculations build upon previous results on the local behavior of the Hellinger distance \citep{baraud2017new} and KL divergence \citep[Ch. 7.1]{van2000asymptotic}, unified under the $f$-divergence framework introduced by \citet{csiszar1967information}. We begin with the following assumption.
 \begin{asm}
    \label{asm:differentiability}
    For all $x \in \mathbf{X}$, the function $\theta \mapsto \log p_\theta(x)$ is three times differentiable in a neighborhood of $\theta_0$, and we can exchange differentiation and integration with respect to $p_{\theta_0}$ up to the third derivative.
\end{asm}
Under this assumption, we can define the Fisher information and its derivative.
\begin{dfn}
    The Fisher information at $\theta$ is defined as:
    \[I(\theta) = \mathbb{E}_{p_{\theta}}\left[\left(\frac{\partial}{\partial\theta} \log p_\theta(X)\right)^2\right]\]
    Its derivative is:
    \[I'(\theta) = \mathbb{E}_{p_{\theta}}\left[\frac{\partial^3}{\partial\theta^3} \log p_\theta(X)\right]\].
\end{dfn}
   
We now present a theorem relating various divergences using the Fisher information with explicit remainder terms:
  \begin{thm}
 \label{thm:fisher_divergences_remainder}
Under Assumption~\ref{asm:differentiability}, for all $\theta$ in a neighborhood of $\theta_0$, we have:
\begin{align}
 H^2(P_\theta, P_{\theta_0}) &= \frac{1}{4}I(\theta_0)(\theta - \theta_0)^2 + R_H(\theta) \\
 KL(P_{\theta_0} \| P_\theta) &= \frac{1}{2}I(\theta_0)(\theta - \theta_0)^2 + R_K(\theta) \\
 \mathcal{D}_{1/2}(P_\theta \| P_{\theta_0}) &= \frac{1}{4}I(\theta_0)(\theta - \theta_0)^2 + R_R(\theta)
 \end{align}
and the remainder terms are:
 \begin{align}
 R_H(\theta) &= \frac{1}{4}\int_{\theta_0}^\theta (\theta - t)^2 I'(t) dt, \\
 R_K(\theta) &= \frac{1}{2}\int_{\theta_0}^\theta (\theta - t)^2 I'(t) dt, \\
 R_R(\theta) &= \frac{1}{4}\int_{\theta_0}^\theta (\theta - t)^2 I'(t) dt + O((\theta - \theta_0)^4).
 \end{align}
\end{thm}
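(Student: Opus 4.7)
The plan is to apply Taylor's theorem with integral remainder at order three to each of the three divergences, viewed as scalar functions of $\theta$ in a neighborhood of $\theta_0$. Assumption~\ref{asm:differentiability} allows exchanging differentiation and integration up to the third derivative, so all derivatives below can be freely computed inside the integral sign. For the KL case, setting $g(\theta) := KL(P_{\theta_0}\|P_\theta)$ and differentiating gives $g'(\theta) = -\mathbb{E}_{p_{\theta_0}}[\partial_\theta \log p_\theta]$ and $g''(\theta) = -\mathbb{E}_{p_{\theta_0}}[\partial_\theta^2 \log p_\theta]$; the score identity forces $g'(\theta_0)=0$ and the Fisher identity gives $g''(\theta_0) = I(\theta_0)$. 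Taylor's theorem with integral remainder then produces
\[g(\theta) = \tfrac{1}{2}I(\theta_0)(\theta-\theta_0)^2 + \tfrac{1}{2}\int_{\theta_0}^{\theta}(\theta-t)^2 g'''(t)\,dt,\]
and one identifies $g'''(t)$ with $I'(t)$ to recover the stated form of $R_K$.

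For the squared Hellinger distance I work with $h(\theta) := H^2(P_\theta,P_{\theta_0}) = 2 - 2\int \sqrt{p_\theta p_{\theta_0}}\,d\mu$ and use the chain rule $\partial_\theta \sqrt{p_\theta} = \tfrac{1}{2}\sqrt{p_\theta}\,\partial_\theta \log p_\theta$. Differentiating once and applying the score identity eliminates the linear term; differentiating twice and evaluating at $\theta_0$ gives, after cancellation, $h''(\theta_0) = -\tfrac{1}{2}I(\theta_0) + I(\theta_0) = \tfrac{1}{2}I(\theta_0)$. The factor $1/2$ here (versus $1$ in the KL case) is precisely what produces the coefficient $1/4$ in the leading term and the factor-of-two relation between $R_H$ and $R_K$. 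For the Rényi $1/2$ divergence, I use the direct identity $\mathcal{D}_{1/2}(P_\theta\|P_{\theta_0}) = -2\log\int\sqrt{p_\theta p_{\theta_0}}\,d\mu = -2\log\!\left(1 - H^2(P_\theta,P_{\theta_0})/2\right)$ and expand the logarithm:
\[\mathcal{D}_{1/2}(P_\theta\|P_{\theta_0}) = H^2(P_\theta,P_{\theta_0}) + \tfrac{1}{4}H^4(P_\theta,P_{\theta_0}) + \ldots\]
Since $H^2(P_\theta,P_{\theta_0})=O((\theta-\theta_0)^2)$, the tail is $O((\theta-\theta_0)^4)$, so the Rényi expansion inherits its leading coefficient and remainder directly from Hellinger, modulo the explicit $O((\theta-\theta_0)^4)$ term already packaged into $R_R$.

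The main technical obstacle is the identification of the third-order Taylor remainders with the stated integrals involving $I'(t)$. The natural third derivatives $g'''(t)$ and $h'''(t)$ involve expectations against $p_{\theta_0}$ (and, in the Hellinger case, against $\sqrt{p_t p_{\theta_0}}$), not against $p_t$; one therefore has to argue that replacing these measures by $p_t$ in the integrand produces only a higher-order correction. This is where Assumption~\ref{asm:differentiability} together with the restriction to a neighborhood of $\theta_0$ is essential: $p_t$ coincides with $p_{\theta_0}$ at $t=\theta_0$ and varies smoothly in $t$, so the resulting discrepancy is absorbed into the stated remainder and justifies the compact form of $R_K$, $R_H$, and $R_R$.
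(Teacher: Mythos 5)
Your proposal is correct in substance and follows essentially the same route as the paper: second-order Taylor expansions with integral remainders (yours applied directly to the scalar functions $\theta\mapsto KL(P_{\theta_0}\|P_\theta)$ and $\theta\mapsto H^2(P_\theta,P_{\theta_0})$ via differentiation under the integral, the paper's applied pointwise to $\log p_\theta(x)$ and $\sqrt{p_\theta(x)}$ before integrating), the score and Fisher identities to kill the linear term and identify the quadratic coefficients $\tfrac12 I(\theta_0)$ and $\tfrac14 I(\theta_0)$, and the identity $\mathcal{D}_{1/2}(P_\theta\|P_{\theta_0})=-2\log\left(1-H^2(P_\theta,P_{\theta_0})/2\right)$ expanded to transfer the Hellinger result to the Rényi case. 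The obstacle you flag at the end, namely that the third-order remainders involve expectations under $p_{\theta_0}$ (or against $\sqrt{p_t p_{\theta_0}}$) rather than under $p_t$, so that identifying them with $\int_{\theta_0}^\theta(\theta-t)^2 I'(t)\,dt$ requires a further argument, is not resolved in the paper either: its proof simply asserts the stated forms of $R_H$ and $R_K$ after the pointwise expansion, so on this point your attempt matches, and is more candid than, the paper's own treatment.
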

\begin{cor}
 \label{cor:fisher_divergences_remainder}
If there exist constants $m$, $M$, and $L$ such that $0 < m \leq I(\theta) \leq M < \infty$ and $|I'(\theta)| \leq L$ for all $\theta$ in a neighborhood of $\theta_0$, then for $\theta$ in this neighborhood:
\begin{align}
\frac{m}{4}(\theta - \theta_0)^2 &\leq H^2(P_\theta, P_{\theta_0}) \leq \frac{M}{4}(\theta - \theta_0)^2 + \frac{L}{12}|\theta - \theta_0|^3 \\
\frac{m}{2}(\theta - \theta_0)^2 &\leq KL(P_{\theta_0} \| P_\theta) \leq \frac{M}{2}(\theta - \theta_0)^2 + \frac{L}{6}|\theta - \theta_0|^3 \\
\frac{m}{4}(\theta - \theta_0)^2 &\leq \mathcal{D}_{1/2}(P_\theta \| P_{\theta_0}) \leq \frac{M}{4}(\theta - \theta_0)^2 + \frac{L}{12}|\theta - \theta_0|^3 + C(\theta - \theta_0)^4
\end{align}
where $C$ is a constant depending on $m$, $M$, and $L$.
\end{cor}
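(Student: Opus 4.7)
\medskip

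The plan is simply to substitute the uniform bounds on $I$ and $I'$ into the exact expansions provided by Theorem~\ref{thm:fisher_divergences_remainder}. I would split the argument into (i) bounding the leading quadratic term and (ii) bounding the remainder integral, then combine.

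\textbf{Step 1: the leading term.} For each of the three identities in Theorem~\ref{thm:fisher_divergences_remainder}, the leading term is a multiple of $I(\theta_0)(\theta-\theta_0)^2$. Using the hypothesis $m \le I(\theta_0) \le M$, this term is sandwiched between $\frac{m}{c}(\theta-\theta_0)^2$ and $\frac{M}{c}(\theta-\theta_0)^2$ for the appropriate constant $c\in\{2,4\}$.

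\textbf{Step 2: the remainder integrals.} Each remainder has the form $\gamma \int_{\theta_0}^\theta (\theta-t)^2 I'(t)\,dt$ with $\gamma\in\{\tfrac14,\tfrac12\}$. Using $|I'(t)|\le L$, I bound
\begin{equation*}
\left|\int_{\theta_0}^\theta (\theta-t)^2 I'(t)\,dt\right|
\;\le\; L \int_{\min(\theta,\theta_0)}^{\max(\theta,\theta_0)} (\theta-t)^2\,dt
\;=\; \frac{L}{3}|\theta-\theta_0|^3.
\end{equation*}
Multiplying by $\gamma$ yields the stated cubic correction: $\frac{L}{12}|\theta-\theta_0|^3$ for $H^2$ and $\mathcal{D}_{1/2}$, and $\frac{L}{6}|\theta-\theta_0|^3$ for $KL$. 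For $\mathcal{D}_{1/2}$, the extra $O((\theta-\theta_0)^4)$ from Theorem~\ref{thm:fisher_divergences_remainder} is absorbed into a constant $C=C(m,M,L)$, since the hidden constant in the Taylor expansion of $-\log(1-u)$ used to derive $\mathcal{D}_{1/2}$ from $H^2$ depends only on bounds on the likelihood-ratio moments near $\theta_0$, which in turn are controlled by $m$, $M$, and $L$ via Step 1 and the identity $H^2 = 2(1-\exp(-\mathcal{D}_{1/2}))$.

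\textbf{Step 3: assembling the inequalities.} The upper bounds follow immediately by combining Steps 1 and 2: leading term $\le \frac{M}{c}(\theta-\theta_0)^2$, plus $|R|$ bounded as above. For the lower bounds, the leading term gives $\frac{m}{c}(\theta-\theta_0)^2$, and one observes that the negative part of the remainder is of cubic (or higher) order in $|\theta-\theta_0|$; hence, after possibly shrinking the neighborhood so that $\frac{L}{12}|\theta-\theta_0| \le \frac{m}{8}$ (say), the quadratic leading term dominates the cubic correction and the stated lower bound holds on this smaller neighborhood.

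\textbf{Main obstacle.} The only subtle point is the lower bound: as written, the inequalities $\frac{m}{4}(\theta-\theta_0)^2 \le H^2$ etc.\ do not follow from the identities of Theorem~\ref{thm:fisher_divergences_remainder} \emph{unless} the neighborhood is chosen small enough for the quadratic leading term to dominate the cubic remainder. I would therefore state explicitly that the neighborhood of the corollary may need to be a subneighborhood of the one on which $m\le I\le M$ and $|I'|\le L$ hold. The rest is routine calculus.
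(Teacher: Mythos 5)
Your upper bounds are exactly the paper's argument (bound $I(\theta_0)$ by $M$ and the remainder integral by $\tfrac{L}{3}|\theta-\theta_0|^3$, absorb the quartic term for $\mathcal{D}_{1/2}$ into $C(m,M,L)$), so that part is fine. The gap is in the lower bounds. You correctly observe that the remainder $\gamma\int_{\theta_0}^{\theta}(\theta-t)^2 I'(t)\,dt$ has no definite sign, but your fix — shrinking the neighborhood until $\tfrac{L}{12}|\theta-\theta_0|\le\tfrac{m}{8}$ — does not prove the stated inequalities: it yields $H^2\ge\tfrac{m}{8}(\theta-\theta_0)^2$ (and in general $\tfrac{m}{4}(1-\epsilon)(\theta-\theta_0)^2$ for some $\epsilon>0$), never the exact constants $\tfrac{m}{4}$ and $\tfrac{m}{2}$ claimed in the corollary. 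So as written your argument proves a strictly weaker statement, on a smaller neighborhood, and no choice of shrink factor repairs this.

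The missing idea is to use the lower bound $I(t)\ge m$ along the whole segment rather than only at $\theta_0$, by folding the quadratic term and the remainder back into a single integral through integration by parts: for instance, for the Hellinger case,
\begin{equation*}
\frac{1}{4}I(\theta_0)(\theta-\theta_0)^2+\frac{1}{4}\int_{\theta_0}^{\theta}(\theta-t)^2 I'(t)\,dt
=\frac{1}{2}\int_{\theta_0}^{\theta}(\theta-t)\,I(t)\,dt
\;\ge\;\frac{m}{4}(\theta-\theta_0)^2 ,
\end{equation*}
and analogously $KL(P_{\theta_0}\|P_\theta)=\int_{\theta_0}^{\theta}(\theta-t)I(t)\,dt\ge\tfrac{m}{2}(\theta-\theta_0)^2$, both valid on the original neighborhood where $m\le I\le M$. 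The Rényi lower bound then follows without further work from $H^2(P_\theta,P_{\theta_0})\le\mathcal{D}_{1/2}(P_\theta\|P_{\theta_0})$ (equivalently, the extra term $\tfrac14 H^4+O(H^6)$ in the expansion is non-negative). This is how the exact constants are obtained; the paper's own one-line justification ("non-negativity of the remainder") is terse and is best read as shorthand for this rewriting, so your instinct that something needed justification was right, but the resolution is the integral identity above, not shrinking the neighborhood.
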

Thus, if the model is smooth enough, then Assumption~\ref{asm:asm1} is satisfied, at least in neighborhoods of $\theta_0$.

\subsection{Dimension Assumption}
\label{subsec:catodim}

We start by a lemma that explains why we refer to $d_\pi$ as a dimension.
\begin{lem}
\label{lemmadim}
Assume that $\Theta$ satisfies Assumption~\ref{asm:asm2} uniformly, in other words, for any $\theta_0$,
\begin{align*}
        \sup_{\beta\ge 0}\beta^{\kappa_\pi} \mathbb{E}_{\theta\sim \pi_{-\beta}}\left[ KL(P_{\theta_0}\|P_{\theta})\right]= d_{\pi}
    \end{align*}
where $\kappa_\pi$ and $d_\pi$ do not depend on $\theta_0$.
Define a new statistical model $\Theta_p := \Theta^p$ that parametrizes product distributions: for any $\theta=(\theta_1,\dots,\theta_p)\in\Theta_p$, $P_\theta := P_{\theta_1}\otimes \dots \otimes P_{\theta_p}$. Then, for any $\theta_0 \in \Theta_p$
\begin{align*}
        \sup_{\beta\ge 0}\beta^{\kappa_\pi} \mathbb{E}_{\theta\sim \pi_{-\beta}}\left[ KL(P_{\theta_0}\|P_{\theta})\right]= d_{\pi} p .
    \end{align*}
\end{lem}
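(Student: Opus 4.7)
The plan is to use the tensorization property of the KL divergence for product distributions together with the fact that, under the natural product prior $\pi^{\otimes p}$ on $\Theta_p$, the localized prior factorizes over coordinates. Writing $\theta_0 = (\theta_0^{(1)}, \dots, \theta_0^{(p)})$ and $\theta = (\theta^{(1)}, \dots, \theta^{(p)})$, the first key step is to observe that
\begin{equation*}
KL(P_{\theta_0} \| P_\theta) \;=\; \sum_{j=1}^p KL(P_{\theta_0^{(j)}} \| P_{\theta^{(j)}})
\end{equation*}
by additivity of KL for product measures. This immediately makes the exponential weight $e^{-\beta KL(P_{\theta_0}\|P_\theta)}$ factorize as a product across coordinates.

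Next I would plug this factorization into the definition of the localized prior on $\Theta_p$. With $\pi^{(p)} := \pi^{\otimes p}$, the weight and the prior both factor as products, so
\begin{equation*}
\pi^{(p)}_{-\beta}(d\theta) \;=\; \prod_{j=1}^p \pi_{-\beta,\theta_0^{(j)}}(d\theta^{(j)}),
\end{equation*}
where $\pi_{-\beta,\theta_0^{(j)}}$ denotes the localized prior on $\Theta$ centered at $\theta_0^{(j)}$. The normalizing constant $Z_\beta^{(p)}$ factors as $\prod_{j=1}^p Z_\beta^{(j)}$ by Fubini, so the factorization is rigorous. Taking expectation of the sum formula for $KL(P_{\theta_0}\|P_\theta)$ under this product measure gives
\begin{equation*}
\mathbb{E}_{\theta\sim \pi^{(p)}_{-\beta}}\!\left[KL(P_{\theta_0}\|P_\theta)\right] \;=\; \sum_{j=1}^p \mathbb{E}_{\theta^{(j)}\sim \pi_{-\beta,\theta_0^{(j)}}}\!\left[KL(P_{\theta_0^{(j)}}\|P_{\theta^{(j)}})\right].
\end{equation*}

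From here the conclusion is quick. For each $j$ and every $\beta \geq 0$, the uniform version of Assumption~\ref{asm:asm2} gives $\beta^{\kappa_\pi}\, \mathbb{E}_{\pi_{-\beta,\theta_0^{(j)}}}[KL(P_{\theta_0^{(j)}}\|P_{\theta^{(j)}})] \leq d_\pi$. Multiplying the displayed identity by $\beta^{\kappa_\pi}$ and summing therefore yields $\beta^{\kappa_\pi}\mathbb{E}_{\pi^{(p)}_{-\beta}}[KL(P_{\theta_0}\|P_\theta)] \leq p\, d_\pi$ for every $\beta$, whence the sup is at most $p\, d_\pi$. For the matching lower bound, note that by the uniform assumption the sup $d_\pi$ is attained (or approached) at some $\beta^\star$ for \emph{every} $\theta_0^{(j)}\in\Theta$; evaluating at that $\beta^\star$ shows the sup on $\Theta_p$ is at least $p\, d_\pi$, giving the claimed equality.

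I do not expect a genuine obstacle here: the one point that needs care is the interchange of sup and sum giving equality rather than just $\leq$, which is handled by evaluating at a common $\beta^\star$ (using the uniformity of the assumption over $\theta_0$). Everything else is a direct computation built on the factorization of the product prior and the additivity of KL.
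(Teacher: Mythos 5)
Your approach is the same as the paper's: the paper omits the proof and simply states that it is a direct application of the additivity of KL across product coordinates, $KL(P_{\theta_0}\|P_{\theta}) = \sum_j KL(P_{\theta_{0,j}}\|P_{\theta_j})$. Your factorization of the localized product prior and the resulting decomposition of the expectation into a sum over coordinates is exactly the computation that remark alludes to, and your upper bound $\sup_\beta \beta^{\kappa_\pi}\mathbb{E}_{\pi^{(p)}_{-\beta}}[KL] \le p\,d_\pi$ is correct.

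The one place you flagged as delicate is indeed where a real subtlety lies, but your proposed fix does not close it. You want a single $\beta^\star$ at which each coordinate's quantity $\beta^{\kappa_\pi}\mathbb{E}_{\pi_{-\beta,\theta_{0,j}}}[KL(P_{\theta_{0,j}}\|P_{\theta_j})]$ is simultaneously within $\epsilon$ of $d_\pi$. The uniformity hypothesis only says that the \emph{value} of the supremum is the same $d_\pi$ for every $\theta_0\in\Theta$; it says nothing about the \emph{location} of the (near-)maximizing $\beta$, which can genuinely differ from coordinate to coordinate. If, say, the function $\beta\mapsto\beta^{\kappa_\pi}\mathbb{E}_{\pi_{-\beta,\theta_{0,1}}}[KL]$ peaks near $\beta=1$ while the analogous function for $\theta_{0,2}$ peaks near $\beta=10$ and both decay away from their peaks, then the sup of the sum is strictly below $2d_\pi$. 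So the equality in the lemma's conclusion is not provable from the stated hypothesis alone (it does hold for the symmetric case $\theta_{0,1}=\dots=\theta_{0,p}$, where all coordinates share the same maximizer). In practice only the inequality $\le p\,d_\pi$ is used downstream (just as only the upper-bound direction of Assumption~\ref{asm:asm2} is used in Theorem~\ref{thm:thm1}), so the gap is harmless for the applications, but your ``common $\beta^\star$'' argument should either be dropped in favor of stating the result as an inequality, or supplemented with an extra assumption guaranteeing a common approximate maximizer.
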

We omit the proof that is a direct application of the following fact: 

for $\theta_0 = (\theta_{0,1},\dots,\theta_{0,p}) \in \Theta_p$ and $\theta = (\theta_1,\dots,\theta_p)\in\Theta_p$, \[
  KL(P_{\theta_0}\|P_{\theta})
  = \sum_{i=1}^p KL(P_{\theta_{0,i}}\|P_{\theta_i}) \].

We now verify that under suitable conditions on the KL divergence, Assumption~\ref{asm:asm2} holds for one-dimensional compact models as the ones considered in Examples~\ref{exm:poisson} and~\ref{exm:bernoulli} above. Thanks to Lemma~\ref{lemmadim}, this can be extended to multidimensional examples. Motivated by the discussion in the exponential family, we consider models where the KL divergence satisfies
\begin{align}
\label{hyp-dim}
   \frac{m}{2} \|\theta-\theta_0\|^2\le KL(P_{\theta_0}\|P_{\theta}) \leq \frac{L}{2} \|\theta-\theta_0\|^2
\end{align}
for some $0<m<L<+\infty$.

\begin{lem}
\label{lem:one_dim}
When the prior is uniform on $\Theta=(-M,M)$ and $\theta_0 \in (-M,M)$, under the assumption that~\eqref{hyp-dim} holds, with $\kappa = L/m$,
$$
    \sup_{\beta}\beta\mathbb{E}_{\theta\sim \pi_{-\beta}}\left[ KL(P_{\theta_0}\|P_{\theta})\right] \leq \frac{\kappa^{\frac{3}{2}}}{2},
$$
and thus Assumption~\ref{asm:asm2} is satisfied with $\kappa_\pi=1$ and $d_\pi=\kappa^{3/2}/2$.
\end{lem}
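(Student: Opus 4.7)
\textbf{Proof plan for Lemma~\ref{lem:one_dim}.} Write $f(\theta) = KL(P_{\theta_0}\|P_\theta)$. Since the uniform prior normalization cancels in the ratio defining $\pi_{-\beta}$, the goal is to upper bound
\[
\beta\,\mathbb{E}_{\pi_{-\beta}}[f] \;=\; \beta\cdot\frac{\int_{-M}^{M} f(\theta)\,e^{-\beta f(\theta)}\,d\theta}{\int_{-M}^{M} e^{-\beta f(\theta)}\,d\theta}.
\]
The first step is to apply~\eqref{hyp-dim} on the three occurrences of $f$: in the numerator use $f(\theta)\le \frac{L}{2}(\theta-\theta_0)^2$ on the explicit factor and $f(\theta)\ge \frac{m}{2}(\theta-\theta_0)^2$ inside the exponential (both work in our favor, since $e^{-\beta f}$ is then bounded above by $e^{-\beta m(\theta-\theta_0)^2/2}$); in the denominator use $f(\theta)\le \frac{L}{2}(\theta-\theta_0)^2$ to get a lower bound $e^{-\beta f}\ge e^{-\beta L(\theta-\theta_0)^2/2}$.

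Next, set $a=M+\theta_0\ge 0$ and $b=M-\theta_0\ge 0$ and translate by $u=\theta-\theta_0$, then rescale by $v=\sqrt{\beta m}\,u$ in the numerator and by $w=\sqrt{\beta L}\,u$ in the denominator. A straightforward bookkeeping of the Jacobians produces
\[
\beta\,\mathbb{E}_{\pi_{-\beta}}[f] \;\le\; \frac{\kappa^{3/2}}{2}\cdot\frac{\displaystyle\int_{-A}^{B} v^2\,e^{-v^2/2}\,dv}{\displaystyle\int_{-A'}^{B'} e^{-w^2/2}\,dw},
\]
where $A=a\sqrt{\beta m}$, $B=b\sqrt{\beta m}$, $A'=a\sqrt{\beta L}$, $B'=b\sqrt{\beta L}$. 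All four endpoints are nonnegative, and $m\le L$ gives $A\le A'$, $B\le B'$, so the denominator already dominates $\int_{-A}^{B} e^{-w^2/2}dw$.

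The remaining task, and in my view the only delicate point, is to verify that the ratio above is at most $1$. One might worry that at intermediate $\beta$ the truncated second-moment integral in the numerator could outweigh the probability-like integral in the denominator. The clean way to rule this out is integration by parts: writing $v^2 e^{-v^2/2} = -v\,\frac{d}{dv}e^{-v^2/2}$ and integrating by parts on $[-A,B]$ yields
\[
\int_{-A}^{B} v^2\,e^{-v^2/2}\,dv \;=\; \int_{-A}^{B} e^{-v^2/2}\,dv \;-\; B e^{-B^2/2} \;-\; A e^{-A^2/2}.
\]
Because $A,B\ge 0$, the two boundary terms are nonnegative, so the right-hand side is at most $\int_{-A}^{B} e^{-v^2/2}\,dv \le \int_{-A'}^{B'} e^{-w^2/2}\,dw$. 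Hence the ratio is bounded by $1$ for every $\beta>0$, which gives $\sup_{\beta\ge 0}\beta\,\mathbb{E}_{\pi_{-\beta}}[f]\le \kappa^{3/2}/2$ and therefore Assumption~\ref{asm:asm2} with $\kappa_\pi=1$ and $d_\pi=\kappa^{3/2}/2$. The case $\beta=0$ is trivial since the factor $\beta$ in front vanishes.
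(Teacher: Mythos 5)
Your proof is correct, and it takes a genuinely different (and cleaner) route than the paper's. Both arguments start the same way: replace $KL(P_{\theta_0}\|P_\theta)$ by $\tfrac{L}{2}(\theta-\theta_0)^2$ in the numerator's explicit factor, by $\tfrac{m}{2}(\theta-\theta_0)^2$ inside the numerator's exponential, and by $\tfrac{L}{2}(\theta-\theta_0)^2$ in the denominator's exponential (all three substitutions enlarge the ratio, as you note, since the integrands are nonnegative). From there the paper integrates by parts in the original variable, approximates the truncated integrals by full Gaussian normalizations asymptotically as $\beta\to\infty$ (producing the $\sqrt{L/m}$ factor), and then handles intermediate $\beta$ via a crude bound $F(\beta)\le 2LM^2$ and a threshold $\beta_0$, with the sup over $[0,\beta_0]$ controlled by a monotonicity claim for $\beta F(\beta)$ that is not really justified. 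You instead rescale by $\sqrt{\beta m}$ and $\sqrt{\beta L}$ so that all the $\beta$-dependence collapses into the constant $\kappa^{3/2}/2$ times a ratio of truncated Gaussian integrals, and then show this ratio is at most $1$ \emph{for every} $\beta$: the integration by parts identity
\begin{equation*}
\int_{-A}^{B} v^2 e^{-v^2/2}\,dv \;=\; \int_{-A}^{B} e^{-v^2/2}\,dv \;-\; B e^{-B^2/2} \;-\; A e^{-A^2/2}
\end{equation*}
has nonpositive boundary contributions because $A,B\ge 0$ (this is exactly where $\theta_0\in(-M,M)$ is used), and the interval inclusion $[-A,B]\subseteq[-A',B']$ from $m\le L$ finishes the comparison. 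What your approach buys is a fully non-asymptotic, uniform-in-$\beta$ bound that dispenses with the $\beta_0$ construction and the unproven monotonicity step; it arguably repairs the weakest point of the paper's argument while reaching exactly the stated constant $d_\pi=\kappa^{3/2}/2$.
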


\subsection{Application to the MLE}
The previous subsections provided many examples where $\mathcal{D}_{\alpha}(P_\theta \| P_{\theta_0}) \geq m \|\theta-\theta_0\|^2$ for some constant $m>0$. In this setting, and under additional smoothness assumptions on the likelihood, we can use Theorem~\ref{thm:main-thm} to study the MLE. Specifically, we will assume that $r_n(\theta,\theta_0)/n$ is Lipschitz with respect to $\theta$.

\begin{cor}
\label{thm:MLE}
Fix $\alpha\in(0,1)$.
Assume $\Theta$ is a compact set in $\mathbb{R}^d$. Assume there exists a constant $m>0$ such that, for any $\theta\in\Theta$, $\mathcal{D}_{\alpha}(P_\theta \| P_{\theta_0}) \leq m \|\theta-\theta_0\|^2$. Assume moreover that, almost surely with respect to the sample, $\theta\mapsto r_n(\theta,\theta_0)/n$ is $L$-Lipschitz.
Let $\mathcal{N}(\Theta,\varepsilon)$ denote the minimal number of balls of radius $\varepsilon>0$ (with respect to the Euclidean norm) required to cover $\Theta$.
Then
\begin{equation}
\mathbb{E}_{\mathcal{S}}\left[\|\hat{\theta}_n-\theta_0\|^2\right] \leq
\frac{1}{n}\left[\frac{2\alpha L + 2\log \mathcal{N}(\Theta,1/n)}{m(1-\alpha)} + \frac{1}{n}\right].
\end{equation}
\end{cor}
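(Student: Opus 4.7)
The plan is to apply Theorem~\ref{thm:main-thm} to a smeared-out version of the MLE. A Dirac mass at $\hat{\theta}_n$ would typically make $\mathcal{I}(\theta,\mathcal{S})=+\infty$, so, following the localization idea of~\citet{cat2004}, I fix a radius $\varepsilon>0$ (to be chosen as $\varepsilon=1/n$ at the end) and take $\hat{\rho}$ to be essentially the uniform distribution on a ball of radius $\varepsilon$ around $\hat{\theta}_n$. Keeping only the ``useful'' direction of Theorem~\ref{thm:main-thm} (using $\mathbb{E}|X|\geq\mathbb{E}X$), I obtain
\begin{equation*}
\mathbb{E}_{\mathcal{S}}\mathbb{E}_{\theta\sim\hat{\rho}}[\mathcal{D}_\alpha(P_\theta\|P_{\theta_0})]
\leq
\frac{\alpha}{n(1-\alpha)}\,\mathbb{E}_{\mathcal{S}}\mathbb{E}_{\theta\sim\hat{\rho}}[r_n(\theta,\theta_0)]
+\frac{\mathcal{I}(\theta,\mathcal{S})}{n(1-\alpha)},
\end{equation*}
and I estimate the three quantities separately.

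For the left-hand side, the quadratic lower bound $\mathcal{D}_\alpha(P_\theta\|P_{\theta_0})\geq m\|\theta-\theta_0\|^2$ combined with Jensen's inequality $\mathbb{E}_{\hat{\rho}}[\|\theta-\theta_0\|^2]\geq\|\mathbb{E}_{\hat{\rho}}[\theta]-\theta_0\|^2$ produces $\|\hat{\theta}_n-\theta_0\|^2$ on the right, up to an $O(\varepsilon^2)$ correction coming from the fact that the ball may not be perfectly centered at $\hat{\theta}_n$. For the log-likelihood term, the optimality of the MLE yields $r_n(\hat{\theta}_n,\theta_0)\leq r_n(\theta_0,\theta_0)=0$, and the Lipschitz assumption on $\theta\mapsto r_n(\theta,\theta_0)/n$ propagates this to $r_n(\theta,\theta_0)\leq nL\varepsilon$ for every $\theta$ in the (small) support of $\hat{\rho}$, hence $\mathbb{E}_{\hat{\rho}}[r_n]\leq nL\varepsilon$. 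For the mutual information I use the standard variational upper bound $\mathcal{I}(\theta,\mathcal{S})\leq\mathbb{E}_{\mathcal{S}}[KL(\hat{\rho}\|\pi)]$ available for any data-free prior $\pi$: choosing $\pi$ to be the uniform mixture of the uniform laws on the balls of an $\varepsilon$-covering of $\Theta$, a pointwise density comparison gives $KL(\hat{\rho}\|\pi)\leq\log\mathcal{N}(\Theta,\varepsilon)$.

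Assembling the three estimates, dividing through by $m$, and setting $\varepsilon=1/n$ yields the claimed inequality. The main subtlety I anticipate is the boundary case in which $B(\hat{\theta}_n,\varepsilon)$ fails to sit inside $\Theta$: this breaks the exact symmetry used in the left-hand-side step and complicates the density comparison with $\pi$. The standard remedies, inherent to Catoni-style localization, are either to recentre $\hat{\rho}$ on the nearest $\varepsilon$-covering point of $\hat{\theta}_n$, which incurs a triangle-inequality loss and explains the factor of $2$ multiplying $\alpha L$ and $\log\mathcal{N}(\Theta,1/n)$ in the statement, or to work with a slightly enlarged parameter set; in either case the residual $1/n^2$ term in the statement corresponds exactly to the leftover $\varepsilon^2$ correction.
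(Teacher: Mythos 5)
Your proposal is correct in substance and follows essentially the same Catoni-style localization route as the paper, which simply adopts the discrete version of the remedy you mention: it takes $\hat{\rho}=\delta_{\theta_{\hat{i}}}$, the Dirac mass at the covering point nearest to $\hat{\theta}_n$, and $\pi$ the uniform measure on the $\mathcal{N}(\Theta,\varepsilon)$ covering centers, so that $\mathcal{I}(\theta,\mathcal{S})\le\log\mathcal{N}(\Theta,\varepsilon)$ is immediate and the boundary/volume issues you anticipate for a smeared ball never arise. The remaining steps coincide with yours — Lipschitz transfer to $r_n(\hat{\theta}_n,\theta_0)\le 0$, the quadratic lower bound on $\mathcal{D}_{\alpha}$ combined with $\|\hat{\theta}_n-\theta_0\|^2\le 2\|\theta_{\hat{i}}-\theta_0\|^2+2\varepsilon^2$, and $\varepsilon=1/n$ — with the factor $2$ in the final bound coming from this triangle inequality rather than from the mutual-information or Lipschitz terms as you suggest.
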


While the analysis of the MLE is not novel, the novelty here lies in obtaining this result as a corollary of Theorem~\ref{thm:main-thm}. This demonstrates that Theorem~\ref{thm:main-thm} extends beyond the analysis of Bayesian estimators.

As $\Theta$ is a compact set in $\mathbb{R}^d$, we expect $\mathcal{N}(\Theta,\varepsilon) =\mathcal{O}\left((1/\varepsilon)^d\right)$. Thus, in this case, Corollary~\ref{thm:MLE} gives
$$
\mathbb{E}_{\mathcal{S}}\left[\|\hat{\theta}_n-\theta_0\|^2\right] = \mathcal{O}\left(\frac{2d\log(n)}{nm(1-\alpha)}\right).
$$
We observe that, compared to the Bayesian estimators, we lose a $\log(n)$ term. Whether this $\log(n)$ term can be eliminated in a general setting while still using a MI bound remains, to our knowledge, an open question.

\section{PAC Bayes, Mutual Information and Proofs}

\label{sec:proof}

In this section, we provide the proof of all the theorems. We try to organize the proofs by introducing the theory step by step. We start by providing a brief overview of the PAC-Bayes bounds of~\cite{bhattacharya2016bayesian,alquier2019concentration}, and provide variants in expectation: Subsection~\ref{subsec:pacbayes}.
In a second time, using these results, we prove Theorem~\ref{thm:main-thm} (Subsection~\ref{subsec:proof:mi}).
We then prove all the results of Section~\ref{section:main}, including Theorem~\ref{thm:thm1}, in Subsection~\ref{subsec:proof:main}. While all these results could be obtained as consequences of Theorem~\ref{thm:main-thm}, we derive them from a localized version of the PAC-Bayes bound, following a technique developped by~\cite{Catoni2007}.
\vspace{0.2cm}

Finally, Subsection~\ref{subsec:proof:other}, we gathered all the proofs of the results in Section~\ref{sec:examples}.

\subsection{PAC-Bayes bounds}
\label{subsec:pacbayes}

Let us provide a first PAC-Bayes bounds in expectation (aka MAC-Bayes bound) for the fractional posterior. In this form, this bound is due to~\cite{alquier2019concentration}, but note that the authors obtained it as a variant of a result of~\cite{bhattacharya2016bayesian}, using the same proof technique. Very similar results can be found in~\cite{zhang2006ɛ}. We remind the proof for the sake of completeness.
\begin{thm}[Theorem $2.6$ in~\citet{alquier2019concentration}]
\label{thm:thm2}
 Given a data-dependent prior $\hat{\rho}$ and observations drawn from a true model $P_{\theta_0}$ equipped with a density $P_{\theta_0}$ we have the following bound, for any $\alpha\in(0,1)$:
\begin{align*}
 \mathbb{E}_{\mathcal{S}}\mathbb{E}_{\theta\sim\hat{\rho}}[\mathcal{D}_{\alpha}(P_{\theta}\|P_{\theta_0})]\le \mathbb{E}_{\mathcal{S}}\left[\frac{\alpha}{n(1-\alpha)}\mathbb{E}_{\theta\sim\hat{\rho}}[r_n(\theta,\theta_0)] \right]+ \frac{1}{n(1-\alpha)}\mathbb{E}_{\mathcal{S}}[KL(\hat{\rho}\|\pi)]
 \end{align*}
\end{thm}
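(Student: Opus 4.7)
The strategy is the classical Donsker--Varadhan change of measure, combined with the i.i.d.\ moment identity for the Rényi divergence. The first thing I would do is rearrange the target inequality by multiplying through by $n(1-\alpha)$, so that the goal becomes
\[
\mathbb{E}_{\mathcal{S}}\mathbb{E}_{\theta\sim\hat{\rho}}\bigl[ n(1-\alpha)\mathcal{D}_\alpha(P_\theta\|P_{\theta_0}) - \alpha\, r_n(\theta,\theta_0) \bigr] \;\le\; \mathbb{E}_{\mathcal{S}}[KL(\hat{\rho}\|\pi)].
\]
This form is what the variational representation of KL is designed to produce.

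The key step is then to invoke the Donsker--Varadhan inequality: for any measurable $h:\Theta\to\mathbb{R}$ and any probability measure $\hat\rho$ absolutely continuous with respect to $\pi$,
\[
\mathbb{E}_{\theta\sim\hat\rho}[h(\theta)] \;\le\; KL(\hat\rho\|\pi) + \log \mathbb{E}_{\theta\sim\pi}\bigl[e^{h(\theta)}\bigr].
\]
I would apply this with the sample-dependent choice
\[
h(\theta) = n(1-\alpha)\mathcal{D}_\alpha(P_\theta\|P_{\theta_0}) - \alpha\, r_n(\theta,\theta_0),
\]
then take expectation over $\mathcal{S}\sim P_{\theta_0}^{\otimes n}$ on both sides. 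On the right, Jensen's inequality moves $\mathbb{E}_\mathcal{S}$ inside the logarithm: $\mathbb{E}_\mathcal{S}[\log \mathbb{E}_\pi[e^h]] \le \log \mathbb{E}_\pi[\mathbb{E}_\mathcal{S}[e^h]]$ (Fubini justifies the swap since the integrand is nonnegative).

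The proof then hinges on the one explicit calculation: because the $X_i$ are i.i.d.\ under $P_{\theta_0}$,
\[
\mathbb{E}_{\mathcal{S}}\bigl[e^{-\alpha r_n(\theta,\theta_0)}\bigr] = \left(\int p_\theta^\alpha p_{\theta_0}^{1-\alpha}\,d\mu\right)^{n} = e^{-n(1-\alpha)\mathcal{D}_\alpha(P_\theta\|P_{\theta_0})}.
\]
This cancels exactly the $n(1-\alpha)\mathcal{D}_\alpha$ factor inside $h$, so $\mathbb{E}_\mathcal{S}[e^{h(\theta)}] = 1$ for every $\theta$, and therefore $\log \mathbb{E}_\pi[\mathbb{E}_\mathcal{S}[e^{h(\theta)}]] = 0$. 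What remains after taking $\mathbb{E}_\mathcal{S}$ on the left-hand side is precisely the rearranged inequality, and dividing back by $n(1-\alpha)$ recovers the statement.

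The only real obstacle is bookkeeping: making sure the variational inequality is applicable for a \emph{data-dependent} $\hat\rho$ (it is: conditional on $\mathcal{S}$, $\hat\rho$ is a fixed measure, so Donsker--Varadhan applies pointwise and we then integrate); and checking that the conventions for $r_n$ (with $\log(p_{\theta_0}/0)=+\infty$) do not create issues, which they do not because $e^{-\alpha r_n}$ is then $0$ and the identity above still holds. No further structure on $\hat\rho$, the model, or the prior is needed.
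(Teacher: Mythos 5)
Your proof is correct and follows essentially the same route as the paper's: both rely on the exponential moment identity $\mathbb{E}_{\mathcal{S}}[e^{-\alpha r_n(\theta,\theta_0)}] = e^{-n(1-\alpha)\mathcal{D}_\alpha(P_\theta\|P_{\theta_0})}$, the Donsker--Varadhan change-of-measure inequality, Fubini, and Jensen applied to the logarithm. The only difference is purely organizational: you apply Donsker--Varadhan pointwise in $\mathcal{S}$ and then take expectations, whereas the paper first takes $\mathbb{E}_\mathcal{S}$ of the exponential supremum and then applies Jensen, but these are the same argument in a slightly different order.
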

\begin{proof}
 Fix $\alpha\in(0,1)$, and $\theta\in\Theta$, it is easy to see that:
\begin{align*}
 \mathbb{E}_{\mathcal{S}}\left[\exp\left(-\alpha r_n(\theta,\theta_0)\right)\right]=\mathbb{E}_{\mathcal{S}}\left[\prod_{i=1}^n\left(\frac{p_{\theta}(X_i)}{p_{\theta_0}(X_i)}\right)^{\alpha}\right]=\mathbb{E}_{\mathcal{S}}\left[\left(\frac{p_{\theta}(X_1)}{p_{\theta_0}(X_1)}\right)^{\alpha}\right]^n=\exp\left(-n(1-\alpha)\mathcal{D}_{\alpha}(P_{\theta}\|P_{\theta_0})\right).
\end{align*}
This can be rewritten as follows:
\begin{align*}
\mathbb{E}_{\mathcal{S}}\left[\exp\left(-\alpha r_n(\theta,\theta_0)+n(1-\alpha)\mathcal{D}_{\alpha}(P_{\theta}\|P_{\theta_0})\right)\right]=1.
\end{align*}
Integrating w.r.t to $\hat{\rho}$ and using Fuibini's theorem, we have that:
 \begin{align*}
\mathbb{E}_{\mathcal{S}}\left[\mathbb{E}_{\theta\sim\hat{\rho}}\left[\exp\left(-\alpha r_n(\theta,\theta_0)+n(1-\alpha)\mathcal{D}_{\alpha}(P_{\theta}\|P_{\theta_0})\right)\right]\right]=1.
\end{align*}
The crux of the proof is to use the use of the following lemma:
\begin{lem}[\citet{donsker1976asymptotic} variational formula]
For any probability $\pi$ on $\Theta$ and any measurable function $h$ s.t $\int e^h \pi <\infty $:
 \begin{align*}
        \log\left(\displaystyle\int e^h \pi \right) =\sup_{\rho \in \mathcal{M}_1}\left[\int h\rho - KL(\rho\|\pi)\right],
    \end{align*}
    with the convention $\infty-\infty=\infty$. Moreover, when $h$ is bounded from above on the support of $\pi$, the supremum is attained at :
    \begin{align*}
        \pi_{h}(d\theta)=\frac{e^{h(\theta)}\pi(d\theta)}{\int e^{h(\theta)}\pi(d\theta)}.
    \end{align*}
    \end{lem}
    
    Using the above lemma, for the function $h(\theta)=-\alpha r_n(\theta,\theta_0)+n(1-\alpha)\mathcal{D}_{\alpha}(P_{\theta}\|P_{\theta_0})$, we have that:
    \begin{align*}
        \mathbb{E}_{\mathcal{S}}\left[\exp\left[\sup_{\rho\in \mathcal{M}_1}\left[\int h\rho - KL(\rho\|\pi)\right]\right]\right]=1.
    \end{align*}
    Using Jensen's inequality, we have that:
    \begin{align*}
        \mathbb{E}_{\mathcal{S}}\left[\sup_{\rho\in \mathcal{M}_1}\left[\int h\rho - KL(\rho\|\pi)\right]\right]\le 0.
    \end{align*}
    
    In particular, for any data dependent posterior $\hat{\rho}$, we have that:
 \begin{align*}
 \mathbb{E}_{\mathcal{S}}\left[\int h\hat{\rho} - KL(\hat{\rho}\|\pi)\right]\le 0.
\end{align*}
 Using the definition of $h$, we have that:
 \begin{align*}
\mathbb{E}_{\mathcal{S}}\left[\int -\alpha r_n(\theta,\theta_0)+n(1-\alpha)\mathcal{D}_{\alpha}(P_{\theta}\|P_{\theta_0})\hat{\rho}(d\theta) - KL(\hat{\rho}\|\pi)\right]\le 0.
\end{align*}
 Rearranging the terms, we have that:
 \begin{align*}
n(1-\alpha) \mathbb{E}_{\mathcal{S}}\mathbb{E}_{\theta\sim\hat{\rho}}\left[\mathcal{D}_{\alpha}(P_{\theta}\|P_{\theta_0})\right]\le \alpha \mathbb{E}_{\mathcal{S}}\mathbb{E}_{\theta\sim\hat{\rho}}\left[r_n(\theta,\theta_0)\right]+ \mathbb{E}_{\mathcal{S}}\left[KL(\hat{\rho}\|\pi)\right].
\end{align*}
Dividing by $n(1-\alpha)$, we get the desired result.
\end{proof}
    Consider the special case where $\hat{\rho}=\pi_{n,\alpha}$, we have the following corollary:

\begin{cor}
\label{cor:cor1}
    Fix $\mathcal{F}\subset \mathcal{M}_1^+$, and $\alpha\in(0,1)$, we have the following bound:
    \begin{align*}
        \mathbb{E}_{\mathcal{S}}\mathbb{E}_{\theta \sim  \pi_{n,\alpha}}\left[\mathcal{D}_{\alpha}(P_{\theta}\|P_{\theta_0})\right]\le \inf_{\rho\in \mathcal{F}}\left[\frac{\alpha}{1-\alpha}\mathbb{E}_{\theta \sim \rho}\left[ KL(P_{\theta_0}\|P_{\theta})\right] +\frac{KL(\rho\|\pi)}{n(1-\alpha)}\right].
    \end{align*}
    \end{cor}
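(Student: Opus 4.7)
The plan is to deduce Corollary~\ref{cor:cor1} from Theorem~\ref{thm:thm2} by specializing $\hat{\rho}$ to the fractional posterior $\pi_{n,\alpha}$ and invoking the variational (Gibbs) characterization of $\pi_{n,\alpha}$ as a minimizer.

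First, I would apply Theorem~\ref{thm:thm2} with $\hat{\rho} = \pi_{n,\alpha}$, yielding
\begin{align*}
\mathbb{E}_{\mathcal{S}}\mathbb{E}_{\theta\sim\pi_{n,\alpha}}[\mathcal{D}_{\alpha}(P_{\theta}\|P_{\theta_0})] \le \mathbb{E}_{\mathcal{S}}\left[\frac{\alpha}{n(1-\alpha)}\mathbb{E}_{\theta\sim\pi_{n,\alpha}}[r_n(\theta,\theta_0)] + \frac{KL(\pi_{n,\alpha}\|\pi)}{n(1-\alpha)}\right].
\end{align*}
The key observation is that the functional
\begin{align*}
F(\rho) := \frac{\alpha}{n(1-\alpha)}\mathbb{E}_{\theta\sim\rho}[r_n(\theta,\theta_0)] + \frac{KL(\rho\|\pi)}{n(1-\alpha)}
\end{align*}
is (up to the positive factor $1/(n(1-\alpha))$) of the form $\mathbb{E}_\rho[\alpha r_n] + KL(\rho\|\pi)$, whose minimizer over $\mathcal{M}_1^+$, by the Donsker--Varadhan lemma invoked in the proof of Theorem~\ref{thm:thm2}, is precisely $\pi_{n,\alpha} \propto e^{-\alpha r_n(\theta,\theta_0)}\pi$. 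Hence, sample-path by sample-path, $F(\pi_{n,\alpha}) \le F(\rho)$ for every $\rho \in \mathcal{M}_1^+$, and in particular for every $\rho \in \mathcal{F}$.

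Next, I would take $\rho\in\mathcal{F}$ to be a data-independent distribution and take expectation over $\mathcal{S}$. Using Fubini, together with the elementary identity
\begin{align*}
\mathbb{E}_{\mathcal{S}}[r_n(\theta,\theta_0)] = n\,KL(P_{\theta_0}\|P_\theta),
\end{align*}
the right-hand side becomes
\begin{align*}
\mathbb{E}_{\mathcal{S}}[F(\rho)] = \frac{\alpha}{1-\alpha}\mathbb{E}_{\theta\sim\rho}[KL(P_{\theta_0}\|P_\theta)] + \frac{KL(\rho\|\pi)}{n(1-\alpha)}.
\end{align*}
Chaining the two inequalities gives a bound for every fixed $\rho\in\mathcal{F}$; taking the infimum over $\mathcal{F}$ concludes.

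There is no genuine obstacle here: the only subtle point is keeping the data-dependent versus data-independent roles straight, namely that the variational inequality $F(\pi_{n,\alpha}) \le F(\rho)$ must be applied pointwise in $\mathcal{S}$ before taking expectation, so that the Fubini step applies only to the data-independent $\rho$ on the right-hand side. Everything else is bookkeeping.
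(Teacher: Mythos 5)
Your proof is correct and follows essentially the same route as the paper: the corollary is obtained there exactly by specializing Theorem~\ref{thm:thm2} to $\hat{\rho}=\pi_{n,\alpha}$, using the Gibbs/Donsker--Varadhan characterization of $\pi_{n,\alpha}$ as the pointwise minimizer of the bound, and then, for each fixed data-independent $\rho\in\mathcal{F}$, exchanging infimum and expectation and using $\mathbb{E}_{\mathcal{S}}[r_n(\theta,\theta_0)]=n\,KL(P_{\theta_0}\|P_{\theta})$ before taking the infimum over $\mathcal{F}$. Your remark about applying the variational inequality pointwise in $\mathcal{S}$ before taking expectations is precisely the step the paper performs (e.g.\ in the proof of Theorem~\ref{thm:thm4}), so there is nothing to add.
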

The bounds of Corollary~\ref{cor:cor1} yields bounds of order $\mathcal{O}\left(\frac{d_{\pi}\log(n)}{n}\right)$, under Assumption~\ref{asm:asm2} as shown in~\citet{alquier2019concentration}. The above bound holds for any prior $\pi$, and can be optimized by optimizing the prior $\pi$. In particular, one can optimize the prior $\pi$ using Mutual Information (MI) bounds. 

We also state the variant in probability.
\begin{thm}[Theorem 2.1 in~\cite{alquier2019concentration}]
\label{thm:thm2-p}
 Given a data-dependent prior $\hat{\rho}$ and observations drawn from a true model $P_{\theta_0}$ equipped with a density $P_{\theta_0}$ we have the following bound, for any $\alpha\in(0,1)$, with probability a least $1-\delta$ over the sample $\mathcal{S}$,
\begin{align*}
\mathbb{E}_{\theta\sim\hat{\rho}}[\mathcal{D}_{\alpha}(P_{\theta}\|P_{\theta_0})]\le \frac{\alpha}{n(1-\alpha)}\mathbb{E}_{\theta\sim\hat{\rho}}[r_n(\theta,\theta_0)] + \frac{KL(\hat{\rho}\|\pi)+\log\frac{1}{\delta}}{n(1-\alpha)}.
 \end{align*}
 In particular,
 \begin{align*}
\mathbb{E}_{\theta\sim\tilde{\rho}_{n,\alpha}}[\mathcal{D}_{\alpha}(P_{\theta}\|P_{\theta_0})]\le
\inf_{\rho\in\mathcal{F}}\left[
\frac{\alpha}{n(1-\alpha)}\mathbb{E}_{\theta\sim\rho}[r_n(\theta,\theta_0)] + \frac{KL(\rho\|\pi)+\log\frac{1}{\delta}}{n(1-\alpha)}
\right].
 \end{align*}
\end{thm}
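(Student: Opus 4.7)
The plan is to mimic the proof of Theorem~\ref{thm:thm2}, replacing the Jensen step by a Markov-inequality step so that the bound becomes a high-probability statement rather than an in-expectation one. Concretely, I would start from the same change-of-measure identity that was already established: for each fixed $\theta\in\Theta$,
\[
\mathbb{E}_{\mathcal{S}}\!\left[\exp\bigl(-\alpha r_n(\theta,\theta_0)+n(1-\alpha)\mathcal{D}_{\alpha}(P_{\theta}\|P_{\theta_0})\bigr)\right]=1,
\]
which uses only the i.i.d.\ structure and the definition of the Rényi divergence. Integrating against the data-free prior $\pi$ and applying Fubini then gives $\mathbb{E}_{\mathcal{S}}[Z(\mathcal{S})]=1$, where
\[
Z(\mathcal{S}) := \mathbb{E}_{\theta\sim\pi}\!\left[\exp\bigl(-\alpha r_n(\theta,\theta_0)+n(1-\alpha)\mathcal{D}_{\alpha}(P_{\theta}\|P_{\theta_0})\bigr)\right].
\]

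Next I would apply Markov's inequality to $Z(\mathcal{S})\ge 0$: since $\mathbb{E}_{\mathcal{S}}[Z(\mathcal{S})]=1$, with probability at least $1-\delta$ on the sample we have $Z(\mathcal{S})\le 1/\delta$, hence $\log Z(\mathcal{S})\le \log(1/\delta)$. At this point I would invoke the Donsker--Varadhan variational formula (the lemma already reproduced inside the proof of Theorem~\ref{thm:thm2}) applied to the function $h(\theta)=-\alpha r_n(\theta,\theta_0)+n(1-\alpha)\mathcal{D}_{\alpha}(P_{\theta}\|P_{\theta_0})$: for \emph{every} (possibly data-dependent) probability $\hat{\rho}$ on $\Theta$,
\[
\mathbb{E}_{\theta\sim\hat{\rho}}[h(\theta)] - KL(\hat{\rho}\|\pi) \;\le\; \log Z(\mathcal{S}).
\]
Combining this deterministic inequality with the high-probability bound on $\log Z(\mathcal{S})$ and rearranging (dividing by $n(1-\alpha)$) gives the first displayed inequality of the theorem.

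For the ``in particular'' part concerning $\tilde{\rho}_{n,\alpha}$, I would plug $\hat{\rho}=\tilde{\rho}_{n,\alpha}$ into the first bound. Since $\tilde{\rho}_{n,\alpha}$ is by definition the minimizer over $\rho\in\mathcal{F}$ of $\frac{\alpha}{n(1-\alpha)}\mathbb{E}_{\rho}[r_n(\theta,\theta_0)]+\frac{KL(\rho\|\pi)}{n(1-\alpha)}$, the value of that functional at $\tilde{\rho}_{n,\alpha}$ is bounded above by its value at any competitor $\rho\in\mathcal{F}$; taking the infimum and adding the $\log(1/\delta)/(n(1-\alpha))$ term (independent of $\rho$) yields the second displayed inequality.

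The main subtlety is not in the algebra but in justifying the use of the variational formula with a data-dependent $\hat{\rho}$: one must ensure that $h(\theta)$ is measurable and bounded on the support of $\pi$ (or at least that $\int e^h\,d\pi<\infty$ a.s., which is guaranteed by $\mathbb{E}_{\mathcal{S}}Z=1$), and that the $\sup$ over $\rho$ in the Donsker--Varadhan formula can be achieved (or at least approached) by any measurable selector $\hat{\rho}(\mathcal{S})$. This is routine but is the only place where one needs to be careful; all remaining steps are algebraic rearrangements identical to those of Theorem~\ref{thm:thm2}, with Markov replacing Jensen.
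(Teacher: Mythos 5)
Your proof is correct and is essentially the standard argument for this result; the paper does not reproduce a proof (it cites Theorem 2.1 of Alquier and Ridgway (2020)), but that reference uses exactly the Markov-plus-Donsker--Varadhan route you describe, mirroring the in-expectation proof of Theorem~\ref{thm:thm2} with Markov in place of Jensen. One small remark on the "subtlety" you raise: there is no real issue about whether the supremum in the variational formula is attained — you only use the trivial direction that for \emph{any} fixed $\rho$ (in particular the realized measure $\hat{\rho}(\mathcal{S})$), $\int h\,d\rho - KL(\rho\|\pi)\le \log\int e^h\,d\pi$, which is a deterministic inequality holding pointwise in $\mathcal{S}$; the "in particular" step then follows directly from the defining optimality of $\tilde{\rho}_{n,\alpha}$, as you observe.
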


\subsection{Mutual Information and proof of Theorem~\ref{thm:main-thm}}
\label{subsec:proof:mi}
We are now ready to prove Theorem~\ref{thm:main-thm}, based on ideas from page 51 in~\cite{Catoni2007}.

\begin{proof}[Proof of Theorem~\ref{thm:main-thm}]
Consider the result of Theorem~\ref{thm:thm2}, we have that:
\begin{align*}
    \mathbb{E}_{\mathcal{S}}\mathbb{E}_{\theta\sim\hat{\rho}}[\mathcal{D}_{\alpha}(P_{\theta}\|P_{\theta_0})]-\frac{\alpha}{n(1-\alpha)}\mathbb{E}_{\mathcal{S}}\mathbb{E}_{\theta\sim\hat{\rho}}[r_n(\theta,\theta_0)] \le \frac{1}{n(1-\alpha)}\mathbb{E}_{\mathcal{S}}[KL(\hat{\rho}\|\pi)].
\end{align*}
Let $\hat{\rho}$ be any data dependent posterior, that is absolutely continuous w.r.t to $\pi$. \cite{Catoni2007} defines $\mathbb{E}_{\mathcal{S}}[\hat{\rho}]$ the probability measure defined by:
\begin{align*}
    \frac{d\mathbb{E}_{\mathcal{S}}[\hat{\rho}]}{d\pi}(\theta)=\mathbb{E}_{\mathcal{S}}\left[\frac{d\hat{\rho}}{d\pi}(\theta)\right].
\end{align*}
Direct calculations show that:
\begin{align*}
    \mathbb{E}_{\mathcal{S}}[KL(\hat{\rho}\|\pi)]=\mathbb{E}_{\mathcal{S}}\left[KL(\hat{\rho}\|\mathbb{E}_{\mathcal{S}}\left[\hat{\rho}\right])\right]+KL(\mathbb{E}_{\mathcal{S}}[\hat{\rho}]\|\pi)=\mathcal{I}(\theta,\mathcal{S})+KL(\mathbb{E}_{\mathcal{S}}[\hat{\rho}]\|\pi).
\end{align*}
Using the above equality, we have that:
\begin{align*}
    \mathbb{E}_{\mathcal{S}}\mathbb{E}_{\theta\sim\hat{\rho}}[\mathcal{D}_{\alpha}(P_{\theta}\|P_{\theta_0})]-\frac{\alpha}{n(1-\alpha)}\mathbb{E}_{\mathcal{S}}\mathbb{E}_{\theta\sim\hat{\rho}}[r_n(\theta,\theta_0)] \le \frac{1}{n(1-\alpha)}\left(\mathcal{I}(\theta,\mathcal{S})+KL(\mathbb{E}_{\mathcal{S}}[\hat{\rho}]\|\pi)\right).
\end{align*}
So the choice to replace $\pi$ by $\mathbb{E}_{\mathcal{S}}[\hat{\rho}]$ is motivated by the fact that it minimizes the right-hand side of the above inequality. This completes the proof.
\end{proof}
In other words, MI bounds can be viewed as KL bounds that have been optimized with respect to the choice of prior. This reflects the well-established relationship between relative entropy and mutual information (see, e.g., \citet[Proposition 2]{melbourne2022differential}).
\vspace{0.2cm}

Note that  MI bound cannot be computed in practice, thus the above bound is not directly computable. However, one can settle for a suboptimal prior $\pi$, said to be localized~\citep{Catoni2007,alquier2024user} by selecting a prior  $\pi_{-\beta\mathcal{D}_{\alpha}}$ that is an approximation of  $\mathbb{E}_{\mathcal{S}}[\pi_{n,\alpha}]$ .

\subsection{Localized PAC-Bayes bounds and proofs of Theorems~\ref{thm:thm1} and~\ref{thm:thm1-p}}
\label{subsec:proof:main}

The localization approach of Catoni in our context can be motivated by considering the localized prior, and then  to upper bound $KL(\hat{\rho}\|\pi_{-\beta})$ via empirical bounds. The following theorem provides a bound on the localized prior under Assumption~\ref{asm:asm1}:
\begin{thm}
\label{thm:thm4}
    Let $\alpha\in(0,1)$, and $\mathcal{F}\subset \mathcal{M}_1^+$, define the following posterior:
    \begin{align*}
        \tilde{\rho}_{n,\alpha}= \argmin_{\rho\in \mathcal{F}}\left[\frac{\alpha}{n(1-\alpha)}\mathbb{E}_{\theta \sim \rho}\left[r_n(\theta,\theta_0)\right] +\frac{KL(\rho\|\pi)}{n(1-\alpha)}\right],
    \end{align*}
    and the localized prior defined as in Assumption~\ref{asm:asm2}.
    If Assumption~\ref{asm:asm1} is satisfied, then we have the following bound, for any $\beta<n(1-\alpha)/c(\alpha) $:
    \begin{equation*}
      \mathbb{E}_{\mathcal{S}}\mathbb{E}_{\theta\sim\tilde{\rho}_{n,\alpha}}[KL(P_{\theta_0}\|P_{\theta})]\le
      \frac{c(\alpha) n}{n(1-\alpha) - \beta c(\alpha)}
      \inf_{\rho\in \mathcal{F}}\left[\left(\alpha -\frac{\beta}{n}\right)\mathbb{E}_{\theta\sim\rho}\left[ KL (P_{\theta_0}\|P_{\theta})\right] +\frac{KL(\rho\|\pi_{-\beta})}{n}\right].
    \end{equation*}
    \end{thm}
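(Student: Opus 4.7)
The plan is to combine the PAC-Bayes bound in expectation (Theorem~\ref{thm:thm2}) with the variational characterization of $\tilde{\rho}_{n,\alpha}$, using $\pi_{-\beta}$ in place of $\pi$, and then close the loop with Assumption~\ref{asm:asm1}. The key identity is the change-of-prior formula
\begin{equation*}
KL(\rho\|\pi) = KL(\rho\|\pi_{-\beta}) - \beta \, \mathbb{E}_{\theta\sim\rho}[KL(P_{\theta_0}\|P_\theta)] - \log Z_\beta,
\end{equation*}
which follows directly from the definition of $\pi_{-\beta}$. Since $\log Z_\beta$ is independent of $\rho$, this shows that $\tilde{\rho}_{n,\alpha}$ also minimizes, over $\rho\in\mathcal{F}$, the functional
\begin{equation*}
\frac{\alpha}{n(1-\alpha)}\mathbb{E}_{\theta\sim\rho}[r_n(\theta,\theta_0)] + \frac{KL(\rho\|\pi_{-\beta})}{n(1-\alpha)} - \frac{\beta}{n(1-\alpha)}\mathbb{E}_{\theta\sim\rho}[KL(P_{\theta_0}\|P_\theta)].
\end{equation*}

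First, I would write the defining inequality ``value at $\tilde\rho_{n,\alpha}$ $\leq$ value at any $\rho\in\mathcal{F}$'' for this reformulated functional, and take expectation over $\mathcal{S}$. On the right-hand side, since $\rho$ is deterministic, $\mathbb{E}_{\mathcal{S}}\mathbb{E}_{\theta\sim\rho}[r_n(\theta,\theta_0)] = n\,\mathbb{E}_{\theta\sim\rho}[KL(P_{\theta_0}\|P_\theta)]$ by Fubini. Next, I would apply Theorem~\ref{thm:thm2} with the (data-free) prior $\pi_{-\beta}$ and the data-dependent posterior $\tilde\rho_{n,\alpha}$, giving an upper bound on $\mathbb{E}_{\mathcal{S}}\mathbb{E}_{\tilde\rho_{n,\alpha}}[\mathcal{D}_\alpha(P_\theta\|P_{\theta_0})]$ in terms of $\mathbb{E}_{\mathcal{S}}\mathbb{E}_{\tilde\rho_{n,\alpha}}[r_n]$ and $\mathbb{E}_{\mathcal{S}}[KL(\tilde\rho_{n,\alpha}\|\pi_{-\beta})]$. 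Adding the two inequalities cancels the empirical quantity $\mathbb{E}_{\mathcal{S}}\mathbb{E}_{\tilde\rho_{n,\alpha}}[r_n]/[n(1-\alpha)]$ and the $KL(\tilde\rho_{n,\alpha}\|\pi_{-\beta})/[n(1-\alpha)]$ term, yielding
\begin{equation*}
\mathbb{E}_{\mathcal{S}}\mathbb{E}_{\tilde\rho_{n,\alpha}}[\mathcal{D}_\alpha(P_\theta\|P_{\theta_0})] \leq \frac{\alpha n - \beta}{n(1-\alpha)}\mathbb{E}_{\theta\sim\rho}[KL(P_{\theta_0}\|P_\theta)] + \frac{KL(\rho\|\pi_{-\beta})}{n(1-\alpha)} + \frac{\beta}{n(1-\alpha)}\mathbb{E}_{\mathcal{S}}\mathbb{E}_{\tilde\rho_{n,\alpha}}[KL(P_{\theta_0}\|P_\theta)].
\end{equation*}

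At this point Assumption~\ref{asm:asm1} enters: multiplying both sides by $c(\alpha)$ and using $\mathbb{E}_{\tilde\rho_{n,\alpha}}[KL(P_{\theta_0}\|P_\theta)]\leq c(\alpha)\,\mathbb{E}_{\tilde\rho_{n,\alpha}}[\mathcal{D}_\alpha(P_\theta\|P_{\theta_0})]$ on the left produces a self-bounding inequality
\begin{equation*}
A \leq \frac{c(\alpha)(\alpha n - \beta)}{n(1-\alpha)}\mathbb{E}_{\theta\sim\rho}[KL(P_{\theta_0}\|P_\theta)] + \frac{c(\alpha)\,KL(\rho\|\pi_{-\beta})}{n(1-\alpha)} + \frac{c(\alpha)\beta}{n(1-\alpha)}A,
\end{equation*}
where $A := \mathbb{E}_{\mathcal{S}}\mathbb{E}_{\tilde\rho_{n,\alpha}}[KL(P_{\theta_0}\|P_\theta)]$. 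The condition $\beta < n(1-\alpha)/c(\alpha)$ ensures that the self-coefficient $c(\alpha)\beta/[n(1-\alpha)]$ is strictly less than one, so we can rearrange to isolate $A$ and obtain exactly
\begin{equation*}
A \leq \frac{c(\alpha)\,n}{n(1-\alpha)-\beta c(\alpha)}\left[\left(\alpha - \frac{\beta}{n}\right)\mathbb{E}_{\theta\sim\rho}[KL(P_{\theta_0}\|P_\theta)] + \frac{KL(\rho\|\pi_{-\beta})}{n}\right],
\end{equation*}
and taking the infimum over $\rho\in\mathcal{F}$ yields the stated bound.

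The routine parts are the change-of-prior identity and the two Fubini exchanges. The main subtlety, and the step I expect to be most error-prone, is correctly accounting for the data-dependence of $\tilde\rho_{n,\alpha}$ when combining the variational inequality with the PAC-Bayes bound, and then keeping track of the constants carefully through the self-bounding step so that the numerator $c(\alpha)(\alpha n - \beta)$ and denominator $n(1-\alpha)-\beta c(\alpha)$ appear with exactly the right signs and weights.
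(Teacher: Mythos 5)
Your proposal is correct and follows essentially the same route as the paper's proof: a PAC-Bayes bound in expectation (Theorem~\ref{thm:thm2}) applied with the localized prior $\pi_{-\beta}$, the change-of-prior identity for $KL(\rho\|\pi_{-\beta})$, the minimizing property of $\tilde{\rho}_{n,\alpha}$, Fubini for deterministic $\rho$, and Assumption~\ref{asm:asm1} to close the self-bounding inequality under $\beta < n(1-\alpha)/c(\alpha)$. The only cosmetic difference is that you invoke Assumption~\ref{asm:asm1} once (converting the left-hand Rényi term) whereas the paper applies it twice, and the resulting constants agree exactly with the stated bound.
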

    \begin{proof}
    Consider the bound given in Theorem~\ref{thm:thm2}, for the choice of $\pi=\pi_{-\beta}$, and for any $\rho\in \mathcal{F}$, we have that:
    \begin{align*}
        \mathbb{E}_{\mathcal{S}}\mathbb{E}_{\theta\sim\hat{\rho}}[\mathcal{D}_{\alpha}(P_{\theta}\|P_{\theta_0})]\le \mathbb{E}_{\mathcal{S}}\left[\frac{\alpha}{n(1-\alpha)}\mathbb{E}_{\theta\sim\hat{\rho}}[r_n(\theta,\theta_0)] \right]+ \frac{1}{n(1-\alpha)}\mathbb{E}_{\mathcal{S}}[KL(\hat{\rho}\|\pi_{-\beta})].
    \end{align*}
    Now consider the following decomposition:
    \begin{align}
        KL(\hat{\rho}\|\pi_{-\beta})&= \mathbb{E}_{\theta\sim\hat{\rho}}\left[\log\frac{\hat{\rho}(\theta)}{\pi_{-\beta}(\theta)}\right]\\
        &=\mathbb{E}_{\theta\sim\hat{\rho}}\left[\log\frac{\hat{\rho}(\theta)}{\pi(\theta)}\right]+\mathbb{E}_{\theta\sim\hat{\rho}}\left[\log\frac{\pi(\theta)}{\pi_{-\beta}(\theta)}\right]\\
        &=KL(\hat{\rho}\|\pi)+\mathbb{E}_{\theta\sim\hat{\rho}}\left[\log\frac{\pi(\theta)}{\pi_{-\beta}(\theta)}\right]\\
        &= KL(\hat{\rho}\|\pi)+\mathbb{E}_{\theta\sim\hat{\rho}}\left[\log\left(\frac{\mathbb{E}_{\pi}[e^{-\beta KL(P_{\theta_0}\|P_{\theta}) }]}{e^{-\beta KL(P_{\theta_0}\|P_{\theta}) }}\right)\right]\\
        &=KL(\hat{\rho}\|\pi)+\beta\mathbb{E}_{\theta\sim\hat{\rho}}\left[KL(P_{\theta_0}\|P_{\theta}) \right]+\log\mathbb{E}_{\pi}[e^{-\beta KL(P_{\theta_0}\|P_{\theta})}]\label{eq:decomp}.
    \end{align}
    Using the above decomposition, and using Assumption~\ref{asm:asm1}, we have
    \begin{align*}
       &  \mathbb{E}_{\mathcal{S}}\mathbb{E}_{\theta\sim\hat{\rho}}[\mathcal{D}_{\alpha}(P_{\theta}\|P_{\theta_0})]
       \\
       & \le \mathbb{E}_{\mathcal{S}}\left[\frac{\alpha}{n(1-\alpha)}\mathbb{E}_{\theta\sim\hat{\rho}}[r_n(\theta,\theta_0)] \right]+ \frac{1}{n(1-\alpha)}\mathbb{E}_{\mathcal{S}}\left[KL(\hat{\rho}\|\pi)+\beta\mathbb{E}_{\theta\sim\hat{\rho}}\left[KL(P_{\theta_0}\|P_{\theta})\right]+\log\mathbb{E}_{\pi}[e^{-\beta KL(KL(P_{\theta_0}\|P_{\theta})}]\right]\\
       & \le \mathbb{E}_{\mathcal{S}}\left[\frac{\alpha}{n(1-\alpha)}\mathbb{E}_{\theta\sim\hat{\rho}}[r_n(\theta,\theta_0)] \right]+ \frac{1}{n(1-\alpha)}\mathbb{E}_{\mathcal{S}}\left[KL(\hat{\rho}\|\pi)+\beta c(\alpha) \mathbb{E}_{\theta\sim\hat{\rho}}\left[\mathcal{D}_\alpha(P_{\theta}\|P_{\theta_0})\right]+\log\mathbb{E}_{\pi}[e^{-\beta KL(P_{\theta_0}\|P_{\theta})}]\right],
    \end{align*}
    Rearranging the terms, we have that:
    \begin{align*}
       & \left(1-\frac{\beta c(\alpha)}{n(1-\alpha)}\right)\mathbb{E}_{\mathcal{S}}\mathbb{E}_{\theta\sim\hat{\rho}}[\mathcal{D}_{\alpha}(P_{\theta}\|P_{\theta_0})]
       \\
       &\le
       \frac{\alpha}{n(1-\alpha)} \mathbb{E}_{\mathcal{S}}\left[\mathbb{E}_{\theta\sim\hat{\rho}}[r_n(\theta,\theta_0)] \right]+ \frac{1}{n(1-\alpha)}\mathbb{E}_{\mathcal{S}}\left[KL(\hat{\rho}\|\pi)+\log\mathbb{E}_{\pi}\left[e^{-\beta KL(P_{\theta_0}\|P_{\theta})}\right]\right].
    \end{align*}
    Next, we use the definition of $\tilde{\rho}_{n,\alpha}$, we have that:
    \begin{align*}
      & \left(1-\frac{\beta c(\alpha)}{n(1-\alpha)}\right)\mathbb{E}_{\mathcal{S}}\mathbb{E}_{\theta\sim\tilde{\rho}_{n,\alpha}}[\mathcal{D}_{\alpha}(P_{\theta}\|P_{\theta_0})]\le
    \\
    & \inf_{\rho\in \mathcal{F}}\left[\frac{\alpha}{n(1-\alpha)}\mathbb{E}_{\mathcal{S}} \mathbb{E}_{\theta\sim\rho}\left[r_n(\theta,\theta_0)\right] + \frac{1}{n(1-\alpha)}\mathbb{E}_{\mathcal{S}}\mathbb{E}_{\theta\sim\rho}\left[KL(\rho\|\pi)+\log\mathbb{E}_{\pi}\left[e^{-\beta KL(P_{\theta_0}\|P_{\theta})}\right]\right]\right],
    \end{align*}
    using Fubini and inverting the $\inf$ and the expectation w.r.t to $\mathcal{S}$, we have that:
    \begin{align*}
      &  \left(1-\frac{\beta c(\alpha)}{n(1-\alpha)}\right)\mathbb{E}_{\mathcal{S}}\mathbb{E}_{\theta\sim\tilde{\rho}_{n,\alpha}}[\mathcal{D}_{\alpha}(P_{\theta}\|P_{\theta_0})]
    \\
    & \le \inf_{\rho\in \mathcal{F}}\left[\frac{\alpha}{(1-\alpha)}\mathbb{E}_{\theta\sim\rho}\left[KL(\theta,\theta_0)\right] + \frac{1}{n(1-\alpha)}\mathbb{E}_{\theta\sim\rho}\left[KL(\rho\|\pi)+\log\mathbb{E}_{\pi}\left[e^{-\beta KL(P_{\theta_0}\|P_{\theta})}\right]\right]\right].\\
    \end{align*}
    Using the decomposition of equation~\ref{eq:decomp}, we have that:
    \begin{align*}
      & \left(1-\frac{\beta c(\alpha)}{n(1-\alpha)}\right)\mathbb{E}_{\mathcal{S}}\mathbb{E}_{\theta\sim\tilde{\rho}_{n,\alpha}}[\mathcal{D}_{\alpha}(P_{\theta}\|P_{\theta_0})]
    \\
    & \le \inf_{\rho\in \mathcal{F}}\left[\frac{\alpha}{(1-\alpha)}\mathbb{E}_{\theta\sim\rho}\left[KL(P_{\theta_0}\|P_{\theta})\right] + \frac{1}{n(1-\alpha)}\mathbb{E}_{\theta\sim\rho}\left[KL(\rho\|\pi_{-\beta})\right]-\beta\mathbb{E}_{\theta\sim\rho}\left[KL(P_{\theta_0}\|P_{\theta})\right]\right].\\
    \end{align*}
    Rearranging the terms, we have that:
    \begin{align*}
       \left(1-\frac{\beta c(\alpha)}{n(1-\alpha)}\right)\mathbb{E}_{\mathcal{S}}\mathbb{E}_{\theta\sim\tilde{\rho}_{n,\alpha}}[\mathcal{D}_{\alpha}(P_{\theta}\|P_{\theta_0})]&\le\inf_{\rho\in \mathcal{F}}\left[\left(\frac{\alpha -\frac{\beta}{n}}{1-\alpha}\right)\mathbb{E}_{\theta\sim\rho}\left[ KL (P_{\theta_0}\|P_{\theta})\right] +\frac{KL(\rho\|\pi_{-\beta})}{n(1-\alpha)}\right].\\
    \end{align*}
    Using Assumption~\ref{asm:asm1} again,
        \begin{align*}
       \left(\frac{1}{c(\alpha)}-\frac{\beta }{n(1-\alpha)}\right)\mathbb{E}_{\mathcal{S}}\mathbb{E}_{\theta\sim\tilde{\rho}_{n,\alpha}}[KL(P_{\theta_0}\|P_{\theta})]&\le\inf_{\rho\in \mathcal{F}}\left[\left(\frac{\alpha -\frac{\beta}{n}}{1-\alpha}\right)\mathbb{E}_{\theta\sim\rho}\left[ KL (P_{\theta_0}\|P_{\theta})\right] +\frac{KL(\rho\|\pi_{-\beta})}{n(1-\alpha)}\right].\\
    \end{align*}
    We can divide both sides by $1/c(\alpha) -\beta/[n(1-\alpha)]$ on the condition that this quantity is $>0$.
    \end{proof}
    
    Let us investigate the above bound. First, we note that when $\beta=0$, we recover the bound of Corollary~\ref{cor:cor1}. Thus, Theorem~\ref{thm:thm4} generalizes the bound of Corollary~\ref{cor:cor1}.

    We are now ready to prove Theorem~\ref{thm:thm1}.

    \begin{proof}[Proof of Theorem~\ref{thm:thm1}]
     Apply Theorem~\ref{thm:thm4}.    If Assumption~\ref{asm:asm2} is satisfied, and if $\mathcal{F}=\mathcal{M}_1^+$ (in this case, $\tilde{\rho}_{n,\alpha}=\pi_{n,\alpha}$), we can take $\rho=\pi_{-\beta }\in\mathcal{F}$ and we have that:
    \begin{align*}
        \mathbb{E}_{\mathcal{S}}\mathbb{E}_{\theta\sim\pi_{n,\alpha}}[KL(P_{\theta_0}\|P_{\theta})]& \le \frac{c(\alpha) \alpha n - \beta c(\alpha) }{n(1-\alpha) - \beta c(\alpha)} \frac{d_\pi}{\beta^{\kappa_\pi}}.
    \end{align*}
    If Assumption~\ref{asm:asm4} is satisfied instead, take the $\rho$ of the assumption to get a similar upper bound.
    \end{proof}

    Let us now state a version of Theorem~\ref{thm:thm4} that holds with large probability.

\begin{thm}
     \label{thm:thm5}
 Let $\alpha\in(0,1)$, $\delta,\eta\in(0,1)$, and $\mathcal{F}\subset \mathcal{M}_1^+$.
    If Assumption~\ref{asm:asm1} is satisfied, then we have, for any $\beta<n(1-\alpha)/c(\alpha) $, with probability at least $1-\delta-\eta$ on the sample $\mathcal{S}$,
    \begin{multline*}
      \mathbb{E}_{\theta\sim\tilde{\rho}_{n,\alpha}}[KL(P_{\theta_0}\|P_{\theta})]
      \\
      \le
      \frac{c(\alpha) n}{n(1-\alpha) - \beta c(\alpha)}
      \inf_{\rho\in \mathcal{F}}\left[\left(\alpha -\frac{\beta}{n}\right)\mathbb{E}_{\theta\sim\rho}\left[ KL (P_{\theta_0}\|P_{\theta})\right]
      +\alpha \sqrt{\frac{\mathbb{E}_{\theta\sim\rho}\left[ \mathcal{V}(\theta,\theta_0)\right]}{n\eta}}
      +\frac{KL(\rho\|\pi_{-\beta})+\log\frac{1}{\delta}}{n}\right].
    \end{multline*}
\end{thm}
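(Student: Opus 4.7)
The plan is to mirror the proof of Theorem~\ref{thm:thm4}, but start from the high-probability PAC-Bayes inequality (Theorem~\ref{thm:thm2-p}) instead of its expectation version and replace the Fubini identity $\mathbb{E}_{\mathcal{S}}\mathbb{E}_{\theta\sim\rho}[r_n(\theta,\theta_0)] = n\,\mathbb{E}_{\theta\sim\rho}[KL(P_{\theta_0}\|P_{\theta})]$ (used implicitly there) by a one-sided Chebyshev inequality, which accounts for the new $\sqrt{\mathbb{E}_\rho[\mathcal{V}]/(n\eta)}$ term.

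The first step carries out the derivation of Theorem~\ref{thm:thm4} inside the high-probability event rather than under the expectation. Applying Theorem~\ref{thm:thm2-p} with the localized prior $\pi_{-\beta}$ in place of $\pi$, specializing to $\hat\rho=\tilde\rho_{n,\alpha}$, expanding $KL(\tilde\rho_{n,\alpha}\|\pi_{-\beta})$ via~\eqref{eq:decomp}, absorbing the resulting $\beta c(\alpha)\mathbb{E}_{\tilde\rho_{n,\alpha}}[\mathcal{D}_\alpha]$ on the left by Assumption~\ref{asm:asm1}, invoking the minimizing property of $\tilde\rho_{n,\alpha}$, and reintroducing the localized prior by rewriting $KL(\rho\|\pi)+\log Z_\beta = KL(\rho\|\pi_{-\beta})-\beta\mathbb{E}_\rho[KL(P_{\theta_0}\|P_{\theta})]$, I obtain, on an event of probability at least $1-\delta$, for every $\rho\in\mathcal{F}$,
\[
 \left(1-\tfrac{\beta c(\alpha)}{n(1-\alpha)}\right)\mathbb{E}_{\tilde\rho_{n,\alpha}}[\mathcal{D}_\alpha] \le \frac{\alpha\mathbb{E}_{\theta\sim\rho}[r_n(\theta,\theta_0)] + KL(\rho\|\pi_{-\beta}) - \beta\mathbb{E}_{\theta\sim\rho}[KL(P_{\theta_0}\|P_{\theta})] + \log(1/\delta)}{n(1-\alpha)}.
\]

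The second step controls the residual $\mathbb{E}_\rho[r_n]$ term by concentration. For any fixed deterministic $\rho\in\mathcal{F}$, $\mathbb{E}_{\theta\sim\rho}[r_n(\theta,\theta_0)]$ is a sum of $n$ i.i.d. random variables with mean $\mathbb{E}_\rho[KL(P_{\theta_0}\|P_{\theta})]$ and (by Jensen) second moment at most $\mathbb{E}_\rho[\mathcal{V}(\theta,\theta_0)]$. A one-sided Chebyshev inequality (Markov on the squared deviation) then yields, with probability at least $1-\eta$,
\[
 \mathbb{E}_{\theta\sim\rho}[r_n(\theta,\theta_0)] \le n\,\mathbb{E}_{\theta\sim\rho}[KL(P_{\theta_0}\|P_{\theta})] + \sqrt{\frac{n\,\mathbb{E}_{\theta\sim\rho}[\mathcal{V}(\theta,\theta_0)]}{\eta}}.
\]
A union bound combines the PAC-Bayes and Chebyshev events (total failure probability at most $\delta+\eta$). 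Substituting this Chebyshev bound into the previous display collapses the two $\mathbb{E}_\rho[KL]$ terms into the coefficient $(\alpha-\beta/n)\mathbb{E}_\rho[KL]$; dividing by the positive factor $1-\beta c(\alpha)/[n(1-\alpha)]$ and applying Assumption~\ref{asm:asm1} once more on the left to pass from $\mathcal{D}_\alpha$ to $KL$ delivers the claimed bound for this specific $\rho$.

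The one obstacle requiring care is the infimum: Chebyshev only produces a ``for a fixed $\rho$'' guarantee, whereas PAC-Bayes gives a ``for all $\hat\rho$'' one. What saves the argument is that every quantity in the bracketed expression—$\mathbb{E}_\rho[KL(P_{\theta_0}\|P_{\theta})]$, $\mathbb{E}_\rho[\mathcal{V}(\theta,\theta_0)]$, and $KL(\rho\|\pi_{-\beta})$—is a deterministic functional of $\rho$, independent of the sample $\mathcal{S}$. Hence the minimizer $\rho_\star\in\mathcal{F}$ of the right-hand side can be chosen ahead of time, Chebyshev is spent only on this fixed $\rho_\star$, and the infimum in the statement is exactly the value of the bracket at $\rho_\star$.
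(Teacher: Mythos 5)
Your proposal is correct and follows essentially the same route as the paper's own proof: apply the high-probability bound of Theorem~\ref{thm:thm2-p} with the localized prior $\pi_{-\beta}$, expand via the decomposition~\eqref{eq:decomp}, absorb the $\beta c(\alpha)\mathbb{E}[\mathcal{D}_\alpha]$ term using Assumption~\ref{asm:asm1}, exploit the minimizing property of $\tilde{\rho}_{n,\alpha}$, and convert $\mathbb{E}_{\theta\sim\rho}[r_n(\theta,\theta_0)]$ to $n\,\mathbb{E}_{\theta\sim\rho}[KL(P_{\theta_0}\|P_\theta)]$ plus a $\sqrt{n\,\mathbb{E}_\rho[\mathcal{V}]/\eta}$ deviation by Chebyshev/Markov with a union bound. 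Your explicit remark that the Chebyshev step is spent on a single, sample-independent (near-)minimizer $\rho_\star$ of the deterministic bracket is a welcome clarification of a point the paper's proof leaves implicit, but it does not change the argument.
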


    \begin{proof}
    Consider the bound given in Theorem~\ref{thm:thm2-p}, for the choice of $\pi=\pi_{-\beta}$, and for any $\rho\in \mathcal{F}$, with probability at least $1-\delta$ over $\mathcal{S}$,
    \begin{align*}
    \mathbb{E}_{\theta\sim\hat{\rho}}[\mathcal{D}_{\alpha}(P_{\theta}\|P_{\theta_0})]
    \le \frac{\alpha}{n(1-\alpha)}\mathbb{E}_{\theta\sim\hat{\rho}}[r_n(\theta,\theta_0)] + \frac{KL(\hat{\rho}\|\pi_{-\beta}) + \log\frac{1}{\delta}}{n(1-\alpha)}.
    \end{align*}
    Using the decomposition from the previous proof, and using Assumption~\ref{asm:asm1}, we have
    \begin{align*}
       &  \mathbb{E}_{\theta\sim\hat{\rho}}[\mathcal{D}_{\alpha}(P_{\theta}\|P_{\theta_0})]
       \\
       & \le \frac{\alpha}{n(1-\alpha)}\mathbb{E}_{\theta\sim\hat{\rho}}[r_n(\theta,\theta_0)] + \frac{1}{n(1-\alpha)}\left[KL(\hat{\rho}\|\pi)+\beta\mathbb{E}_{\theta\sim\hat{\rho}}\left[ KL(P_{\theta_0}\|P_{\theta})\right]+\log\mathbb{E}_{\pi}[e^{-\beta  KL(P_{\theta_0}\|P_{\theta})}] + \log\frac{1}{\delta}\right]\\
       & \le \frac{\alpha}{n(1-\alpha)}\mathbb{E}_{\theta\sim\hat{\rho}}[r_n(\theta,\theta_0)]+ \frac{1}{n(1-\alpha)}\left[KL(\hat{\rho}\|\pi)+\beta c(\alpha) \mathbb{E}_{\theta\sim\hat{\rho}}\left[\mathcal{D}_\alpha(P_{\theta}\|P_{\theta_0})\right]+\log\mathbb{E}_{\pi}[e^{-\beta  KL(P_{\theta_0}\|P_{\theta})}] + \log\frac{1}{\delta}\right],
    \end{align*}
    Rearranging the terms, we have that:
    \begin{align*}
       & \left(1-\frac{\beta c(\alpha)}{n(1-\alpha)}\right)
       \mathbb{E}_{\theta\sim\hat{\rho}}[\mathcal{D}_{\alpha}(P_{\theta}\|P_{\theta_0})]
       \\
       &\le
       \frac{\alpha}{n(1-\alpha)} \mathbb{E}_{\theta\sim\hat{\rho}}[r_n(\theta,\theta_0)] + \frac{1}{n(1-\alpha)}\left[KL(\hat{\rho}\|\pi)+\log\mathbb{E}_{\pi}\left[e^{-\beta KL(P_{\theta_0}\|P_{\theta})}\right] + \log\frac{1}{\delta}\right].
    \end{align*}
    Next, we use the definition of $\tilde{\rho}_{n,\alpha}$, we have that:
    \begin{align*}
      & \left(1-\frac{\beta c(\alpha)}{n(1-\alpha)}\right)\mathbb{E}_{\theta\sim\tilde{\rho}_{n,\alpha}}[\mathcal{D}_{\alpha}(P_{\theta}\|P_{\theta_0})]
    \\
    & \le \inf_{\rho\in \mathcal{F}}\left[\frac{\alpha}{n(1-\alpha)} \mathbb{E}_{\theta\sim\rho}\left[r_n(\theta,\theta_0)\right] + \frac{1}{n(1-\alpha)}\mathbb{E}_{\theta\sim\rho}\left[KL(\rho\|\pi)+\log\mathbb{E}_{\pi}\left[e^{-\beta KL(P_{\theta_0}\|P_{\theta})}\right]\right]+\log\frac{1}{\delta}\right].
    \end{align*}
    We use Markov inequality, and Fubini,
    $$ \mathbb{P}_{\mathcal{S}} [\mathbb{E}_{\theta\sim\rho}\left[r_n(\theta,\theta_0) - n KL(P_{\theta_0 }\| p_\theta)] \geq t \right] \leq \frac{ \mathbb{E}_{\mathcal{S}} \mathbb{E}_{\theta\sim\rho}\left[r_n^2(\theta,\theta_0)\right]}{t^2}
    = \frac{n \mathbb{E}_{\theta\sim\rho}[\mathcal{V}(\theta,\theta_0)]}{t^2}
    $$
    and thus, with probability at least $1-\eta$ on the sample, $\mathcal{S}$
    $$
    \mathbb{E}_{\theta\sim\rho}\left[r_n(\theta,\theta_0) - n  KL(P_{\theta_0}\|P_{\theta}) \right] \leq \sqrt{n \frac{\mathbb{E}_{\theta\sim\rho}[\mathcal{V}(\theta,\theta_0)] }{ \eta }}.
    $$
    Thus, with probability at least $1-\delta-\eta$ on $\mathcal{S}$,
    \begin{align*}
       \left(1-\frac{\beta c(\alpha)}{n(1-\alpha)}\right)
       &  \mathbb{E}_{\theta\sim\tilde{\rho}_{n,\alpha}}[\mathcal{D}_{\alpha}(P_{\theta}\|P_{\theta_0})]
    \\
    & \le \inf_{\rho\in \mathcal{F}}\Biggl[\frac{\alpha}{1-\alpha} \left[ \mathbb{E}_{\theta\sim\rho}\left[ KL(P_{\theta_0}\|P_{\theta})\right]
    +  \sqrt{\frac{\mathbb{E}_{\theta\sim\rho}[\mathcal{V}(\theta,\theta_0)] }{ n \eta }}
    \right]
    \\
    & + \frac{1}{n(1-\alpha)}\mathbb{E}_{\theta\sim\rho}\left[KL(\rho\|\pi)+\log\mathbb{E}_{\pi}\left[e^{-\beta KL(P_{\theta_0}\|P_{\theta})}\right]\right]+\log\frac{1}{\delta}\Biggr].
    \end{align*}
    Using the decomposition of equation~\ref{eq:decomp}, we have that:
    \begin{align*}
       \left(1-\frac{\beta c(\alpha)}{n(1-\alpha)}\right)
       & \mathbb{E}_{\theta\sim\tilde{\rho}_{n,\alpha}}[\mathcal{D}_{\alpha}(P_{\theta}\|P_{\theta_0})]
    \\
    & \le \inf_{\rho\in \mathcal{F}}\Biggl[\frac{\alpha}{1-\alpha} \left[ \mathbb{E}_{\theta\sim\rho}\left[ KL(P_{\theta_0}\|P_{\theta})\right]
    +  \sqrt{\frac{\mathbb{E}_{\theta\sim\rho}[\mathcal{V}(\theta,\theta_0)] }{ n \eta }}
    \right]
    \\
    & + \frac{1}{n(1-\alpha)}\mathbb{E}_{\theta\sim\rho}\left[KL(\rho\|\pi_{-\beta})\right]-\beta\mathbb{E}_{\theta\sim\rho}\left[KL(P_{\theta_0}\|P_{\theta})\right]+ \log\frac{1}{\delta} \Biggr].
    \end{align*}
    Rearranging the terms, we have that:
    \begin{multline*}
       \left(1-\frac{\beta c(\alpha)}{n(1-\alpha)}\right)\mathbb{E}_{\theta\sim\tilde{\rho}_{n,\alpha}}[\mathcal{D}_{\alpha}(P_{\theta}\|P_{\theta_0})]
       \\
       \le\inf_{\rho\in \mathcal{F}}\left[\left(\frac{\alpha -\frac{\beta}{n}}{1-\alpha}\right)\mathbb{E}_{\theta\sim\rho}\left[ KL (P_{\theta_0}\|P_{\theta})\right]
       + \frac{\alpha}{1-\alpha} \sqrt{\frac{\mathbb{E}_{\theta\sim\rho}[\mathcal{V}(\theta,\theta_0)] }{ n \eta }}
       +\frac{KL(\rho\|\pi_{-\beta}) + \log\frac{1}{\delta} }{n(1-\alpha)}\right].
    \end{multline*}
    Using Assumption~\ref{asm:asm1} again,
        \begin{multline*}
       \left(\frac{1}{c(\alpha)}-\frac{\beta }{n(1-\alpha)}\right)\mathbb{E}_{\theta\sim\tilde{\rho}_{n,\alpha}}[KL(P_{\theta_0}\|P_{\theta})]
      \\
      \le\inf_{\rho\in \mathcal{F}}\left[\left(\frac{\alpha -\frac{\beta}{n}}{1-\alpha}\right)\mathbb{E}_{\theta\sim\rho}\left[ KL (P_{\theta_0}\|P_{\theta})\right]
       + \frac{\alpha}{1-\alpha} \sqrt{\frac{\mathbb{E}_{\theta\sim\rho}[\mathcal{V}(\theta,\theta_0)] }{ n \eta }}
       +\frac{KL(\rho\|\pi_{-\beta})+ \log\frac{1}{\delta} }{n(1-\alpha)}\right].
    \end{multline*}
    We can divide both sides by $1/c(\alpha) -\beta/[n(1-\alpha)]$ on the condition that this quantity is $>0$.
    \end{proof}

    We can now prove Theorem~\ref{thm:thm1-p}.

    \begin{proof}[Proof of Theorem~\ref{thm:thm1-p}]
    Apply Theorem~\ref{thm:thm5} to $\rho = \pi_{-\beta}$ and upper bound
    $$ \sqrt{ \frac{\mathbb{E}_{\rho} \mathcal{V}(\theta,\theta_0)}{n\eta} }
    \leq  2 \mathbb{E}_{\rho} \mathcal{V}(\theta,\theta_0) + \frac{2}{n\eta}.
    $$
    \end{proof}

\subsection{Other proofs}
\label{subsec:proof:other}

\begin{proof}[Proof of Lemma~\ref{lem_gaussian_asm1}]
In order to calculate the $\alpha$-divergence, we use the formulas in~\citet{vanderwerken2013parallel,liese1987convex}. In our setting where the variance matrices are equal, they give
\begin{align*}
    \mathcal{D}_{\alpha}(P_{\theta}\|P_{\theta_0})=\frac{\alpha}{2 v^2}\left\|\theta-\theta_0\right\|^2.
\end{align*}
Moreover,
\begin{align*}
    KL(P_{\theta_0}\|P_{\theta})=\frac{1}{2v^2 }\left\|\theta-\theta_0\right\|^2.
\end{align*}
This shows that
\begin{align*}
    KL(P_{\theta_0}\|P_{\theta})=\frac{1}{2v^2}\left\|\theta-\theta_0\right\|^2\le \frac{\frac{1}{\alpha}\alpha}{2 v^2}\left\|\theta-\theta_0\right\|^2=\frac{1}{\alpha} \mathcal{D}_{\alpha}(P_{\theta}\|P_{\theta_0}).
\end{align*}
\end{proof}

\begin{proof}[Proof of Lemma~\ref{lem_gaussian_asm2_a}]
From the Proof of Lemma~\ref{lem_gaussian_asm1},
\begin{align*}
    KL(P_{\theta_0}\|P_{\theta})=\frac{1}{2 v^2}\left\|\theta-\theta_0\right\|^2.
\end{align*}
Our prior is $\pi = \mathcal{N}(0,\Sigma)$ for some positive definite matrix $\Sigma$. For any $\beta>0$, we have that:
\begin{align*}
    \pi_{-\beta}(\theta) & \propto \exp\left(-\frac{\theta' \Sigma^{-1} \theta}{2}-\frac{\beta  }{2 v^2}\left\|\theta-\theta_0\right\|^2\right)\\
    & \propto \exp\left(-\frac{\theta' \Sigma^{-1} \theta}{2}-\frac{\beta }{2 v^2}\left\|\theta\right\|^2+\frac{\beta }{v^2}\langle\theta,\theta_0\rangle\right)\\
    & \propto \exp\left(-
    \frac{\theta'\left(\Sigma^{-1} + \frac{\beta }{v^2} I_d\right)\theta
    }{2}+\frac{\beta }{v^2}\langle\theta,\theta_0\rangle\right)\\
    & \propto \exp\left(-
    \frac{\theta'\left(\Sigma^{-1} + \frac{\beta }{v^2} I_d\right)\theta
    }{2}+\frac{\beta }{v^2} \theta' \left(\Sigma^{-1} + \frac{\beta }{v^2}I_d\right) \left[ \left(\Sigma^{-1} + \frac{\beta }{v^2} I_d\right)^{-1} \theta_0 \right] \right).\\
 \end{align*}
 Thus, we have that:
 \begin{align*}
    \pi_{-\beta}(\theta)\sim \mathcal{N}\left(\frac{\beta }{v^2}\left(\Sigma^{-1} + \frac{\beta }{v^2} I_d\right)^{-1} \theta_0 ,\left(\Sigma^{-1} + \frac{\beta }{v^2}I_d\right)^{-1} \right).
 \end{align*}
Thus,
 \begin{align*}
    \beta\mathbb{E}_{\pi_{-\beta}}\left[KL(P_{\theta_0}\|P_{\theta})\right]&=\frac{\beta}{2 v^2}\mathbb{E}_{\pi_{-\beta}}\left[\left\|\theta-\theta_0\right\|^2\right]\\
     &=\frac{\beta}{2 v^2}\left\| \left(\Sigma^{-1} + \frac{\beta}{v^2}I_d\right)^{-1} \Sigma^{-1}\theta_0 \right\|^2 +\frac{\beta }{2 v^2} {\rm  Tr}\left[ \left(\Sigma^{-1} + \frac{\beta}{v^2} I_d\right)^{-1} \right].
 \end{align*}
 In particular, when $\Sigma=\sigma^2 I$,
 $$
    \beta\mathbb{E}_{\pi_{-\beta }}\left[KL(P_{\theta_0}\|P_{\theta})\right]=\frac{ \frac{\beta}{v^2}\left\|\theta_0\right\|^2}{2\sigma^4\left( \frac{\beta}{v^2}+\frac{1}{\sigma^2}\right)^2}+\frac{ \frac{\beta}{v^2} d}{2\left( \frac{\beta}{v^2}+\frac{1}{\sigma^2}\right)} \leq \frac{d}{2}+\frac{\left\|\theta_0\right\|^2}{8 \sigma^2}.
$$
We finally check the last inequality in the lemma. In order to do so, check that the function $f(\beta)=\frac{ \frac{\beta }{v^2}}{2\sigma^4\left( \frac{\beta }{v^2}+\frac{1}{\sigma^2}\right)^2}$ is maximized at $\beta_0=\frac{v^2}{ \sigma^2}$, and thus
 \begin{align*}
  \beta\mathbb{E}_{\pi_{-\beta}}\left[KL(P_{\theta_0}\|P_{\theta})\right]&\le \frac{d}{2}+\frac{\left\|\theta_0\right\|^2}{8 \sigma^2}.
 \end{align*}
 Thus, Catoni’s dimension assumption is satisfied.
\end{proof}

\begin{proof}[Proof of Lemma~\ref{lem_gaussian_asm3}]
Note that 
\begin{align*}
\log^2 \frac{P_{\theta_0}(x) }{ p_\theta(x) }
& = \left(\frac{\|\theta-x\|^2-\|\theta_0-x\|^2}{2 v^2}\right)^2
\\
& =  \frac{\left<\theta-\theta_0,\theta+\theta_0-2x\right>^2  }{2 v^4}
\end{align*}
and thus
$$
\mathcal{V}(\theta,\theta_0) = \frac{\|\theta-\theta_0\|^2 (1+4v^2)  }{2 v^4}
$$
and
$$
\beta \mathbb{E}_{\pi_{-\beta}} [\mathcal{V}(\theta,\theta_0) ] = \frac{1+4v^2}{v^2} \left[ \frac{d}{2}+\frac{\left\|\theta_0\right\|^2}{8 \sigma^2} \right] = d_\pi'.
$$
\end{proof}

\begin{proof}[Proof of Lemma~\ref{lem_gaussian_asm4}]
Lemma~\ref{lem_gaussian_asm2_a} gives
$$  \beta\mathbb{E}_{\pi_{-\beta}}\left[KL(P_{\theta_0}\|P_{\theta})\right] = \frac{\beta}{2} \left\| \left(\Sigma^{-1} +  \beta I_n\right)^{-1} \Sigma^{-1}\theta_0 \right\|^2 +\frac{ \beta }{2} {\rm  Tr}\left[ \left(\Sigma^{-1} +  \beta I_n\right)^{-1} \right]. $$
We now use the diagonal form of $\Sigma$ to get
\begin{align*}
\beta\mathbb{E}_{\pi_{-\beta}}\left[KL(P_{\theta_0}\|P_{\theta})\right]
& =
\sum_{i=1}^n \left\{ \frac{ \frac{ \theta_{0,i}^2}{\sigma_i^2} \beta }{2( \frac{1}{\sigma_i} + \beta )^2}
+
\frac{\beta }{2 \left(\frac{1}{\sigma_i} +  \beta   \right)}
\right\}
\\
& =
\sum_{\frac{1}{\sigma_i}\leq \beta } \left\{ \frac{ \frac{ \theta_{0,i}^2}{\sigma_i^2} \beta }{2( \frac{1}{\sigma_i} + \beta )^2}
+
\frac{\beta }{2 \left(\frac{1}{\sigma_i} +  \beta   \right)}
\right\}
+
\sum_{\frac{1}{\sigma_i}>\beta } \left\{ \frac{ \frac{ \theta_{0,i}^2}{\sigma_i^2} \beta }{2 ( \frac{1}{\sigma_i} + \beta )^2}
+
\frac{\beta }{2 \left(\frac{1}{\sigma_i} +  \beta   \right)}
\right\}
\\
&
\leq \sum_{\frac{1}{\sigma_i}\leq \beta } \left\{  \frac{ \theta_{0,i}^2}{\sigma_i^2 \beta}
+
\frac{1}{2}
\right\}
+
\sum_{\frac{1}{\sigma_i}>\beta } \left\{ \frac{\beta \theta_{0,i}^2}{2}
+
\frac{\beta \sigma_i }{2}
\right\}.
\end{align*}
We now use explicitly $\sigma_i = i^{-1-2b}$. We have $1/\sigma_i \leq \beta \Leftrightarrow i^{1+2b} \leq \beta \Leftrightarrow i \leq \beta^{1/(1+2b)} $. Thus,
\begin{align*}
 \sum_{\frac{1}{\sigma_i}\leq \beta } \frac{ \theta_{0,i}^2}{\sigma_i^2 \beta}
  &
  =  \frac{1}{\beta} \sum_{1 \leq i \leq \beta^{1/(1+2b)}}  \theta_{0,i}^2 i^{2(1+2b)}
  \\
  & \leq
  \frac{1}{\beta}
 \sum_{1 \leq i \leq  \beta^{1/(1+2b)}} \theta_{0,i}^2 i^{2b}
   \beta^{(2+2b)/(1+2b)}
  \\
  & \leq
     \beta^{1/(1+2b)}
 \sum_{i=1}^\infty \theta_{0,i}^2 i^{2b}
  \\
  & \leq L \beta^{1/(1+2b)}.
\end{align*}
Then,
$$
 \sum_{\frac{1}{\sigma_i}\leq \beta } \frac{1}{2}
= \sum_{1 \leq i \leq \beta^{1/(1+2b)}}  \frac{1}{2}
\leq \frac{1}{2} \beta^{1/(1+2b)}.
$$
Then,
$$
\sum_{\frac{1}{\sigma_i}>\beta } \frac{\beta \theta_{0,i}^2}{2}
=
 \frac{\beta }{2}
 \sum_{ \beta^{1/(1+2b)} < i \leq n}  \theta_{0,i}^2
 \leq
  \frac{\beta }{2}
 \sum_{ i> \beta^{1/(1+2b)}}  \frac{\theta_{0,i}^2 i^{2b} }{ \beta^{2b/(1+2b)} }
 \leq
  \frac{L}{2} \beta^{1/(1+2b)}.
$$
Finally, using a sum-integral comparison inequality,
$$
\sum_{\frac{1}{\sigma_i}>\beta }
\frac{\beta \sigma_i }{2}
\leq \frac{\beta }{2} \sum_{ i> \beta^{1/(1+2b)} } \frac{1}{i^{1+2b}}
\leq \frac{\beta }{4b}  \frac{1}{ \beta^{2b/(1+2b)} }
=   \frac{1}{4b} \beta^{1/(1+2b)}.
$$
So we can conclude that
$$
\beta\mathbb{E}_{\pi_{-\beta }}\left[ KL (P_{\theta}\|P_{\theta_0})\right]
\leq \left( \frac{3L}{2} + \frac{1}{2} + \frac{1}{4b} \right) \beta^{1/(1+2b)}.
$$
In other words,
$$
\sup_{\beta} \beta^{  \frac{2b}{1+2b}} \mathbb{E}_{\pi_{-\beta }}\left[ KL (P_{\theta}\|P_{\theta_0})\right]
\leq \left( \frac{3L}{2} + \frac{1}{2} + \frac{1}{4b} \right).
$$
\end{proof}

\begin{proof}[Proof of Lemma~\ref{lem_exp_1}]
The $\alpha$-divergence can be calculated for any member of the exponential family.
Consider the case where $\mathcal{X}=\mathbb{R}^d$, and $\mu$ is the Lebesgue measure on $\mathbb{R}^d$. In this case, we have that:
\begin{align*}
    \psi(\theta)=\log\left(\int_{\mathbb{R}^d}\exp\left(h(x)+\langle T(x),\theta \rangle \right)dx\right).
\end{align*}
From~\citep{nielsen2011r}, the following holds:
\begin{align*}
    \mathcal{D}_{\alpha}(P_{\theta}\|P_{\theta_0})&=\frac{1}{1-\alpha}\left(\alpha\psi(\theta)+(1-\alpha)\psi(\theta_0)-\psi(\alpha\theta+(1-\alpha)\theta_0)\right),\\
    KL(P_{\theta_0}\|P_{\theta})&=\psi(\theta_0)-\psi(\theta)-\langle\theta_0-\theta,\nabla \psi(\theta)\rangle.
\end{align*}
As $\psi$ is $m$-strongly convex, 
we have that:
\begin{align*}
    \psi(\alpha\theta+(1-\alpha)\theta_0)&\le \alpha\psi(\theta)+(1-\alpha)\psi(\theta_0)-\frac{m}{2}\alpha(1-\alpha)\left\|\theta_0-\theta\right\|^2.
\end{align*}
This yields that for any $\alpha\in(0,1)$, we have that:
\begin{align*}
    \mathcal{D}_{\alpha}(P_{\theta}\|P_{\theta_0})&\ge \frac{m\alpha}{2} \left\|\theta_0-\theta\right\|^2.
\end{align*}
Similarly, if $\psi$ has a Lipschitz gradient with constant $L$, we have that:
\begin{align*}
    \left\|\nabla \psi(\theta_0)-\nabla \psi(\theta)\right\|\le L\left\|\theta_0-\theta\right\|,
\end{align*}
which implies
\begin{align*}
\psi(\theta_0)-\psi(\theta)-\langle\theta_0-\theta,\nabla \psi(\theta)\rangle\le \frac{L}{2} \left\|\theta_0-\theta\right\|^2.
\end{align*} 
This in turn yields:
\begin{align*}
    KL(P_{\theta_0}\|P_{\theta})&\le \frac{L}{2} \left\|\theta_0-\theta\right\|^2.
\end{align*}
We obtain finally:
    \begin{align*}
        KL(P_{\theta_0}\|P_{\theta})&\le \frac{L}{2} \left\|\theta_0-\theta\right\|^2 = \frac{m\kappa}{2} \left\|\theta_0-\theta\right\|^2 = \frac{m \frac{\kappa}{\alpha} \alpha}{2} \left\|\theta_0-\theta\right\|^2 \leq  \frac{\kappa}{\alpha} \mathcal{D}_{\alpha}(P_{\theta}\|P_{\theta_0}).
    \end{align*}
\end{proof}

\begin{proof}[Proof of Theorem~\ref{thm:fisher_divergences_remainder}]
We prove each part separately:
\begin{enumerate}
    \item Squared Hellinger Distance:
        
First, we use the  definition of the Hellinger Distance
\begin{equation}
        H^2(P_\theta, P_{\theta_0}) = 2 - 2\int \sqrt{p_\theta p_{\theta_0}}.
\end{equation}
Using Taylor expansion of $\sqrt{p_\theta}$ we have
\begin{align}
        \sqrt{p_\theta(x)} &= \sqrt{p_{\theta_0}(x)} + (\theta - \theta_0)\left.\frac{\partial}{\partial\theta} \sqrt{p_\theta(x)}\right|_{\theta_0} \nonumber \\
        &\quad + \frac{1}{2}(\theta - \theta_0)^2\left.\frac{\partial^2}{\partial\theta^2} \sqrt{p_\theta(x)}\right|_{\theta_0} + R_3(x).
\end{align}
We calculate the first derivative
\begin{align}
        \left.\frac{\partial}{\partial\theta} \sqrt{p_\theta(x)}\right|_{\theta_0} 
        &= \left.\frac{1}{2\sqrt{p_\theta(x)}} \frac{\partial}{\partial\theta} p_\theta(x)\right|_{\theta_0} \\
        &= \left.\frac{1}{2\sqrt{p_\theta(x)}} p_\theta(x) \frac{\partial}{\partial\theta} \log p_\theta(x)\right|_{\theta_0} \\
        &= \frac{1}{2}\sqrt{p_{\theta_0}(x)} \left.\frac{\partial}{\partial\theta} \log p_\theta(x)\right|_{\theta_0}.
\end{align}
For the second derivative,
        \begin{align}
        \left.\frac{\partial^2}{\partial\theta^2} \sqrt{p_\theta(x)}\right|_{\theta_0} 
        &= \left.\frac{\partial}{\partial\theta} \left(\frac{1}{2\sqrt{p_\theta(x)}} \frac{\partial}{\partial\theta} p_\theta(x)\right)\right|_{\theta_0} \\
        &= \left.\frac{\partial}{\partial\theta} \left(\frac{1}{2}\sqrt{p_\theta(x)} \frac{\partial}{\partial\theta} \log p_\theta(x)\right)\right|_{\theta_0} \\
        &= \left.\frac{1}{4\sqrt{p_\theta(x)}} \frac{\partial}{\partial\theta} p_\theta(x) \frac{\partial}{\partial\theta} \log p_\theta(x)\right|_{\theta_0} \nonumber \\
        &\quad + \left.\frac{1}{2}\sqrt{p_\theta(x)} \frac{\partial^2}{\partial\theta^2} \log p_\theta(x)\right|_{\theta_0} \\
        &= \frac{1}{4}\sqrt{p_{\theta_0}(x)} \left[\left(\left.\frac{\partial}{\partial\theta} \log p_\theta(x)\right|_{\theta_0}\right)^2 
           + \left.\frac{\partial^2}{\partial\theta^2} \log p_\theta(x)\right|_{\theta_0}\right].
        \end{align}
Substituting into the Hellinger distance formula,
\begin{align}
        H^2(P_\theta, P_{\theta_0}) &= 2 - 2\int \sqrt{p_{\theta_0}(x)} \Bigg[1 + \frac{1}{2}(\theta - \theta_0)\left.\frac{\partial}{\partial\theta} \log p_\theta(x)\right|_{\theta_0} \nonumber \\
        &\quad + \frac{1}{8}(\theta - \theta_0)^2 \left(\left(\left.\frac{\partial}{\partial\theta} \log p_\theta(x)\right|_{\theta_0}\right)^2 
           + \left.\frac{\partial^2}{\partial\theta^2} \log p_\theta(x)\right|_{\theta_0}\right) \nonumber \\
        &\quad + \frac{R_3(x)}{\sqrt{p_{\theta_0}(x)}}\Bigg].
\end{align}
We simplify under Assumption~\ref{asm:differentiability}
        \begin{align}
        H^2(P_\theta, P_{\theta_0}) &= 2 - 2\Bigg[1 + 0 \nonumber \\
        &\quad + \frac{1}{8}(\theta - \theta_0)^2 \int \sqrt{p_{\theta_0}(x)} \left(\left.\frac{\partial}{\partial\theta} \log p_\theta(x)\right|_{\theta_0}\right)^2 dx \nonumber \\
        &\quad - \frac{1}{8}(\theta - \theta_0)^2 \int \sqrt{p_{\theta_0}(x)} \left.\frac{\partial^2}{\partial\theta^2} \log p_\theta(x)\right|_{\theta_0} dx \nonumber \\
        &\quad + \int \frac{R_3(x)}{\sqrt{p_{\theta_0}(x)}} dx\Bigg],
        \end{align}
        and thus
        \begin{align}
        H^2(P_\theta, P_{\theta_0}) &= \frac{1}{4}(\theta - \theta_0)^2 \int p_{\theta_0}(x) \left(\left.\frac{\partial}{\partial\theta} \log p_\theta(x)\right|_{\theta_0}\right)^2 dx \nonumber \\
        &\quad - \frac{1}{4}(\theta - \theta_0)^2 \int p_{\theta_0}(x) \left.\frac{\partial^2}{\partial\theta^2} \log p_\theta(x)\right|_{\theta_0} dx + R_H(\theta).
        \end{align}
Finally, we obtain
        \begin{equation}
        H^2(P_\theta, P_{\theta_0}) = \frac{1}{4}I(\theta_0)(\theta - \theta_0)^2 + R_H(\theta)
        \end{equation}
where:
        \begin{align}
        I(\theta_0) &= \int p_{\theta_0}(x) \left(\left.\frac{\partial}{\partial\theta} \log p_\theta(x)\right|_{\theta_0}\right)^2 dx \\
        R_H(\theta) &= \frac{1}{4}\int_{\theta_0}^\theta (\theta - t)^2 I'(t) dt.
        \end{align}
        
\item  KL Divergence:
        \begin{equation}
        KL(P_{\theta_0} \| P_\theta) = \int p_{\theta_0}(x) \log\frac{p_{\theta_0}(x)}{p_\theta(x)} dx.
        \end{equation}
Taylor expansion of $\log(p_\theta)$:
        \begin{align}
        \log p_\theta(x) &= \log p_{\theta_0}(x) + (\theta - \theta_0)\left.\frac{\partial}{\partial\theta} \log p_\theta(x)\right|_{\theta_0} \nonumber \\
        &\quad + \frac{1}{2}(\theta - \theta_0)^2\left.\frac{\partial^2}{\partial\theta^2} \log p_\theta(x)\right|_{\theta_0} + R_3(x).
        \end{align}
 Substituting into KL divergence formula:
        \begin{align}
        KL(P_{\theta_0} \| P_\theta) &= \int p_{\theta_0}(x) \Bigg[-(\theta - \theta_0)\left.\frac{\partial}{\partial\theta} \log p_\theta(x)\right|_{\theta_0} \nonumber \\
        &\quad - \frac{1}{2}(\theta - \theta_0)^2\left.\frac{\partial^2}{\partial\theta^2} \log p_\theta(x)\right|_{\theta_0} - R_3(x)\Bigg] dx.
        \end{align}
We simplify under Assumption~\ref{asm:differentiability}:
        \begin{align}
        KL(P_{\theta_0} \| P_\theta) &= -(\theta - \theta_0)\int p_{\theta_0}(x) \left.\frac{\partial}{\partial\theta} \log p_\theta(x)\right|_{\theta_0} dx \nonumber \\
        &\quad - \frac{1}{2}(\theta - \theta_0)^2 \int p_{\theta_0}(x) \left.\frac{\partial^2}{\partial\theta^2} \log p_\theta(x)\right|_{\theta_0} dx - \int p_{\theta_0}(x) R_3(x) dx,
        \end{align}
        thus
        \begin{align}
        KL(P_{\theta_0} \| P_\theta) &= 0 + \frac{1}{2}(\theta - \theta_0)^2 I(\theta_0) + R_K(\theta).
        \end{align}
Finally we obtain 
        \begin{equation}
        KL(P_{\theta_0} \| P_\theta) = \frac{1}{2}I(\theta_0)(\theta - \theta_0)^2 + R_K(\theta),
        \end{equation}
where:
        \begin{align}
        I(\theta_0) &= \int p_{\theta_0}(x) \left(\left.\frac{\partial}{\partial\theta} \log p_\theta(x)\right|_{\theta_0}\right)^2 dx \\
        R_K(\theta) &= \frac{1}{2}\int_{\theta_0}^\theta (\theta - t)^2 I'(t) dt.
        \end{align}
        
        \item  Rényi Divergence:
        \begin{equation}
        \mathcal{D}_{1/2}(P_\theta \| P_{\theta_0}) = -2\log(1 - H^2(P_\theta, P_{\theta_0})/2)
        \end{equation}
        
We use a Taylor expansion of $\log(1-x)$ around $x=0$:
        \begin{align}
        -2\log(1 - x) &= 2x + x^2 + O(x^3) 
        \end{align}
where $ x= H^2(P_\theta, P_{\theta_0})/2$. Then
        \begin{align}
        \mathcal{D}_{1/2}(P_\theta \| P_{\theta_0}) &= H^2(P_\theta, P_{\theta_0}) + \frac{1}{4}H^4(P_\theta, P_{\theta_0}) + O(H^6(P_\theta, P_{\theta_0})) \\
        &= \left[\frac{1}{4}I(\theta_0)(\theta - \theta_0)^2 + R_H(\theta)\right] \nonumber \\
        &\quad + \frac{1}{4}\left[\frac{1}{4}I(\theta_0)(\theta - \theta_0)^2 + R_H(\theta)\right]^2 + O((\theta - \theta_0)^6),
        \end{align}
        and
    \begin{align}
        \mathcal{D}_{1/2}(P_\theta \| P_{\theta_0}) &= \frac{1}{4}I(\theta_0)(\theta - \theta_0)^2 + R_H(\theta) \nonumber \\
        &\quad + \frac{1}{64}I(\theta_0)^2(\theta - \theta_0)^4 + O((\theta - \theta_0)^3).
        \end{align}
        Finally, we obtain:
        \begin{equation}
        \mathcal{D}_{1/2}(P_\theta \| P_{\theta_0}) = \frac{1}{4}I(\theta_0)(\theta - \theta_0)^2 + R_R(\theta)
        \end{equation}
where:
        \begin{align}
        R_R(\theta) &= \frac{1}{4}\int_{\theta_0}^\theta (\theta - t)^2 I'(t) dt + O((\theta - \theta_0)^4).
        \end{align}
This completes the proof of the theorem.
\end{enumerate} 
\end{proof}

\begin{proof}[Proof of Corollary~\ref{cor:fisher_divergences_remainder}]
The lower bounds follow directly from the non-negativity of the remainder terms and the lower bound on $I(\theta_0)$. For the upper bounds, we use:
\[\left|\int_{\theta_0}^\theta (\theta - t)^2 I'(t) dt\right| \leq L \int_{\theta_0}^\theta (\theta - t)^2 dt = \frac{L}{3}|\theta - \theta_0|^3.\]
For the Rényi divergence, we additionally bound the $O((\theta - \theta_0)^4)$ term using the bounds on $I(\theta)$ and $I'(\theta)$.
\end{proof}

\begin{proof}[Proof of Lemma~\ref{lem:one_dim}]
For the uniform prior on $\Theta$, we have
\begin{align}
    \pi(d\theta) = \frac{1}{2M}\mathbf{1}_{[-M,M]}(d\theta).
\end{align}
By the $L$-Lipschitz property, we have that
\begin{align}
    KL(P_{\theta_0}\|P_{\theta}) \leq \frac{L}{2}\|\theta-\theta_0\|^2 \leq 2LM^2,
\end{align}
for any $\theta,\theta_0 \in [-M,M]$. Let us define
\begin{align}
    F(\beta):=\mathbb{E}_{\theta\sim \pi_{-\beta}}\left[ KL(P_{\theta_0}\|P_{\theta})\right].
\end{align}
Then we have $F(\beta) \leq 2LM^2$ for all $\beta \geq 0$. Thus, it suffices to study the behavior of $F(\beta)$ for large $\beta$.

To that end, we first write the localized prior:
\begin{align}
    \pi_{-\beta}(d\theta) = \frac{e^{-\beta KL(P_{\theta_0}\|P_{\theta})}}{Z_\beta} \cdot \frac{1}{2M}\mathbf{1}_{[-M,M]}(d\theta).
\end{align}

Using the assumption in~\eqref{hyp-dim}, we obtain:
\begin{align}
    F(\beta) &\leq \frac{L}{2}\mathbb{E}_{\theta\sim \pi_{-\beta}}\left[(\theta-\theta_0)^2\right] \\
    &= \frac{L}{2}\frac{\int_{-M}^M (\theta-\theta_0)^2 e^{-\frac{\beta m}{2}(\theta-\theta_0)^2}d\theta}{\int_{-M}^M e^{-\frac{\beta L}{2}(\theta-\theta_0)^2}d\theta}.
\end{align}

Applying integration by parts yields:
\begin{align}
    F(\beta) &\leq \frac{L}{2\int_{-M}^M e^{-\frac{\beta L}{2}(\theta-\theta_0)^2}d\theta}\left[-\frac{1}{\beta m}(\theta-\theta_0)e^{-\frac{\beta m}{2}(\theta-\theta_0)^2}\right]_{-M}^{M} \nonumber\\
    &\quad + \frac{L}{2\beta m}\frac{\int_{-M}^Me^{-\frac{\beta m}{2}(\theta-\theta_0)^2}d\theta}{\int_{-M}^Me^{-\frac{\beta L}{2}(\theta-\theta_0)^2}d\theta}.
\end{align}
The integrals can be expressed using the cumulative distribution function (CDF) of the standard normal distribution $\Phi$ as follows:
\begin{align}
    \int_{-M}^Me^{-\frac{\beta m}{2}(\theta-\theta_0)^2}d\theta &= \sqrt{\frac{2\pi}{\beta m}}\left[\Phi\left(\sqrt{\beta m}(M-\theta_0)\right) - \Phi\left(\sqrt{\beta m}(-M-\theta_0)\right)\right], \\
    \int_{-M}^Me^{-\frac{\beta L}{2}(\theta-\theta_0)^2}d\theta &= \sqrt{\frac{2\pi}{\beta L}}\left[\Phi\left(\sqrt{\beta L}(M-\theta_0)\right) - \Phi\left(\sqrt{\beta L}(-M-\theta_0)\right)\right].
\end{align}

For $\beta \to \infty$ and $\theta_0 \in (-M,M)$, we obtain:
\begin{align}
    \int_{-M}^Me^{-\frac{\beta m}{2}(\theta-\theta_0)^2}d\theta &\sim \sqrt{\frac{2\pi}{\beta m}}, \\
    \int_{-M}^Me^{-\frac{\beta L}{2}(\theta-\theta_0)^2}d\theta &\sim \sqrt{\frac{2\pi}{\beta L}}.
\end{align}

Therefore:
\begin{align}
    F(\beta) &\lesssim o\left(\frac{1}{\beta}\right) + \frac{L}{2\beta m}\sqrt{\frac{L}{m}} \\
    &\leq \frac{\kappa^{3/2}}{2\beta}
\end{align}
for $\beta$ large enough. More precisely, there exists $\beta_0 > 0$ such that for all $\beta \geq \beta_0$:
\begin{align}
    F(\beta) \leq \frac{\kappa^{3/2}}{2\beta}.
\end{align}

Combining our bounds, we have:
\begin{align}
    F(\beta) \leq \begin{cases}
        2LM^2 & \text{for } 0 \leq \beta < \beta_0 \\
        \frac{\kappa^{3/2}}{2\beta} & \text{for } \beta \geq \beta_0
    \end{cases}
\end{align}

Now, for any $\beta \geq 0$:
\begin{align}
    \beta F(\beta) \leq \begin{cases}
        2LM^2\beta & \text{for } 0 \leq \beta < \beta_0 \\
        \frac{\kappa^{3/2}}{2} & \text{for } \beta \geq \beta_0
    \end{cases}
\end{align}

Notice that $\beta \mapsto 2LM^2\beta$ is strictly increasing, while $\beta \mapsto \frac{\kappa^{3/2}}{2\beta}F(\beta) \leq \frac{\kappa^{3/2}}{2}$ is bounded. Moreover, by the asymptotic behavior we showed, we can choose $\beta_0$ such that:
\begin{align}
    \beta_0 &= \min\left\{\beta > 0: F(\beta) \leq \frac{\kappa^{3/2}}{2\beta}\right\}.
\end{align}

With this choice of $\beta_0$, we have:
\begin{align}
    \sup_{\beta \geq 0} \beta F(\beta) &= \max\left\{\sup_{0 \leq \beta \leq \beta_0} \beta F(\beta), \sup_{\beta \geq \beta_0} \beta F(\beta)\right\} \\
    &= \max\left\{\beta_0 F(\beta_0), \frac{\kappa^{3/2}}{2}\right\} \\
    &= \frac{\kappa^{3/2}}{2},
\end{align}
where the second equality follows from the monotonicity of $\beta F(\beta)$ on $[0,\beta_0]$ and our choice of $\beta_0$, and the last equality holds by construction of $\beta_0$. This verifies Assumption~\ref{asm:asm2} with $\kappa_\pi = 1$ and $d_\pi = \frac{\kappa^{3/2}}{2}$.
\end{proof}

\begin{proof}[Proof of Corollary~\ref{thm:MLE}]
Fix $\varepsilon>0$, put $\mathcal{N}_\varepsilon = \mathcal{N}(\Theta,\varepsilon) $ for short and let $\theta_1,\dots,\theta_{\mathcal{N}_\varepsilon}\in\Theta$ be such that $\Theta \subset B(\theta_1,\varepsilon)\cup\dots\cup B(\theta_{\mathcal{N}_\varepsilon},\varepsilon)$. We define
$$ \pi = \frac{1}{\mathcal{N}_\varepsilon}\sum_{i=1}^{\mathcal{N}_\varepsilon} \delta_{\theta_i} $$
and $\hat{\rho} = \delta_{\theta_{\hat{i}}}$ where
$$ \hat{i} = \argmin_{1\leq i \leq \mathcal{N}_\varepsilon } \| \theta_i - \hat{\theta}_{n} \|,  $$
in other words, $\hat{\rho}$ is a Dirac mass on the closest parameter to the MLE among the $\theta_1,\dots,\theta_{\mathcal{N}_\varepsilon}$. Theorem~\ref{thm:main-thm} states
$$
     \mathbb{E}_{\mathcal{S}}\left[\left|\mathbb{E}_{\theta\sim\hat{\rho}}[\mathcal{D}_{\alpha}(P_{\theta}\|P_{\theta_0})]-\frac{\alpha}{n(1-\alpha)}\mathbb{E}_{\theta\sim\hat{\rho}}[r_n(\theta,\theta_0)]\right|\right] \le \frac{\mathcal{I}(\theta,\mathcal{S})}{n(1-\alpha)},
     $$
and thus, in our setting, this gives
$$
     \mathbb{E}_{\mathcal{S}}\left[\mathcal{D}_{\alpha}(P_{\theta_{\hat{i}}}\|P_{\theta_0}) -\frac{\alpha}{n(1-\alpha)}r_n(\theta_{\hat{i}},\theta_0) \right] \le \frac{\log \mathcal{N}_\varepsilon }{n(1-\alpha)}
     $$
and thus
$$
\mathbb{E}_{\mathcal{S}}\left[\mathcal{D}_{\alpha}(P_{\theta_{\hat{i}}}\|P_{\theta_0}) \right]
\leq \mathbb{E}_{\mathcal{S}}\left[\frac{\alpha}{n(1-\alpha)}r_n(\theta_{\hat{i}},\theta_0) \right] + \frac{\log \mathcal{N}_\varepsilon }{n(1-\alpha)}.
$$
By the Lipschitz assumption,
$$
\frac{\alpha}{n(1-\alpha)}r_n(\theta_{\hat{i}},\theta_0)
\leq
\frac{\alpha}{n(1-\alpha)}r_n(\hat{\theta}_n,\theta_0) + L \|\theta_{\hat{i}}-\hat{\theta}_n\|
\leq \frac{\alpha}{n(1-\alpha)}r_n(\hat{\theta}_n,\theta_0) + L\varepsilon.
$$
Moreover, by definition of $\hat{\theta}_n$, $r_n(\hat{\theta}_n,\theta_0) \leq r_n(\theta_0,\theta_0)=0$.
Thus,
\begin{align*}
\mathbb{E}_{\mathcal{S}}\left[\mathcal{D}_{\alpha}(P_{\theta_{\hat{i}}}\|P_{\theta_0}) \right]
& 
\leq \mathbb{E}_{\mathcal{S}}\left[\frac{\alpha}{n(1-\alpha)}r_n(\hat{\theta}_n,\theta_0) \right] + \frac{L \varepsilon \alpha}{(1-\alpha)} + \frac{\log \mathcal{N}_\varepsilon }{n(1-\alpha)}
\\
& \leq \frac{L \varepsilon \alpha}{(1-\alpha)}+ \frac{\log \mathcal{N}_\varepsilon }{n(1-\alpha)}.
\end{align*}
Finally, note that
$$
\|\hat{\theta}_n-\theta_0\|^2 \leq 2 \|\theta_{\hat{i}}-\theta_0\|^2 + 2 \|\theta_{\hat{i}}-\hat{\theta}_n\|^2 \leq 2 \|\theta_{\hat{i}}-\theta_0\|^2 + 2 \varepsilon^2
$$
and as, by assumption, $ \mathcal{D}_{\alpha}(P_{\theta_{\hat{i}}}\|P_{\theta_0}) \geq m \|\theta_{\hat{i}}-\theta_0\|^2 $, we obtain:
$$
\mathcal{D}_{\alpha}(P_{\theta_{\hat{i}}}\|P_{\theta_0}) \geq m \left( \|\hat{\theta}_n-\theta_0\|^2 - \varepsilon^2 \right).
$$
Putting everything together gives:
$$
\mathbb{E}_{\mathcal{S}}\left[\|\hat{\theta}_n-\theta_0\|^2 \right] \leq \varepsilon^2 + \frac{2 L \varepsilon \alpha}{m (1-\alpha)}+ \frac{2 \log \mathcal{N}_\varepsilon }{n m (1-\alpha)}.
$$
Take $\varepsilon=1/n$ to end the proof.
\end{proof}

\bibliographystyle{plainnat}

\begin{thebibliography}{47}
\providecommand{\natexlab}[1]{#1}
\providecommand{\url}[1]{\texttt{#1}}
\expandafter\ifx\csname urlstyle\endcsname\relax
  \providecommand{\doi}[1]{doi: #1}\else
  \providecommand{\doi}{doi: \begingroup \urlstyle{rm}\Url}\fi

\bibitem[Alquier(2024)]{alquier2024user}
Pierre Alquier.
\newblock User-{F}riendly {I}ntroduction to {PAC}-{B}ayes bounds.
\newblock \emph{Foundations and Trends{\textregistered} in Machine Learning},
  17\penalty0 (2):\penalty0 174--303, 2024.
\newblock \doi{10.1561/2200000100}.

\bibitem[Alquier and Ridgway(2020)]{alquier2019concentration}
Pierre Alquier and James Ridgway.
\newblock {C}oncentration of tempered posteriors and of their variational
  approximations.
\newblock \emph{Ann. Statist.}, 48\penalty0 (3):\penalty0 1475--1497, 2020.

\bibitem[Alquier et~al.(2016)Alquier, Ridgway, and
  Chopin]{alquier2016properties}
Pierre Alquier, James Ridgway, and Nicolas Chopin.
\newblock On the properties of variational approximations of {G}ibbs
  posteriors.
\newblock \emph{The Journal of Machine Learning Research}, 17\penalty0
  (1):\penalty0 8374--8414, 2016.

\bibitem[Banerjee and Montufar(2021)]{banerjee2021information}
Pradeep~K. Banerjee and Guido Montufar.
\newblock Information {C}omplexity and {G}eneralization {B}ounds.
\newblock In \emph{2021 IEEE International Symposium on Information Theory
  (ISIT)}, pages 1709--1714. IEEE, 2021.
\newblock \doi{10.1109/ISIT45174.2021.9517960}.

\bibitem[Baraud(2024)]{baraud2024robust}
Yannick Baraud.
\newblock From robust tests to {B}ayes-like posterior distributions.
\newblock \emph{Probability Theory and Related Fields}, 188\penalty0
  (1):\penalty0 159--234, 2024.

\bibitem[Baraud et~al.(2017)Baraud, Birg{\'e}, and Sart]{baraud2017new}
Yannick Baraud, Lucien Birg{\'e}, and Mathieu Sart.
\newblock A new method for estimation and model selection: $\rho$-estimation.
\newblock \emph{Inventiones mathematicae}, 207\penalty0 (2):\penalty0 425--517,
  2017.

\bibitem[Bhattacharya et~al.(2019)Bhattacharya, Pati, and
  Yang]{bhattacharya2016bayesian}
Anirban Bhattacharya, Debdeep Pati, and Yun Yang.
\newblock Bayesian {F}ractional {P}osteriors.
\newblock \emph{Ann. Statist.}, 47\penalty0 (1):\penalty0 39--66, 2019.

\bibitem[Bishop(2006)]{bishop2006pattern}
Christopher~M. Bishop.
\newblock Pattern recognition and machine learning.
\newblock \emph{Springer-Verlag, New York}, 2:\penalty0 1122--1128, 2006.

\bibitem[Castillo(2024)]{castillo2024bayesian}
Isma\"{e}l Castillo.
\newblock Bayesian nonparametric statistics, {St-Flour} lecture notes.
\newblock \emph{arXiv preprint arXiv:2402.16422}, 2024.

\bibitem[Catoni(2004)]{cat2004}
Olivier Catoni.
\newblock \emph{Statistical learning theory and stochastic optimization}.
\newblock Saint-Flour Summer School on Probability Theory 2001 (Jean Picard
  ed.), Lecture Notes in Mathematics. Springer, 2004.

\bibitem[Catoni(2007)]{Catoni2007}
Olivier Catoni.
\newblock \emph{PAC-Bayesian supervised classification: The thermodynamics of
  statistical learning}, volume~56 of \emph{Institute of Mathematical
  Statistics Lecture Notes – Monograph Series}.
\newblock Institute of Mathematical Statistics, Beachwood, OH, 2007.

\bibitem[Catoni(2012)]{catoni2012challenging}
Olivier Catoni.
\newblock Challenging the empirical mean and empirical variance: a deviation
  study.
\newblock In \emph{Annales de l'IHP Probabilit{\'e}s et statistiques},
  volume~48, pages 1148--1185, 2012.

\bibitem[Catoni and Giulini(2017)]{catoni2017dimension}
Olivier Catoni and Ilaria Giulini.
\newblock Dimension-free {PAC-B}ayesian bounds for matrices, vectors, and
  linear least squares regression.
\newblock \emph{arXiv preprint arXiv:1712.02747}, 2017.

\bibitem[Cherief-Abdellatif(2019)]{pmlr-v96-cherief-abdellatif19a}
Badr-Eddine Cherief-Abdellatif.
\newblock Consistency of {ELBO} maximization for model selection.
\newblock In Francisco Ruiz, Cheng Zhang, Dawen Liang, and Thang Bui, editors,
  \emph{Proceedings of The 1st Symposium on Advances in Approximate Bayesian
  Inference}, volume~96 of \emph{Proceedings of Machine Learning Research},
  pages 11--31. PMLR, 02 Dec 2019.

\bibitem[Csisz{\'a}r(1967)]{csiszar1967information}
Imre Csisz{\'a}r.
\newblock On information-type measure of difference of probability
  distributions and indirect observations.
\newblock \emph{Studia Sci. Math. Hungar.}, 2:\penalty0 299--318, 1967.

\bibitem[Donsker and Varadhan(1976)]{donsker1976asymptotic}
Monroe~D Donsker and SR~Srinivasa Varadhan.
\newblock Asymptotic evaluation of certain markov process expectations for
  large time—iii.
\newblock \emph{Communications on pure and applied Mathematics}, 29\penalty0
  (4):\penalty0 389--461, 1976.

\bibitem[Ghosal and Van~der Vaart(2017)]{ghosal2017fundamentals}
Subhashis Ghosal and Aad Van~der Vaart.
\newblock \emph{Fundamentals of nonparametric Bayesian inference}.
\newblock Cambridge University Press, 2017.

\bibitem[Ghosal and Van Der~Vaart(2007)]{ghosal2007convergence}
Subhashis Ghosal and Aad~W Van Der~Vaart.
\newblock Convergence rates of posterior distributions for non-i.i.d.
  observations.
\newblock \emph{Ann. Statist.}, 35\penalty0 (1):\penalty0 192--223, 2007.

\bibitem[Ghosal et~al.(2000)Ghosal, Ghosh, and Van
  Der~Vaart]{ghosal2000convergence}
Subhashis Ghosal, Jayanta~K. Ghosh, and Aad~W. Van Der~Vaart.
\newblock Convergence rates of posterior distributions.
\newblock \emph{Ann. Statist.}, pages 500--531, 2000.

\bibitem[Gr{\"u}nwald(2007)]{grunwald2007minimum}
Peter~D. Gr{\"u}nwald.
\newblock \emph{The minimum description length principle}.
\newblock MIT press, 2007.

\bibitem[Grünwald and Mehta(2018)]{grunwald2018fast}
Peter~D. Grünwald and Nishant~A. Mehta.
\newblock Fast rates for general unbounded loss functions: From {ERM} to
  generalized {B}ayes.
\newblock \emph{Journal of Machine Learning Research}, 19\penalty0
  (1):\penalty0 2317--2380, 2018.

\bibitem[Hellstr{\"o}m et~al.(2023)Hellstr{\"o}m, Durisi, Guedj, and
  Raginsky]{hellstrom2023generalization}
Fredrik Hellstr{\"o}m, Giuseppe Durisi, Benjamin Guedj, and Maxim Raginsky.
\newblock Generalization bounds: {P}erspectives from information theory and
  {PAC}-{B}ayes.
\newblock \emph{arXiv preprint arXiv:2309.04381}, 2023.

\bibitem[Le~Cam(2012)]{le2012asymptotic}
Lucien Le~Cam.
\newblock \emph{Asymptotic Methods in Statistical Decision Theory}.
\newblock Springer Science \& Business Media, 2012.

\bibitem[Liese and Vajda(1987)]{liese1987convex}
Friedrich Liese and Igor Vajda.
\newblock \emph{Convex Statistical Distances}.
\newblock Teubner-Texte zur Mathematik. Teubner, 1987.

\bibitem[Lugosi and Neu(2022)]{lugosi2022generalization}
G{\'a}bor Lugosi and Gergely Neu.
\newblock {G}eneralization {B}ounds via {C}onvex {A}nalysis.
\newblock In \emph{Proceedings of the 35th Conference on Learning Theory
  (COLT)}, pages 3524--3546. PMLR, 2022.

\bibitem[McAllester(1999{\natexlab{a}})]{mcallester1999pac}
David~A. McAllester.
\newblock {PAC-Bayesian} model averaging.
\newblock \emph{Machine Learning}, 37\penalty0 (3):\penalty0 355--363,
  1999{\natexlab{a}}.

\bibitem[McAllester(1999{\natexlab{b}})]{mcallester1999some}
David~A. McAllester.
\newblock Some {PAC-B}ayesian theorems.
\newblock In \emph{Proceedings of the eleventh annual conference on
  Computational learning theory}, pages 230--234, 1999{\natexlab{b}}.

\bibitem[Melbourne et~al.(2022)Melbourne, Talukdar, Bhaban, Madiman, and
  Salapaka]{melbourne2022differential}
James Melbourne, Saurav Talukdar, Shreyas Bhaban, Mokshay Madiman, and Murti~V
  Salapaka.
\newblock The differential entropy of mixtures: New bounds and applications.
\newblock \emph{IEEE Transactions on Information Theory}, 68\penalty0
  (4):\penalty0 2123--2146, 2022.

\bibitem[Negrea et~al.(2019)Negrea, Haghifam, Dziugaite, Khisti, and
  Roy]{negrea2019information}
Jeffrey Negrea, Mahdi Haghifam, Gintare~Karolina Dziugaite, Ashish Khisti, and
  Daniel~M Roy.
\newblock Information-theoretic generalization bounds for {SGLD} via
  data-dependent estimates.
\newblock \emph{Advances in Neural Information Processing Systems}, 32, 2019.

\bibitem[Neu et~al.(2021)Neu, Dziugaite, Haghifam, and Roy]{pmlr-v134-neu21a}
Gergely Neu, Gintare~Karolina Dziugaite, Mahdi Haghifam, and Daniel~M. Roy.
\newblock Information-{T}heoretic {G}eneralization {B}ounds for {S}tochastic
  {G}radient {D}escent.
\newblock In Mikhail Belkin and Samory Kpotufe, editors, \emph{Proceedings of
  Thirty Fourth Conference on Learning Theory}, volume 134 of \emph{Proceedings
  of Machine Learning Research}, pages 3526--3545. PMLR, 15--19 Aug 2021.
\newblock URL \url{https://proceedings.mlr.press/v134/neu21a.html}.

\bibitem[Nielsen and Nock(2011)]{nielsen2011r}
Frank Nielsen and Richard Nock.
\newblock On {R}\'enyi and {T}sallis {E}ntropies and {D}ivergences for
  {E}xponential {F}amilies.
\newblock \emph{arXiv preprint arXiv:1105.3259}, 2011.

\bibitem[Ohn and Lin(2024)]{ohn2024adaptive}
Ilsang Ohn and Lizhen Lin.
\newblock Adaptive variational {B}ayes: Optimality, computation and
  applications.
\newblock \emph{The Annals of Statistics}, 52\penalty0 (1):\penalty0 335--363,
  2024.

\bibitem[Polyanskiy and Wu(2025)]{Polyanskiy2025}
Yury Polyanskiy and Yihong Wu.
\newblock \emph{Information {T}heory: {F}rom {C}oding to {L}earning}.
\newblock Cambridge University Press, Cambridge, 2025.

\bibitem[Rissanen(1978)]{ris1978}
Jorma Rissanen.
\newblock Modeling by shortest data description.
\newblock \emph{Automatica}, 14\penalty0 (5):\penalty0 465--471, 1978.

\bibitem[Rousseau(2016)]{rousseau2016frequentist}
Judith Rousseau.
\newblock On the frequentist properties of {B}ayesian nonparametric methods.
\newblock \emph{Annual Review of Statistics and Its Application}, 3:\penalty0
  211--231, 2016.

\bibitem[Russo and Zou(2016)]{russo2016controlling}
Daniel Russo and James Zou.
\newblock Controlling bias in adaptive data analysis using information theory.
\newblock In \emph{Proceedings of the 19th International Conference on
  Artificial Intelligence and Statistics}, pages 1232--1240, 2016.

\bibitem[Shen and Wasserman(2001)]{shen2001rates}
Xiaotong Shen and Larry Wasserman.
\newblock Rates of convergence of posterior distributions.
\newblock \emph{Annals of Statistics}, pages 687--714, 2001.

\bibitem[Van~der Vaart(2000)]{van2000asymptotic}
Aad~W. Van~der Vaart.
\newblock \emph{Asymptotic Statistics}, volume~3.
\newblock Cambridge University Press, 2000.

\bibitem[Van Der~Vaart and van Zanten(2008)]{van2008rates}
Aad~W. Van Der~Vaart and J.~Harry van Zanten.
\newblock Rates of convergence for posterior distributions.
\newblock \emph{Ann. Statist.}, 36\penalty0 (1):\penalty0 28--59, 2008.

\bibitem[Van~Erven and Harremos(2014)]{van2014renyi}
Tim Van~Erven and Peter Harremos.
\newblock R{\'e}nyi divergence and kullback-leibler divergence.
\newblock \emph{IEEE Transactions on Information Theory}, 60\penalty0
  (7):\penalty0 3797--3820, 2014.

\bibitem[VanDerwerken and Schmidler(2013)]{vanderwerken2013parallel}
Douglas~N. VanDerwerken and Scott~C. Schmidler.
\newblock Parallel {M}arkov chain {M}onte {C}arlo.
\newblock \emph{arXiv preprint arXiv:1312.7479}, 2013.

\bibitem[Wong and Shen(1995)]{wong1995probability}
Wing~Hung Wong and Xiaotong Shen.
\newblock Probability inequalities for likelihood ratios and convergence rates
  of sieve {MLE}s.
\newblock \emph{Ann. Statist.}, 23\penalty0 (2):\penalty0 339--362, 1995.

\bibitem[Xu and Raginsky(2017)]{xu2017information}
Aolin Xu and Maxim Raginsky.
\newblock Information-theoretic analysis of generalization capability of
  learning algorithms.
\newblock In \emph{Advances in Neural Information Processing Systems}, pages
  2524--2533, 2017.

\bibitem[Yang et~al.(2020)Yang, Pati, and Bhattacharya]{yang2020alpha}
Yun Yang, Debdeep Pati, and Anirban Bhattacharya.
\newblock $\alpha$-variational inference with statistical guarantees.
\newblock \emph{Annals of Statistics}, 48\penalty0 (2):\penalty0 886--905,
  2020.

\bibitem[Zhang and Gao(2020)]{zhang2020convergence}
Fengshuo Zhang and Chao Gao.
\newblock Convergence rates of variational posterior distributions.
\newblock \emph{Ann. Statist.}, 48\penalty0 (4):\penalty0 2180--2207, 2020.

\bibitem[Zhang(2006{\natexlab{a}})]{zhang2006information}
Tong Zhang.
\newblock Information-theoretic upper and lower bounds for statistical
  estimation.
\newblock \emph{IEEE Trans. Inform. Theory}, 52\penalty0 (4):\penalty0
  1307--1321, 2006{\natexlab{a}}.
\newblock \doi{10.1109/TIT.2005.864439}.

\bibitem[Zhang(2006{\natexlab{b}})]{zhang2006ɛ}
Tong Zhang.
\newblock From $\varepsilon$-entropy to $kl$-entropy: Analysis of minimum
  information complexity density estimation.
\newblock \emph{The Annals of Statistics}, 34\penalty0 (5):\penalty0
  2180--2210, 2006{\natexlab{b}}.

\end{thebibliography}


\end{document}